\documentclass[10pt]{article} % For LaTeX2e
%\usepackage{tmlr}
% If accepted, instead use the following line for the camera-ready submission:
%\usepackage[accepted]{tmlr}
% To de-anonymize and remove mentions to TMLR (for example for posting to preprint servers), instead use the following:
\usepackage[preprint]{tmlr}

% Optional math commands from https://github.com/goodfeli/dlbook_notation.
%\input{math_commands.tex}

\usepackage{hyperref}
\usepackage{url}
% amsmath, amssymb, natbib, graphicx, url, algorithm2e : Auto load
\usepackage[T1]{fontenc}
\usepackage[utf8]{inputenc}
\usepackage{amsmath}
\usepackage{amssymb}
\usepackage{amsthm}
\usepackage{tikz}
\usepackage{graphicx}
\usepackage{url}
\usepackage{xcolor}
\usepackage{bbm}
\usepackage{xfrac}
\usepackage{mathtools}
\usepackage{eufrak}
\usepackage{bbold}
\usepackage{algorithm2e}
\usepackage{parskip}
%\RestyleAlgo{ruled}
\usepackage{aliascnt}
\usepackage{cleveref}

\DeclareMathOperator*{\argmax}{arg\,max}
\DeclareMathOperator*{\argmin}{arg\,min}

\Crefname{fact}{Fact}{Facts}
\Crefname{definition}{Definition}{Definitions}
\Crefname{lemma}{Lemma}{Lemmas}
\Crefname{corollary}{Corollary}{Corollarys}
\Crefname{example}{Example}{Examples}
\Crefname{remark}{Remark}{Remarks}
\Crefname{proposition}{Proposition}{Propositions}

\newtheorem{theorem}{Theorem}
\newtheorem{fact}{Fact}
\newtheorem{definition}{Definition}
\newtheorem{lemma}{Lemma}

\newtheorem{example}{Example}
\newtheorem{remark}{Remark}
\newtheorem{proposition}{Proposition}

% MACROS DIVERSES

\newcommand{\R}[0]{\mathbb{R}}

\newcommand{\N}[0]{\mathbb{N}}

\newcommand{\Q}[0]{\mathbb{Q}}

\newcommand{\Prob}[0]{\mathbb{P}}
\newcommand{\prob}[0]{\mathbb{p}}
\newcommand{\E}[0]{\mathbb{E}}
\newcommand{\Ind}[0]{\mathbbm{1}}

\newcommand{\eqdef}{\vcentcolon=}

\newcommand\mech[1]{\mathfrak{#1}}
\newcommand\set[1]{\mathcal{#1}}
\newcommand\vect[1]{\mathbf{#1}}

\newcommand\dom[1]{\operatorname{dom}\left({#1}\right)}
\newcommand\codom[1]{\operatorname{codom}\left({#1}\right)}

\newcommand\conv[1]{\overset{}{\underset{#1}{\longrightarrow}}}

\newcommand\tv[2]{\mathrm{TV}\left( {#1}, {#2} \right)}
\newcommand\kl[2]{\mathrm{KL}\left( \left. {#1}\right\Vert {#2} \right)}
\newcommand\renyi[3]{\text{D}_{#1}\left( \left. {#2}\right\Vert {#3} \right)}

\newcommand\ham[2]{d_\mathrm{ham}\left( {#1}, {#2} \right)}

\newcommand\intslice[2]{\left\{ {#1}, \dots,  {#2} \right\}}

\newcommand\p[1]{\left( {#1}\right)}
\newcommand\ceil[1]{\left\lceil {#1}\right\rceil}
\newcommand\card[1]{\# \left( {#1}\right)}

\title{On the Statistical Complexity of Estimation and Testing under Privacy Constraints}

% Authors must not appear in the submitted version. They should be hidden
% as long as the tmlr package is used without the [accepted] or [preprint] options.
% Non-anonymous submissions will be rejected without review.

\author{{%
 \name Clément Lalanne \email clement.lalanne@ens-lyon.fr \\
 \addr Univ. Lyon, ENS Lyon, UCBL, CNRS, Inria,  LIP, F-69342, Lyon Cedex 07, France
 \AND
 \name Aurélien Garivier \email aurelien.garivier@ens-lyon.fr \\
 \addr Univ. Lyon, ENS Lyon, UMPA UMR 5669, 46 allée d'Italie, F-69364, Lyon cedex 07%
 \AND
 \name Rémi Gribonval \email remi.gribonval@inria.fr\\
 \addr Univ. Lyon, ENS Lyon, UCBL, CNRS, Inria,  LIP, F-69342, Lyon Cedex 07, France%
}}

% The \author macro works with any number of authors. Use \AND 
% to separate the names and addresses of multiple authors.

  % Insert correct month for camera-ready version
 % Insert correct year for camera-ready version
 % Insert correct link to OpenReview for camera-ready version

\begin{document}

\maketitle

\begin{abstract}%
The challenge of producing accurate statistics while  respecting the privacy of the individuals in a sample is an important area of research. We study minimax lower bounds for classes of differentially private estimators. In particular, we show how to characterize the power of a statistical test under differential privacy in a \emph{plug-and-play} fashion by solving an appropriate transport problem. With specific coupling constructions, this observation allows us to derive Le Cam-type and Fano-type inequalities not only for regular definitions of differential privacy but also for  those based on Renyi divergence. We then proceed to illustrate our results on three simple, fully worked out examples. In particular, we show that the problem class has a huge importance on the provable degradation of utility due to privacy. In certain scenarios, we show that maintaining privacy results in a noticeable reduction in performance only when the level of privacy protection is very high. Conversely, for other problems, even a modest level of privacy protection can lead to a significant decrease in performance. Finally, we demonstrate that the DP-SGLD algorithm, a private convex solver, can be employed for maximum likelihood estimation with a high degree of confidence, as it provides near-optimal results with respect to both the size of the sample and the level of privacy protection. This algorithm is applicable to a broad range of parametric estimation procedures, including exponential families.
\end{abstract}

\section{Introduction}
%\RG{Un peu trop d'italique à mon goût}
The ever-increasing data collection on individuals and their sometimes hazardous use has led to numerous threats to privacy and serious concerns have emerged \citep{narayanan2006break,backstrom2007wherefore,fredrikson2015model,dinur2003revealing,homer2008resolving,loukides2010disclosure,narayanan2008robust,sweeney2000simple,wagner2018technical,sweeney2002k}. \emph{Differential privacy} \citep{dwork2014algorithmic} offers a future-proof solution to this problem by ensuring that individuals are protected from the result of an estimation procedure. It enables the inference of global statistics on a dataset while bounding each sample's influence and ensuring that the presence or absence of an individual in the dataset cannot be deduced from the result. In the last decade, research results have multiplied and nowadays, it is possible to build complex data pipelines under privacy constraints \citep{dwork2006calibrating,kairouz2015composition,dong2019gaussian,dong2020optimal,abadi2016deep}. Notably, differential privacy is now used in production by the US Census Bureau~\citep{abowd2018us}, Google~\citep{erlingsson2014rappor}, Apple~\citep{thakurta2017learning} and Microsoft~\citep{ding2017collecting} among others.
Differential privacy constrains the class of usable \emph{stochastic} functions defined on a dataset and as a result it degrades the utility of an estimation. To quantify the loss due to privacy, many \emph{utility bounds} are often used \citep{mcsherry2007mechanism, mcsherry2009privacy}. When the target of the estimation is not %a pointwise function 
the pointwise evaluation of a function on a given dataset but rather a hidden quantity defined ``at the scale of the population'', i.e., of the underlying data distribution,
statistical problems arise and an important topic is to quantify the utility of private estimators \citep{dwork2009differential,wasserman2010statistical,hall2011random,smith2011privacy,chaudhuri2011differentially,rubinstein2009learning,lalanne2022private,ryffel2022differential,karwa2017finite,du2020differentially,biswas2020coinpress,diakonikolas2015differentially,bun2019private,bun2019average,ben2022archimedes}.

Part of the privacy literature focuses on lower bounds \citep{asi2020near,asi2021nips,farhadi2022differentially,tao2022optimal}, i.e. bounds for which we know we "cannot" do better under certain hypotheses. Most of them are problem-specific and are to be understood as a worst case among all instances of a given problem. Some others, such as the ones based on the theory of  \emph{inverse sensitivity} \citep{asi2020near,asi2021nips}, only consider a "local" worst case and thus give tighter results. In contrast, in statistics, it is possible to measure the probabilities of occurrence of each instance of a problem. It is then natural to consider lower bounds that are probabilistic in nature (i.e. with a certain probability or in expectation). In particular, the classical \emph{minimax} theory looks at the best uniform risk of convergence of the estimators in order to estimate a quantity defined at the scale of a population and has a vast literature on lower bounds \citep{assouad1983deux,ibragimov2013statistical,bickel1988estimating,giraud2021introduction,devroye1987course,verdu1994generalizing,thomas2006elements,scarlett2019introductory,rigollet2015high,tsybakov2003introduction,gyorfi2002distribution}. Under privacy conditions, some work is still to be done on the minimax risk, both on lower bounds and on some matching upper bounds. Our work sits in this line of research. 

At the time of writing, to the best of our knowledge, two series of work have already met some success in this task. 
The first one is due to \citet{duchi2013local,duchi2013localpreprint} and looks at so-called \emph{local} privacy \citep{bebensee2019local,yang2020local,cormode2018privacy} which is a stronger definition of privacy than the ones that will be investigated in this article. To put it simply, it requires that \emph{each piece of data is anonymized before collection}, whereas \emph{global} privacy only requires \emph{the aggregation} to be private. It paved the way for a consequent line of work on estimation under local privacy or more generally under communication constraints \citep{AcharyaCST21a,AcharyaCMT21,AcharyaCST21b,AcharyaCFST21,BarnesCO20,BarnesHO20,BarnesHO19}.
The second one is due to \citet{acharya2021differentially, acharya2018differentially}. It adapts Le Cam's, Fano's and Assouard's methods to differential privacy. Their work will be our main point of comparison, since we believe that our work nicely complements theirs by providing a somewhat \emph{plug-and-play} framework, notably allowing to establish results for other types of privacy.

Indeed, the present work extends classical techniques like Le Cam and Fano for proving minimax lower bounds to the setting where the estimator must additionally be differentially private. While this was previously done by \citet{acharya2021differentially, acharya2018differentially} in the specific context of %(eps,delta)-DP
$(\epsilon,\delta)$-differential privacy, our contribution improves upon that work by getting quantitative bounds that are both tighter (e.g. by large constants in the exponent) and somewhat \emph{plug-and-play} %``plug-and-play''
 in that they apply to a variety of differential privacy variants, and notably to the setting of \emph{zero-concentrated differential privacy (zCDP)}, bridging the theoretical gap with very recent work in that field \citep{kamath2022improved}. 
As applications, we extend known rates for estimating the parameter of a Bernoulli distribution and for estimating the mean of a high-dimensional spherical Gaussian to the zCDP setting. We also show that for the problem of estimating the support of a uniform distribution over an unknown interval inside [0,1], the minimax risk is {\em uniformly degraded} by the privacy constraint. 
This constrasts with Bernoulli and Gaussian mean estimation, and is perhaps somewhat surprising, as one usually expects bounds for which there are parameter regimes under which the non-private minimax risk dominates. 

Lastly, we show a private minimax lower bound for maximum likelihood estimation, when the log likelihood is concave and smooth in the parameter being estimated and satisfies some additional non-degeneracy assumptions, and show that private SGLD qualitatively matches this lower bound in terms of sample size and privacy parameter dependence.

\subsection{The Minimax Risk and Private Estimators}

We start by defining the minimax risk.
Given $n \in \N_*$ and a Polish feature space $\set{X}$, $\set{X}^n$ may be viewed as a set of datasets containing $n$ elements from $\set{X}$. We consider a family of probability distributions $\p{\Prob_\theta}_{\theta \in \Theta}$ on $\set{X}^n$ where  $\Theta$ is equipped with a  semi-metric\footnote{i.e. that is positive, symmetric, that satisfies the triangular inequality and $d_\Theta(\theta, \theta)=0, \forall \theta \in \Theta$} $d_\Theta : \Theta^2 \rightarrow \R_+$. Often, for all $\theta \in \Theta$, $\Prob_\theta = \mathbb{p}_\theta^{\otimes n}$ where $\p{\mathbb{p}_\theta}_{\theta \in \Theta}$ is a family of probability distributions on $\set{X}$. This corresponds to the classical statistical setup where we observe $n$ i.i.d. random variables. The general setup allows capturing phenomena that are not i.i.d., for instance Markov processes.
Given an estimator $\hat{\theta} : \set{X}^n \rightarrow \Theta$ one might look at its uniform risk of estimation over $\Theta$ for a loss function $\Phi : [0, +\infty) \rightarrow [0, +\infty)$ that is non-decreasing and such that $\Phi(0)=0$ which is
\begin{equation*}
    \sup_{\theta \in \Theta} \int_{\set{X}^n} \Phi(d_\Theta(\hat{\theta}(\vect{X}), \theta)) d\Prob_\theta(\vect{X}) \;.
\end{equation*}
The best achievable uniform risk defines what is called the \emph{minimax} risk
\begin{equation}
    \mathfrak{M}_n \p{\p{\Prob_\theta}_{\theta \in \Theta}, d_\Theta, \Phi} \eqdef
    \inf_{\hat{\theta}} \sup_{\theta \in \Theta} \int_{\set{X}^n} \Phi(d_\Theta(\hat{\theta}(\vect{X}), \theta)) d\Prob_\theta(\vect{X}) \;.
\end{equation}
Here, the infimum over $\hat{\theta}$ is taken among all possible measurable functions of the samples.

When an estimator $\hat{\theta} = \hat{\theta}\p{\vect{X}}$ is to be made public or is to be shared with some untrustworthy agents and when the records of $\vect{X}$ are \emph{sensitive} (on a privacy standpoint), the disclosure of $\hat{\theta}$ may reveal a lot of information about the records of $\vect{X}$. Against this background, \emph{differential privacy} \citep{dwork2006calibrating} offers strong privacy guarantees. 
Given a (randomized) mechanism $\mech{M}$, $\dom{\mech{M}}$ refers to its domain (i.e. the set of admissible inputs) and $\codom{\mech{M}}$ refers to its codomain (i.e. the set of admissible outputs).
A differentially private mechanism $\mech{M} : \set{X}^n \rightarrow \codom{\mech{M}}$ ensures that limited information can be inferred on the records of $\vect{X} \in \set{X}^n$ from the sole observation of the output $\mech{M}(\vect{X})$. Given $\epsilon \in \R_{+*}$ and $\delta \in [0, 1)$, a randomized mechanism $\mech{M} : \set{X}^n \rightarrow \codom{\mech{M}}$ is $(\epsilon, \delta)$-differentially private (or $(\epsilon, \delta)$-DP) if for all $\vect{X}, \vect{Y} \in \set{X}^n$ and all measurable $S \subseteq \codom{\mech{M}}$ we have
\begin{equation*}
    \ham{\vect{X}}{\vect{Y}} \leq 1 \implies \Prob_{\mech{M}}\p{\mech{M}(\vect{X}) \in S} \leq e^\epsilon \Prob_{\mech{M}}\p{\mech{M}(\vect{Y}) \in S} + \delta \;.
\end{equation*}
Note that $\ham{\cdot}{\cdot}$ denotes the Hamming distance on $\set{X}^n$. There is however no consensus yet on the "correct" definition of privacy and a few other useful definitions have emerged.
For instance, more recent definitions of privacy are due to the need to sharply count the privacy of a composition of many Gaussian mechanisms. At first it was done implicitly via the so-called moment accountant method \citep{abadi2016deep} before being formalized under the name of Renyi differential privacy \citep{mironov2017renyi}. Nowadays, it seems that all these notions tend to converge towards the definition of zero concentrated differential privacy \citep{dwork2016concentrated,bun2016concentrated}. This is the one that we will investigate in this article in addition to the $(\epsilon, \delta)$-differential privacy, but the results and the proofs can easily be adapted to other definitions that are based on Renyi divergences.
Given $\rho \in (0, +\infty)$, a randomized mechanism $\mech{M} : \set{X}^n \rightarrow \codom{\mech{M}}$ is $\rho$-zero concentrated differentially private ($\rho$-zCDP) if for all $\vect{X}, \vect{Y} \in \set{X}^n$, %for all $\alpha \in (1, \infty)$,
\begin{equation*}
    \ham{\vect{X}}{\vect{Y}} \leq 1 \implies \renyi{\alpha}{\mech{M}(\vect{X})}{\mech{M}(\vect{Y})} \leq \rho \alpha, \forall 1 < \alpha < +\infty\;.
\end{equation*}
The Renyi divergence of level $\alpha$, $\renyi{\alpha}{\cdot}{\cdot}$, is properly defined in \Cref{sec:notations}. There exist links between $(\epsilon, \delta)$-DP and $\rho$-zCDP. For instance, \citet[Proposition 3]{bun2016concentrated} states that if a mechanism is $\rho$-zCDP, it is $(\epsilon, \delta)$-DP for a collection of $(\epsilon, \delta)$'s that depend on $\rho$. In particular, finding minimax lower bounds for $\rho$-zCDP mechanisms can be done by taking the supremum of lower bounds on $(\epsilon, \delta)$-DP mechanisms. But as we will see later, we can do better directly. Conversely,  if a mechanism is $(\epsilon, 0)$-DP, it is also $\epsilon^2/2$-zCDP (see \citet[Proposition 4]{bun2016concentrated}).

%This notion is rigorously defined in \Cref{sec:dpestimators} but for now it is sufficient to explain that we replace an estimator $\hat{\theta} : \set{X}^n \rightarrow \Theta$ by a 
In order to factorize the results, we will use the abstract formulation that a randomized mechanism $\mech{M} : \set{X}^n \rightarrow \Theta$ satisfies a certain condition $\mathcal{C}$ rather than fixing the class in which it belongs. We define the \emph{private minimax} risk as the best achievable uniform risk with mechanisms that satisfy the privacy condition $\mathcal{C}$ (using the set convention, $\mathcal{C}$ can alternatively refer to the set of estimators that satisfy this condition)
\begin{equation}
    \mathfrak{M}_n \p{\mathcal{C}, \p{\Prob_\theta}_{\theta \in \Theta}, d_\Theta, \Phi} \eqdef
    \inf_{\mech{M} \in \mathcal{C}} \sup_{\theta \in \Theta} \int_{\set{X}^n} \E_{\Prob_{\mech{M}}} \p{\Phi(d_\Theta(\mech{M}(\vect{X}), \theta))} d\Prob_\theta(\vect{X}) \;.
\end{equation}
Because of their similarities, we use $\mathfrak{M}_n$ to refer to both the non-private minimax risk and its private counterpart. With four arguments, $\mathfrak{M}_n$ should be understood as the private minimax risk and when equipped with three arguments it is simply the regular minimax risk.
%\RG{mettre $\mathcal{C}$ en quatrieme argument ?}

\subsection{Introducing example}
As a warmup %flavor of the following contributions, 
we discuss here the simplest possible example on which we can present the questions that this article addresses and the flavor of the developed approaches. 
Let $p_1 < p_2$ be two parameters in $(0, 1)$ and let $U_1, \dots, U_n$, $n$ be independent and identically distributed uniform random variables on $[0, 1]$. The random variables $Z_i := (X^{(1)}_i, X^{(2)}_i) \in \mathbb{R}^2$, $1 \leq i \leq n$, defined by 
\begin{equation*}
    (X^{(1)}_i, X^{(2)}_i) = (\Ind_{[0, p_1)}(U_i), \Ind_{[0, p_2)}(U_i))
\end{equation*}
are independent and identically distributed with marginal distributions Bernoulli $\mathcal{B}(p_1)$ and $\mathcal{B}(p_2)$. In the sequel we note $\vect{X}^{(j)} = (X^{(j)}_1, \dots, X^{(j)}_n)$, $j=1,2$, $\vect{U} = (U_1, \dots, U_n)$,  $S_1 \eqdef [0, (p_1+p_2)/2)$. %\{p \in [0, 1] | |p-p_1| \leq |p-p_2|\}$
and $S_2 \eqdef  [(p_1+p_2)/2, 1]$. 
Given any $(\epsilon, 0)$-DP mechanism $\mech{M} : [0, 1]^n \rightarrow [0, 1]$ (where 
$\epsilon > 0$) to estimate the Bernoulli parameter, the risk satisfies
\begin{equation}
\label{eq:maindvpt}
    \begin{aligned}
        \sup_{p \in [0, 1]} \E_{\vect{X} \sim \mathcal{B}(p)^{\otimes n}} &\p{(\mech{M}(\vect{X}) - p)^2} \\
        & \geq % \stackrel{\text{Packing}}{\geq} 
        \p{\E_{\vect{X} \sim \mathcal{B}(p_1)^{\otimes n}} \p{(\mech{M}(\vect{X}) - p_1)^2} + \E_{\vect{X} \sim \mathcal{B}(p_2)^{\otimes n}} \p{(\mech{M}(\vect{X}) - p_2)^2}}/2 \\ 
        &\stackrel{\text{Coupling}}{=}  \p{\E_{\vect{U}, \mech{M}} \p{(\mech{M}(\vect{X}^{(1)}) - p_1)^2} + \E_{\vect{U}, \mech{M}} \p{(\mech{M}(\vect{X}^{(2)}) - p_2)^2}}/2 \\
        &\stackrel{\text{Conditioning}}{=}  \E_{\vect{U}}\p{\E_{\mech{M}} \p{(\mech{M}(\vect{X}^{(1)}) - p_1)^2} + \E_{\mech{M}} \p{(\mech{M}(\vect{X}^{(2)}) - p_2)^2}}/2 \\ 
        & \geq %\stackrel{\text{Trimming}}{\geq} 
        \p{\tfrac{p_2 - p_1}{2}}^2 \E_{\vect{U}}\p{\Prob_{\mech{M}} \p{\mech{M}(\vect{X}^{(1)})  \in S_2} + \Prob_{\mech{M}} \p{\mech{M}(\vect{X}^{(2)}) \in S_1}}/2.
        \end{aligned}
        \end{equation}
        This is where the DP property yields a lower bound on the second factor as
        \begin{equation}
        \begin{aligned}
        %&\stackrel{\text{DP}}{\geq} \frac{1}{2} %\p{\frac{|p_2 - p_1|}{2}}^2
        \E_{\vect{U}} & \Big(e^{- \epsilon \ham{\vect{X}^{(1)}}{\vect{X}^{(2)}}}\Prob_{\mech{M}} \p{\mech{M}(\vect{X}^{(2)})  \in S_2}  + \Prob_{\mech{M}}  \p{\mech{M}(\vect{X}^{(2)}) \in S_1} \Big) \\ 
        &\stackrel{\ham{\cdot}{\cdot} \geq 0}{\geq} %\frac{1}{2} \p{\frac{|p_2 - p_1|}{2}}^2
        \E_{\vect{U}}\Big(e^{- \epsilon \ham{\vect{X}^{(1)}}{\vect{X}^{(2)}}} \Big(\Prob_{\mech{M}} \p{\mech{M}(\vect{X}^{(2)})  \in S_2}  +  \Prob_{\mech{M}} \p{\mech{M}(\vect{X}^{(2)}) \in S_1} \Big) \Big) \\
        &\stackrel{\text{}}{=} 
        %\frac{1}{2} %\p{\frac{|p_2 - p_1|}{2}}^2
        \E_{\vect{U}}\Big(e^{- \epsilon \ham{\vect{X}^{(1)}}{\vect{X}^{(2)}}} \Big) \stackrel{\text{Jensen}}{\geq} 
        %\frac{1}{2} \p{\frac{|p_2 - p_1|}{2}}^2 
        e^{- n \epsilon |p_2 - p_1|} \;,
    \end{aligned}
\end{equation}
which overall yields the lower bound $\tfrac{\p{p_2 - p_1}^2}{8} e^{- n \epsilon |p_2 - p_1|}$. A good lower bound on the minimax risk is then provided by optimizing over $p_1$ and $p_2$. 
For instance, when $n \geq \frac{2}{\epsilon}$, $p_1 = \frac{1}{2}$ and $p_2 = \frac{1}{2} + \frac{1}{n \epsilon}$ leads to
\begin{equation*}
    \sup_{p \in [0, 1]} \E_{\vect{X} \sim \mathcal{B}(p)^{\otimes n}} \p{(\mech{M}(\vect{X}) - p)^2} \geq \frac{1}{8} \frac{1}{(n\epsilon)^2} \;.
\end{equation*}

The idea behind the first inequality in \eqref{eq:maindvpt} is classical in the minimax literature and is %further developed
recalled
in \Cref{sec:GeneralMinimaxTechnique} using the notion of \emph{packing}. 
The \emph{coupling construction} can be generalized and tailored to other settings and has a critical impact on the deduced lower bounds, as we present in %further discussed in
\Cref{sec:coupling}. The minoration %of the conditional expectation at the line "DP" in \eqref{eq:maindvpt}
involving differential privacy is a special case of the techniques that we formalize under the notion of \emph{admissible similarity functions} in \Cref{sec:kpreduction}, which are adapted to various types of privacy constraints.
Practical implications of such generalizations are described in  \Cref{sec:contributions}.

\subsection{From Minimax Lower Bounds to Hypothesis Testing}\label[subsection]{sec:GeneralMinimaxTechnique}

A classical technique (see \citet{duchi2013local}) for finding lower bounds on $ \mathfrak{M}_n \p{\p{\Prob_\theta}_{\theta \in \Theta}, d_\Theta, \Phi}$ is to replace the parameter set $\Theta$ by a much ``simpler'' set $\Theta' \subseteq \Theta$ and to use the trivial lower bound
\begin{equation*}
    \mathfrak{M}_n \p{\p{\Prob_\theta}_{\theta \in \Theta}, d_\Theta, \Phi}
    \geq \inf_{\hat{\theta}} \sup_{\theta \in \Theta'} \int_{\set{X}^n} \Phi(d_\Theta(\hat{\theta}(\vect{X}), \theta)) d\Prob_\theta(\vect{X})\;.
\end{equation*}
Usually $\Theta'$ is chosen as an $\Omega$-\emph{packing} of $\Theta$, for some real number $\Omega>0$: it is a countable family $\Theta' := \{\theta_i, i \in \N_*\}$ $\p{\theta_i}_{i \in \N_*}$ (and most of the time, including in this article, it is taken to be finite) such that: a)  $\theta_i \in \Theta$ for all $i$; b)  $i \neq j \implies d_\Theta(\theta_i, \theta_j) \geq 2\Omega$; and c) there is a well-defined function $\Psi_{\Theta'}$ satisfying
\begin{equation*}
\Psi_{\Theta'}(\theta) \in
    %\Psi_{\p{\theta_i}_{i \in \N_*}}(\theta) :=
    \argmin_{i \geq 1} d_\Theta(\theta_i, \theta)
\end{equation*}
%is well-defined 
for each $\theta \in \Theta$.
%\RG{il y a forcément des cas ou le minimiseur n'est pas unique ??? Par "well-defined" veut-on dire "non-empty", et définir $\Psi_{\Theta'}$ au sens ensembliste (et définir $\Psi_{\Theta'}(\theta) \neq i$ plutôt comme $i \notin \Psi_{\Theta'}(\theta)$ quand ce n'est pas un singleton?) ou alors dire qu'on fait un choix arbitraire en cas de non unicité, pour garder une fonction car c'est plus simple pour la suite ?}
Under such hypotheses, any estimator $\hat{\theta}$ satisfies \citep{duchi2013local}
\begin{equation}
\label{eq:ClassicalLowerBound}
\begin{aligned}
    \sup_{\theta \in \Theta'} \int_{\set{X}^n} \Phi(d_\Theta(\hat{\theta}&(\vect{X}), \theta)) d\Prob_\theta(\vect{X}) 
    \geq \Phi(\Omega) \sup_{i \in \intslice{1}{\card{\Theta'}}} \Prob_{\mathbf{X} \sim \Prob_{\theta_i}} \p{ %\Psi_{\p{\theta_i}_{i \in \N_*}}
    \Psi_{\Theta'}
    \p{\hat{\theta}(\vect{X})} \neq i}\;.
\end{aligned}
\end{equation}
The mapping
$\hat{\Psi}:= \Psi_{\Theta'}
\circ \hat{\theta} : \set{X}^n \rightarrow \intslice{1}{\card{\Theta'}}$ may be viewed as a \emph{test function} (that selects the model number) and thus 
\begin{equation}\label{eq:MasterLowerBoundNonDP}
\begin{aligned}
    \mathfrak{M}_n (\p{\Prob_\theta}_{\theta \in \Theta}, & d_\Theta, \Phi) 
    \geq \Phi(\Omega) \inf_{\Psi : \set{X}^n  \rightarrow\intslice{1}{\card{\Theta'}}} \sup_{i \in \intslice{1}{\card{\Theta'}}} \Prob_{\mathbf{X} \sim \Prob_{\theta_i}} \p{ \Psi \p{\vect{X}} \neq i}\;.
\end{aligned}
\end{equation}
Finding minimax lower bounds is thus done by finding a suitable $\Omega$-packing of the parameter space and then by providing lower bounds on 
\begin{equation}
\begin{aligned}
    \inf_{\Psi : \set{X}^n  \rightarrow\intslice{1}{\card{\Theta'}}} \sup_{i \in \intslice{1}{\card{\Theta'}}} \Prob_{\mathbf{X} \sim \Prob_{\theta_i}} \p{ \Psi \p{\vect{X}} \neq i}\;.
\end{aligned}
\end{equation}
Two powerful tools to find such lower bounds come from information theory:  Le Cam's lemma (see \Cref{fact:lecamslemma}) can be used when $\Theta'$ only contains two elements, while Fano's lemma (see \Cref{fact:fanoslemma}) is applicable when $\Theta'$  contains $N \geq 2$ elements.
\begin{fact}[{Neyman-Pearson \& Le Cam's lemma \citep[Lemma 5.3]{rigollet2015high}}]
\label[fact]{fact:lecamslemma}
Let $\Prob_1,\\ \Prob_2$ be two probability distributions on a measure space $\set{E}$, then
\begin{equation}
     \begin{aligned}
        \inf_{\Psi : \set{E}  \rightarrow \{1, 2\}} \max_{i \in \{1, 2\}} \Prob_{\mathbf{X} \sim \Prob_i} \p{\Psi \p{\vect{X}} \neq i}  
          &\geq \frac{1}{2} \inf_{\Psi : \set{E}  \rightarrow \{1, 2\}} \sum_{i=1}^2 \Prob_{\mathbf{X} \sim \Prob_i} \p{\Psi \p{\vect{X}} \neq i} \\
          &= \frac{1}{2} \p{1 - \tv{\Prob_1}{\Prob_2}} \;.
     \end{aligned}
     \label{eq:ClassicalLeCam}
 \end{equation}
 The Total Variation (TV) is rigorously defined in \Cref{sec:notations}.
\end{fact}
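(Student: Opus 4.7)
The plan is to unpack each side of~\eqref{eq:ClassicalLeCam} into a statement about measurable subsets of $\set{E}$, and then reduce the remaining infimum to the variational characterisation of total variation.

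I would handle the first inequality first. For any two nonnegative reals $a,b$, the elementary bound $\max(a,b) \geq \tfrac{1}{2}(a+b)$ applies. Instantiating this pointwise, for every fixed test $\Psi$, with $a = \Prob_{\vect{X} \sim \Prob_1}\p{\Psi(\vect{X}) \neq 1}$ and $b = \Prob_{\vect{X} \sim \Prob_2}\p{\Psi(\vect{X}) \neq 2}$ yields the first inequality; taking the infimum over $\Psi$ on both sides then preserves it.

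To prove the equality, I would identify every measurable test $\Psi : \set{E} \to \{1,2\}$ with the measurable set $A \eqdef \Psi^{-1}(\{1\})$. With this identification, the sum of the two error probabilities rewrites as
\[
\sum_{i=1}^{2} \Prob_{\vect{X} \sim \Prob_i}\p{\Psi(\vect{X}) \neq i} = \Prob_1(\set{E} \setminus A) + \Prob_2(A) = 1 - \p{\Prob_1(A) - \Prob_2(A)}.
\]
Taking the infimum over measurable $\Psi$ is therefore equivalent to taking the supremum of $\Prob_1(A) - \Prob_2(A)$ over measurable $A \subseteq \set{E}$. Since swapping $A$ with its complement flips the sign of this difference, the supremum equals $\sup_A \left|\Prob_1(A) - \Prob_2(A)\right|$, which is $\tv{\Prob_1}{\Prob_2}$ by the variational definition recalled in \Cref{sec:notations}.

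The only delicate points I would need to verify are that the correspondence between measurable tests and measurable sets is exact (so that the infimum on the test side and the supremum on the set side range over matching classes), and that my computation is consistent with the particular normalisation of $\mathrm{TV}$ used in \Cref{sec:notations}, since the standard conventions differ by a factor of two. I do not expect a genuine obstacle. If the framework also permitted randomised tests, the linearity of $\Prob_1(A) - \Prob_2(A)$ in the randomised indicator would show that deterministic tests already realise the infimum, leaving the conclusion unchanged.
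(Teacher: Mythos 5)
Your argument is correct and is essentially the standard proof (the one given in the cited Rigollet Lemma~5.3): the paper itself treats this as a \emph{Fact} borrowed from the literature and does not reprove it, so the comparison is against that source. Your two steps — the elementary bound $\max(a,b)\geq \tfrac{1}{2}(a+b)$ for the inequality, and then rewriting $\sum_i \Prob_i(\Psi \neq i) = 1 - (\Prob_1(A) - \Prob_2(A))$ with $A = \Psi^{-1}(\{1\})$ so that the infimum over tests dualises into the supremum defining $\tv{\Prob_1}{\Prob_2}$ — are exactly how this lemma is usually established. Your normalisation check is also resolved in your favour: the paper's \Cref{sec:notations} defines $\tv{\Prob_1}{\Prob_2} = \sup_S \bigl(\Prob_1(S) - \Prob_2(S)\bigr)$ without the extra factor of $\tfrac12$, and since replacing $A$ by its complement flips the sign, this coincides with $\sup_A |\Prob_1(A)-\Prob_2(A)|$, matching your computation. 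Your aside on randomised tests is harmless but unnecessary here, since the infimum in the statement already ranges only over deterministic $\Psi : \set{E} \to \{1,2\}$.
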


%Let us highlight that along this article, the term "Fact" refers to results directly borrowed from existing literature, independently of the supposed technicality of the result and/or of its proof. It is simply used to easily emphasize whether a result is a contribution or not.

\begin{fact}[{Fano's lemma \citep[Theorem 3.1]{giraud2021introduction}}]
\label[fact]{fact:fanoslemma}
 Let $\p{\Prob_i}_{i \in \intslice{1}{N}}$ be a family of probability distributions on a measure space $\set{E}$. For any probability distribution $\mathbb{Q}$ on $\set{E}$ such that $\Prob_i \ll \mathbb{Q}$ for all $i$, and for any test function $\Psi : \set{X}^n  \rightarrow \intslice{1}{N}$, 
 \begin{equation}
     \begin{aligned}
         \max_{i \in \intslice{1}{N}} \Prob_{\mathbf{X} \sim \Prob_i} \p{\Psi \p{\vect{X}} \neq i}  
          &\geq \frac{1}{N} \sum_{i=1}^N \Prob_{\mathbf{X} \sim \Prob_i} \p{\Psi \p{\vect{X}} \neq i} \\
          &\geq 1 - \frac{1 + \frac{1}{N} \sum_{i=1}^N \kl{\Prob_i}{\mathbb{Q}}}{\ln(N)} \;.
     \end{aligned}
     \label{eq:ClassicalFano}
 \end{equation}
 The KL divergence and the absolute continuity ($\ll$) are rigorously defined in \Cref{sec:notations}.
 Often $\mathbb{Q}$ is set to $\frac{1}{N} \sum_{i=1}^N \Prob_i$.
\end{fact}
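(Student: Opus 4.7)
The first inequality is immediate: the maximum of a finite collection of nonnegative numbers is at least their average. So the whole task is to lower bound the average error probability $P_e \eqdef \frac{1}{N}\sum_{i=1}^N \Prob_{\Prob_i}(\Psi(\vect{X}) \neq i)$ by $1 - \frac{1 + \frac{1}{N}\sum_i \kl{\Prob_i}{\mathbb{Q}}}{\ln N}$.

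The plan is to reduce to the classical information-theoretic Fano inequality by putting a uniform prior on the index set. Introduce a random index $I$ uniform on $\intslice{1}{N}$, let $\vect{X}$ conditional on $I=i$ be distributed as $\Prob_i$, and let $\hat{I} \eqdef \Psi(\vect{X})$. Then $P_e = \Prob(\hat{I} \neq I)$ by the tower property. The textbook Fano inequality applied to $(I,\hat{I})$ gives
\begin{equation*}
    H(I \mid \hat{I}) \leq h(P_e) + P_e \ln(N-1),
\end{equation*}
where $h$ is the binary entropy (with $\ln$). Since $H(I) = \ln N$ and $H(I \mid \hat{I}) = \ln N - I(I;\hat{I})$, and since the data processing inequality yields $I(I;\hat{I}) \leq I(I;\vect{X})$, we get
\begin{equation*}
    \ln N - I(I;\vect{X}) \leq h(P_e) + P_e \ln(N-1) \leq \ln 2 + P_e \ln N,
\end{equation*}
and rearranging produces $P_e \geq 1 - \frac{I(I;\vect{X}) + \ln 2}{\ln N}$.

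The remaining step, where I will take a little care, is to swap the mutual information for the average KL to an arbitrary reference $\mathbb{Q}$. Directly from the definition of mutual information with uniform prior, $I(I;\vect{X}) = \frac{1}{N}\sum_{i=1}^N \kl{\Prob_i}{\bar{\Prob}}$ with $\bar{\Prob} \eqdef \frac{1}{N}\sum_j \Prob_j$. A short computation (adding and subtracting $\ln \frac{d\bar{\Prob}}{d\mathbb{Q}}$ inside each integral, or equivalently using the ``compensation'' identity $\frac{1}{N}\sum_i \kl{\Prob_i}{\mathbb{Q}} = \frac{1}{N}\sum_i \kl{\Prob_i}{\bar{\Prob}} + \kl{\bar{\Prob}}{\mathbb{Q}}$) shows that $I(I;\vect{X}) \leq \frac{1}{N}\sum_i \kl{\Prob_i}{\mathbb{Q}}$ for every $\mathbb{Q}$ with $\Prob_i \ll \mathbb{Q}$; this is the variational characterization $I(I;\vect{X}) = \inf_{\mathbb{Q}} \frac{1}{N}\sum_i \kl{\Prob_i}{\mathbb{Q}}$. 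Plugging this bound and the trivial estimate $\ln 2 \leq 1$ into the preceding display yields exactly the announced inequality.

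The only non-routine point is ensuring the chain $H(I\mid\hat{I}) \geq H(I) - I(I;\vect{X})$ together with the variational bound for $I(I;\vect{X})$; both are standard but deserve to be invoked carefully because the definitions of Fano's inequality and of $h(\cdot)$ differ across references (base $e$ versus base $2$). Everything else is bookkeeping around the uniform-prior reduction.
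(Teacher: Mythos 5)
Your argument is correct, and it is the standard reduction (uniform prior on the index, textbook Fano, data processing, and the variational/compensation identity for replacing $I(I;\vect{X})$ by the average KL to an arbitrary reference $\mathbb{Q}$, followed by $h(P_e)\le\ln 2\le 1$). The paper states this as a \emph{Fact} cited from \citet[Theorem~3.1]{giraud2021introduction} without giving a proof, so there is no in-paper argument to compare against; your derivation fills that gap cleanly and matches the cited result.
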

With the same reasoning
used \citep{duchi2013local} to establish~\eqref{eq:ClassicalLowerBound}, with $\Theta' = \p{\theta_i}_{i \in \intslice{1}{\card{\Theta'}}}$ an $\Omega$-packing of $\Theta$, we can lower-bound the \emph{private} minimax risk:
\begin{equation}\label{eq:MasterLowerBoundWithDP}
\begin{aligned}
    \mathfrak{M}_n (\mathcal{C}, &\p{\Prob_\theta}_{\theta \in \Theta}, d_\Theta, \Phi) \\ 
    &\geq \Phi(\Omega) \inf_{\mech{M} \in \mathcal{C}} \inf_{\Psi : %\set{X}^n 
    \codom{\mech{M}}
    \rightarrow\intslice{1}{\card{\Theta'}}} \sup_{i \in \intslice{1}{\card{\Theta'}}} \Prob_{\mathbf{X} \sim \Prob_{\theta_i}, \mech{M}} \p{ \Psi \p{\mech{M}(\vect{X})} \neq i}\;.
\end{aligned}
\end{equation}
Consequently, finding private minimax lower bounds is done analogously to the non-private setting by finding an appropriate $\Omega$-packing and a lower bound on 
\begin{equation}
\label{eq:dpdistribtest}
    \begin{aligned}
        \inf_{\Psi : %\set{X}^n 
        \codom{\mech{M}}
        \rightarrow\intslice{1}{\card{\Theta'}}} \sup_{i \in \intslice{1}{\card{\Theta'}}} \Prob_{\mathbf{X} \sim \Prob_{\theta_i}, \mech{M}} \p{ \Psi \p{\mech{M}(\vect{X})} \neq i}
    \end{aligned}
\end{equation}
that is independent on the mechanism $\mech{M}$ but only depends on the privacy condition $\mathcal{C}$.

\subsection{Contributions}\label{sec:contributions}
The main contribution of this work, presented in \Cref{sec:kpreduction}, is to propose a  generic framework for the derivation of lower bounds on the minimax risk under various privacy conditions. 
Here and in the sequel, the symbols $o(\cdot)$, $O(\cdot)$, $\Theta(\cdot)$ and $\Omega(\cdot)$ are used without ambiguity as classical comparison operators for sequences, as recalled in \Cref{sec:notations}.
Technically, the techniques of Le Cam and Fano are extended to the private context, reducing the distributional test problem~\eqref{eq:dpdistribtest} % between $N$ distributions $\Prob_1, \dots, \Prob_N$  
to a Kantorovich problem~\citep{santambrogio2015optimal,peyre2019computational} of the form  
\begin{equation}
\label{label:eqkp}
\begin{aligned}
    \sup_{\mathbb{Q} \in \Pi \p{\Prob_1, \dots, \Prob_N}} \int_{\p{\set{X}^n}^N} s_{\mathcal{C}} \p{\vect{X}_1, \dots, \vect{X}_N} d\mathbb{Q}  \p{\vect{X}_1, \dots, \vect{X}_N}\;.
\end{aligned}
\end{equation}
Here,  $\Pi \p{\Prob_1, \dots, \Prob_N}$ is the set of \emph{couplings} between the considered distributions and $s_{\mathcal{C}}$ is an \emph{admissible similarity function} depending on the nature of the constraint $\mathcal{C}$ and the number of hypotheses (\Cref{th:metatheoremdp} and \Cref{th:metatheoremcdp}). For instance, regarding $(\epsilon, \delta)$-differential privacy, similarity functions are obtained by comparing datasets to a common \emph{anchor}. 
%\RG{un peu cryptique à ce stade de lecture mais c'est sans doute OK}. 
This result is summarized in \Cref{th:metatheorem1}.

Unlike the prior work of \citet{acharya2021differentially}, the proposed framework allows us to consider joint couplings across all instances rather than just pairwise couplings. Additionally, the level of generality of our proofs leaves room for subsequent work to build upon this framework. 

The general idea behind the proofs is as follows. In classical Fano's, one considers the decoding error probability: on average over a family of instances, what is the probability that the estimator, given samples from a given instance, fails to identify that the samples came from that instance. In place of Fano's inequality, the present work lower bounds this by noting that, given datasets $\vect{X}_1,...,\vect{X}_N$ coming from each instance, as well as an "anchor" dataset $\Lambda$ (or alternatively an anchor distribution), differential privacy implies that the probability that the estimator decides $\vect{X}_i$ comes from instance $i$ cannot differ by much from the probability it decides $\Lambda$ comes from instance $i$, provided $\Lambda$ and $\vect{X}_i$ are similar. The decoding error probability can thus be lower bounded in terms of the maximum distance between $\Lambda$ and any of $\vect{X}_1,...,\vect{X}_N$, averaged over the randomness of $\vect{X}_1,...,\vect{X}_N$, where there is freedom in choosing how to couple this randomness.

\Cref{sec:coupling} includes various coupling constructions yielding quantitative lower bounds for the Kantorovich formulation \eqref{label:eqkp}. These constructions only depend on the number of hypotheses $N$, the sample size $n$, the privacy parameters $\epsilon, \delta, \rho$, and information theoretic quantities such as the pairwise total variations or KL divergences between the distributions. 

Those results will be presented in \Cref{sec:kpreduction} and in \Cref{sec:coupling}.
We showcase now useful consequences, starting with the case  $N=2$: similarly to~\citet{acharya2021differentially}, we extend Le Cam's lemma to the $(\epsilon, \delta)$-differentially private setting:
\begin{theorem}[Le Cam for $(\epsilon, \delta)$-DP]
\label[theorem]{th:lecamdp}
If a randomized mechanism $\mech{M}$ satisfies $(\epsilon, \delta)$-DP, then for any test function $\Psi : \codom{\mech{M}} \rightarrow \{1, 2\}$ and any probability distributions $\Prob_1$ and $\Prob_2$ on $\set{X}^n$  we have
\begin{equation*}
\begin{aligned}
    \max_{i \in \{1, 2\}} \Prob_{\mathbf{X} \sim \Prob_i, \mech{M}} \p{\Psi \p{\mech{M}\p{\vect{X}}} \neq i} 
    \geq 
    \frac{1}{2} \max \Big\{ &1-\tv{\Prob_1}{\Prob_2}  \;, \\
    &1-\p{1 - e^{-n\epsilon} + 2 n e^{-\epsilon} \delta}\tv{\Prob_1}{\Prob_2} \Big\} \;.
\end{aligned}
\end{equation*}
Furthermore, when $\Prob_1 = \prob_1^{\otimes n}$ and $\Prob_2 = \prob_2^{\otimes n}$ are \emph{product} distributions,
\begin{equation*}
\begin{aligned}
    \max_{i \in \{1, 2\}} \Prob_{\mathbf{X} \sim \Prob_i, \mech{M}} &\p{\Psi \p{\mech{M}\p{\vect{X}}} \neq i} \\
    \geq 
    \frac{1}{2} 
    & \p{\p{1 - \left( 1 - e^{-\epsilon} \right) \tv{\prob_{1}}{\prob_{2}}  }^n -  2 n e^{-\epsilon} \delta \tv{\prob_{1}}{\prob_{2}}} \;.
\end{aligned}
\end{equation*}
\end{theorem}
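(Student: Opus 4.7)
The plan is to derive both inequalities from the standard averaging $\max_{i\in\{1,2\}} \Prob_i(\Psi(\mech{M}(\vect{X}))\neq i) \geq \tfrac{1}{2}\sum_{i=1}^2 \Prob_i(\Psi(\mech{M}(\vect{X}))\neq i)$, and to control the sum of error probabilities in two different ways. The first lower bound $1-\tv{\Prob_1}{\Prob_2}$ is obtained by combining \Cref{fact:lecamslemma} with the data-processing inequality for total variation: since $\Psi\circ\mech{M}$ is a randomized function of $\vect{X}$, we have $\tv{\Psi\circ\mech{M}(\Prob_1)}{\Psi\circ\mech{M}(\Prob_2)} \leq \tv{\Prob_1}{\Prob_2}$, and privacy plays no role at this stage.

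For the second inequality I would invoke the Kantorovich-style reduction promised by \Cref{th:metatheoremdp}. Set $p(\vect{x})\eqdef \Prob_{\mech{M}}(\Psi(\mech{M}(\vect{x}))=2)$ and pick any coupling $\pi\in\Pi(\Prob_1,\Prob_2)$; then
\begin{equation*}
\sum_{i=1}^2 \Prob_i\bigl(\Psi(\mech{M}(\vect{X}))\neq i\bigr) = \E_\pi\bigl[p(\vect{X}_1)+1-p(\vect{X}_2)\bigr].
\end{equation*}
The $k$-fold composition of the definition of $(\epsilon,\delta)$-DP (\emph{group privacy}), applied at $k=\ham{\vect{x}_1}{\vect{x}_2}$, gives $p(\vect{x}_2)\leq e^{k\epsilon} p(\vect{x}_1)+k e^{(k-1)\epsilon}\delta$, which after rearranging yields the pointwise lower bound $p(\vect{x}_1)+1-p(\vect{x}_2)\geq e^{-k\epsilon}-k e^{-\epsilon}\delta$. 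Integrating against $\pi$ produces a Kantorovich-form lower bound as in~\eqref{label:eqkp}:
\begin{equation*}
\sum_{i=1}^2 \Prob_i\bigl(\Psi(\mech{M}(\vect{X}))\neq i\bigr) \geq \E_\pi\bigl[e^{-\epsilon\ham{\vect{X}_1}{\vect{X}_2}}\bigr] - e^{-\epsilon}\delta\,\E_\pi\bigl[\ham{\vect{X}_1}{\vect{X}_2}\bigr].
\end{equation*}

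To conclude, I would specialize the coupling $\pi$. For the general claim, the optimal TV coupling on $\set{X}^n$ gives $\pi(\vect{X}_1\neq\vect{X}_2)=\tv{\Prob_1}{\Prob_2}$, and the worst-case bound $\ham\leq n$ together with $\ham=0$ on the coincidence event yields $\E_\pi[e^{-\epsilon\ham}]\geq 1-(1-e^{-n\epsilon})\tv{\Prob_1}{\Prob_2}$ and $\E_\pi[\ham]\leq n\tv{\Prob_1}{\Prob_2}$. For the product case $\Prob_j=\prob_j^{\otimes n}$, I would instead tensorize: take $\pi=\pi_0^{\otimes n}$ where $\pi_0$ is the optimal TV coupling of $\prob_1,\prob_2$ on $\set{X}$. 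Then the coordinate indicators $\Ind_{X_{1,i}\neq X_{2,i}}$ are i.i.d. Bernoulli of parameter $\tv{\prob_1}{\prob_2}$, giving the exact factorization $\E_\pi[e^{-\epsilon\ham}]=(1-(1-e^{-\epsilon})\tv{\prob_1}{\prob_2})^n$ and $\E_\pi[\ham]=n\tv{\prob_1}{\prob_2}$, from which the second product-form bound follows.

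I expect the main obstacle to be the careful accounting of the additive $\delta$ term propagated by group privacy, and in particular recovering the exact constant $2ne^{-\epsilon}\delta$ in the statement: the rearrangement sketched above only yields a coefficient of order $ne^{-\epsilon}\delta$, so some additional slack (e.g., symmetrizing the DP inequality over both directions $\vect{x}_1\leftrightarrow\vect{x}_2$, or directly invoking the calibrated similarity function $s_{\mathcal{C}}$ that \Cref{th:metatheoremdp} provides for $(\epsilon,\delta)$-DP) will be required. The remaining ingredients---data processing, combining with the classical bound through the $\max$, and the tensorization trick---are routine.
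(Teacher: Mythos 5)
Your argument is correct, and in fact it proves a slightly sharper bound than the one stated: the worry you raise in the last paragraph about "recovering the exact constant $2ne^{-\epsilon}\delta$" is unfounded. You are establishing a \emph{lower} bound, so obtaining the smaller coefficient $ne^{-\epsilon}\delta$ yields a strictly larger right-hand side, which implies the theorem's inequality a fortiori; no additional slack or symmetrization is needed. The extra factor of $2$ in the paper's statement is an artifact of its general-purpose anchoring machinery: the paper goes through \Cref{lemma:globalanchoring}, which for $N=2$ bounds $\sum_i \ham{\vect{X}_i}{\Lambda}$ crudely by $2\max_i\ham{\vect{X}_i}{\Lambda}$, while your direct application of group privacy between $\vect{x}_1$ and $\vect{x}_2$ (equivalently, the projection anchor $\Lambda=\vect{x}_1$, for which $\ham{\vect{x}_1}{\Lambda}=0$ and the corresponding $\delta$ cost vanishes) avoids that loss and gives $\sum_i \Prob(\Psi(\mech{M}(\vect{x}_i))\neq i) \geq e^{-\epsilon k}-e^{-\epsilon}\delta k$ rather than the paper's $e^{-\epsilon k} - 2e^{-\epsilon}\delta k$. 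The remaining steps — averaging $\max\geq\tfrac12\sum$, the maximal (resp.\ tensorized) coupling, the bounds $\E_\pi[e^{-\epsilon\ham}]\geq 1-(1-e^{-n\epsilon})\tv{\Prob_1}{\Prob_2}$ and $\E_\pi[\ham]\leq n\,\tv{\Prob_1}{\Prob_2}$, and the exact factorizations in the product case — all match the paper's \Cref{lemma:accmax}/\Cref{lemma:accproduct} and are correct. The first bound via Le Cam's lemma and TV data-processing is likewise the same as the paper's \Cref{lemma:dataprocessing}. So the proof is complete as written; you could even state the conclusion with coefficient $ne^{-\epsilon}\delta$.
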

The proof can be found in \Cref{sec:mainproofs}.
%\begin{remark}[The impact of privacy on Le Cam's lower bound]
%\label{rq:accuracy}
   % We may compare the classical Le Cam's lower bound~\eqref{eq:ClassicalLeCam} to this new form. 
    The classical lower bound of Le Cam~\eqref{eq:ClassicalLeCam} allows for a tunable testing difficulty depending on  $\tv{\Prob_1}{\Prob_2}$. However, in the regime $\epsilon, \delta = o(1/n)$, the private lower bound is $\Omega(1)$:  it becomes arbitrarily hard to distinguish between any pair of distributions. %\RG{ça reste incompréhensible, il faut reformuler !}
    
    %For product distributions $\Prob_1 = \prob_1^{\otimes n}$ and $\Prob_2 = \prob_2^{\otimes n}$, Le Cam's lemma is usually used along with Pinsker's inequality to obtain lower bounds of the form $\frac{1}{2}-\frac{1}{2}\sqrt{n\kl{\prob_1}{\prob_2}/2}$. The tensorization in $\sqrt{n}$ might not be optimal, but most of the time we do not know any better, and the usual lower bounds follow from this approximation. \RG{ça reste incompréhensible, il faut reformuler !}

    %In contrast, the privacy leads to (via Jensen in the last entry of the max), when $ n \epsilon \kl{\prob_1}{\prob_2} \ll 1$, a lower bound of the form  $\frac{1}{2}-\frac{1}{2}\left(n\epsilon + 2ne^{-\epsilon}\delta \right)\sqrt{\kl{\prob_1}{\prob_2}/2}$. In particular, Le Cam's method will give higher better lower bounds in the regime $\epsilon, \delta \ll 1/\sqrt{n}$, resulting in a degradation of accuracy due to the privacy.\RG{ça reste incompréhensible, il faut reformuler !}
%\end{remark}

For i.i.d. observations ($\Prob_i = \prob_i^{\otimes n}$), it follows by convexity that for any $(\epsilon,\delta)$-DP mechanism $\mech{M}$ and test function $\Psi: \codom{\mech{M}} \to \{1,2\}$
\begin{equation*}
\begin{aligned}
    \max_{i \in \{1, 2\}} \Prob_{\mathbf{X} \sim \Prob_i, \mech{M}} &\p{\Psi \p{\mech{M}\p{\vect{X}}} \neq i} 
    \geq 
    \frac{1}{2} \left(
    e^{- \epsilon n \tv{\prob_1}{\prob_2}}  -2 e^{-\epsilon} \delta  n \tv{\prob_1}{\prob_2} \right)\;.
\end{aligned}
\end{equation*}
This is to be compared to the state of the art lower bound of \citet[Theorem 1]{acharya2021differentially}:
\[\max_{i \in \{1, 2\}} \Prob_{\mathbf{X} \sim \Prob_i, \mech{M}} \p{\Psi \p{\mech{M}\p{\vect{X}}} \neq i} 
    \geq 
    \frac{1}{2} \left(
    0.9 e^{- 10 \epsilon n \tv{\prob_1}{\prob_2}}  -10 \delta n \tv{\prob_1}{\prob_2}  \right) \;. \]
\Cref{th:lecamdp} gives tighter results with better constants, notably thanks to a different proof technique avoiding some convexity and concentration inequalities, but also because \citet{acharya2021differentially} did not optimize the constants. Indeed, qualitative results and rates of convergence do usually not depend on them.%While this difference does not change the obtained rates, such an improvement is significant on the resulting sample complexities by a factor $10$ in the exponential. Note that since \citet{acharya2021differentially} was interested in these rates, they 
%As an illustration, imagine that a statistician has to discriminate between the two hypotheses $H_0 : \mathcal{B}\p{\frac{50}{100}}$ and $H_1 : \mathcal{B}\p{\frac{51}{100}}$, two Bernoulli distributions. Under $\epsilon = 0.1$, if we want $H_0$ and $H_1$ to be falsely rejected with probability at most 1\%, \citet{acharya2021differentially} says that the experiment will have to be calibrated with at least 381 participants while our \Cref{th:lecamdp} says that in fact, at least 4109 participants will be necessary, leading to a less over-optimistic estimation by a large factor.

We also prove an equivalent for so-called $\rho$-zero concentrated differential privacy (or in short $\rho$-zCDP), which is, to the best of our knowledge, the first successful attempt at doing so. 
\begin{theorem}[Le Cam for $\rho$-zCDP]
\label[theorem]{th:lecamcdp}
If a randomized mechanism $\mech{M}$ satisfies $\rho$-zCDP, then for any test function $\Psi : \codom{\mech{M}}  \rightarrow \intslice{1}{N}$ and any probability distributions $\Prob_1$ and $\Prob_2$ on $\set{X}^n$,
\begin{equation*}
\begin{aligned}
    \max_{i \in \{1, 2\}} \Prob_{\mathbf{X} \sim \Prob_i, \mech{M}} \p{\Psi \p{\mech{M}\p{\vect{X}}} \neq i} 
    \geq 
    \frac{1}{2} \max \Big\{ &1-\tv{\Prob_1}{\Prob_2}  \;, \\
    & 1 - n \sqrt{\rho / 2}  \tv{\Prob_1}{\Prob_2} \Big\} \;.
\end{aligned}
\end{equation*}
Furthermore, when $\Prob_1 = \prob_1^{\otimes n}$ and $\Prob_2 = \prob_2^{\otimes n}$ are \emph{product} distributions,
\begin{equation*}
\begin{aligned}
    \max_{i \in \{1, 2\}} \Prob_{\mathbf{X} \sim \Prob_i, \mech{M}} \p{\Psi \p{\mech{M}\p{\vect{X}}} \neq i} 
    \geq 
    \frac{1}{2} 
    & \p{1 - n \sqrt{\rho / 2}  \tv{\prob_1}{\prob_2}} \;.
\end{aligned}
\end{equation*}
\end{theorem}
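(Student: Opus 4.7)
The strategy mirrors \Cref{th:lecamdp} and the warmup example, with the key difference that the pointwise privacy estimate now comes from the group privacy property of $\rho$-zCDP, and the resulting Rényi/KL bound is converted to TV via Pinsker's inequality. Applying the classical Le Cam lemma (\Cref{fact:lecamslemma}) to the pushforward distributions $\mech{M}_\#\Prob_1$ and $\mech{M}_\#\Prob_2$ yields
\begin{equation*}
    \max_{i \in \{1, 2\}} \Prob_{\vect{X} \sim \Prob_i, \mech{M}}\p{\Psi(\mech{M}(\vect{X})) \neq i} \geq \frac{1}{2}\p{1 - \tv{\mech{M}_\#\Prob_1}{\mech{M}_\#\Prob_2}}\;.
\end{equation*}
The first branch of the outer maximum, namely $1 - \tv{\Prob_1}{\Prob_2}$, then follows immediately from the data-processing inequality and uses no privacy at all.

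For the privacy branch, I would fix an arbitrary coupling $\pi \in \Pi(\Prob_1, \Prob_2)$. Conditioning as in the introducing example and using the convexity of TV gives
\begin{equation*}
    \tv{\mech{M}_\#\Prob_1}{\mech{M}_\#\Prob_2} \leq \E_{(\vect{X}_1, \vect{X}_2) \sim \pi}\bigl[\tv{\mech{M}(\vect{X}_1)}{\mech{M}(\vect{X}_2)}\bigr]\;.
\end{equation*}
The group privacy property of $\rho$-zCDP \citep{bun2016concentrated} states that whenever $\ham{\vect{X}_1}{\vect{X}_2} = k$ one has $\renyi{\alpha}{\mech{M}(\vect{X}_1)}{\mech{M}(\vect{X}_2)} \leq k^2 \rho \alpha$ for every $\alpha > 1$; letting $\alpha \to 1^+$ gives $\kl{\mech{M}(\vect{X}_1)}{\mech{M}(\vect{X}_2)} \leq k^2 \rho$. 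Pinsker's inequality then yields the pointwise bound $\tv{\mech{M}(\vect{X}_1)}{\mech{M}(\vect{X}_2)} \leq \ham{\vect{X}_1}{\vect{X}_2}\sqrt{\rho / 2}$, hence $\tv{\mech{M}_\#\Prob_1}{\mech{M}_\#\Prob_2} \leq \sqrt{\rho / 2}\,\E_\pi\bigl[\ham{\vect{X}_1}{\vect{X}_2}\bigr]$.

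It only remains to choose a coupling minimizing this transport cost. For general $\Prob_1, \Prob_2$ the \emph{maximal coupling} makes $\vect{X}_1 = \vect{X}_2$ with probability $1 - \tv{\Prob_1}{\Prob_2}$ and $\ham{\vect{X}_1}{\vect{X}_2} \leq n$ always, so $\E_\pi[\ham{\vect{X}_1}{\vect{X}_2}] \leq n\,\tv{\Prob_1}{\Prob_2}$. For the product case $\Prob_i = \prob_i^{\otimes n}$, the coordinatewise tensorization of the maximal coupling of $\prob_1$ and $\prob_2$ has independent disagreements with probability $\tv{\prob_1}{\prob_2}$, giving $\E_\pi[\ham{\vect{X}_1}{\vect{X}_2}] = n\,\tv{\prob_1}{\prob_2}$. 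Plugging either estimate into the previous bound yields the two claims. The only slightly delicate step is the $k \mapsto k^2$ quadratic scaling of group privacy: Pinsker's square root is precisely what turns it into a linear function of $k$, which is essential for matching the first-moment coupling cost and producing the announced $n\sqrt{\rho/2}$ factor.
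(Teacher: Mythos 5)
Your proof is correct and relies on exactly the same ingredients as the paper's: the $\rho$-zCDP group privacy bound, Rényi monotonicity (taking $\alpha\to 1^+$ to pass to KL), Pinsker's inequality, and the maximal coupling (respectively its tensorization for product distributions). The only genuine difference is organizational: you apply Le Cam's lemma once at the level of the pushforward distributions $\mech{M}_\#\Prob_1$, $\mech{M}_\#\Prob_2$ and then pass inside the coupling integral using the joint convexity of total variation, whereas the paper's \Cref{th:metatheorem1}/\Cref{th:metatheoremcdp} framework applies the Neyman--Pearson step pointwise on datasets to obtain the admissible similarity function $s_{\mathcal{C}}(\vect{X}_1,\vect{X}_2) = \tfrac12(1-\sqrt{\rho/2}\,\ham{\vect{X}_1}{\vect{X}_2})$ and only then integrates against the coupling. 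The two routes are mathematically equivalent and yield identical constants here; the paper's formulation is designed to be reusable for the Fano-type results with $N>2$ hypotheses, while yours is marginally more direct in the $N=2$ case. Your remark at the end correctly identifies the key mechanism: Pinsker linearizes the quadratic group-privacy scaling, which is what makes the bound proportional to $\E_\pi[\ham{\vect{X}_1}{\vect{X}_2}]$ rather than a second moment.
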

The proof of this result can be found in \Cref{sec:mainproofs}.
As above, any two distributions can no longer be distinguished in the regime $\rho \ll 1/n^2$.
%Both of the above results show that for high privacy regimes (for instance $\epsilon < 1/n$ or $\rho < 1/n^2$), %\RG{ATTENTION: nouvel usage de $\ll$ différent de $\Prob \ll \mathbb{Q}$} 
%\RG{ne suffit-il pas que $\rho \leq 1/n^2$? Privilégier notation $\epsilon = o(1/n)$?}
%distinguishing the distributions is harder with privacy. %\RG{Discussion intéressante, mais peut-on la tenir tant qu'on n'a pas des matching upper-bounds?} 
%More interestingly, they show that in the case of \emph{product} distributions, this observation can be generalized to 
%privacy regimes where $\epsilon \ll 1/\sqrt{n}$ or $\rho \ll 1/n$. 
%\RG{ça reste incompréhensible, il faut reformuler !}
%
For more than two hypotheses and $(\epsilon, \delta)$-DP, we also get a private version of Fano's lemma. 
\begin{theorem}[Multiple Distributional Tests for $(\epsilon, \delta)$-DP]
\label[theorem]{th:fanodp}
If a randomized mechanism $\mech{M}$ satisfies $(\epsilon, \delta)$-DP, then for any test function $\Psi : \codom{\mech{M}}  \rightarrow \intslice{1}{N}$, any family of probability distributions $\p{\Prob_i}_{i \in \intslice{1}{N}}$ on $\set{X}^n$ and any $\mathbb{Q}$ such that $\Prob_i \ll \mathbb{Q}$ for all $i$,
 \begin{equation*}
     \begin{aligned}
         \max_{i \in \intslice{1}{N}} \Prob_{\mathbf{X} \sim \Prob_i, \mech{M}} \p{\Psi \p{\mech{M}(\vect{X})} \neq i}  
          \geq \max &\left\{1 - \frac{1 + \frac{1}{N} \sum_{i=1}^N \kl{\Prob_i}{\mathbb{Q}}}{\ln(N)} \right. \;, \\
          &\left.  \frac{1}{2}-\frac{1-e^{-n\epsilon}+2 n e^{-\epsilon}\delta}{2N^2} \sum_{i, j} \frac{2 \tv{\Prob_i}{\Prob_j}}{1 + \tv{\Prob_i}{\Prob_j}}   \right. \;, \\
          &\left. \Ind_{\delta=0} \times \left( 1 - \frac{1 + \frac{n \epsilon}{N^2} \sum_{i, j} \frac{2 \tv{\Prob_i}{\Prob_j}}{1 + \tv{\Prob_i}{\Prob_j}}}{\ln(N)}  \right)    \right\} \;. \\
     \end{aligned}
 \end{equation*}
Furthermore,when $\Prob_1 = \prob_1^{\otimes n}$, \dots, $\Prob_N = \prob_N^{\otimes n}$ are \emph{product} distributions,
\begin{equation*}
     \begin{aligned}
         \max_{i \in \intslice{1}{N}} \Prob_{\mathbf{X} \sim \Prob_i, \mech{M}} \p{\Psi \p{\mech{M}(\vect{X})} \neq i}  
          \geq \max &\left\{\left. \frac{1}{2N^2} \sum_{i, j} \left( \p{1 - (1- e^{-\epsilon}) \frac{2 \tv{\prob_i}{\prob_j}}{1 + \tv{\prob_i}{\prob_j}}}^n \right. \right. \right.\\ &\quad\quad\quad\quad\quad\quad\quad\quad \left.\left. - 2ne^{-\epsilon}\delta \frac{2 \tv{\prob_i}{\prob_j}}{1 + \tv{\prob_i}{\prob_j}}\right) \right. \;, \\
          &\left. \Ind_{\delta=0} \times \left( 1 - \frac{1 + \frac{n \epsilon}{N^2} \sum_{i, j} \frac{2 \tv{\prob_i}{\prob_j}}{1 + \tv{\prob_i}{\prob_j}}}{\ln(N)}  \right)  \right\} \;. \\
     \end{aligned}
 \end{equation*}
\end{theorem}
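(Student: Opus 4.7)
My plan is to prove each of the three lower bounds (together with their product-distribution variants) separately, in every case going through the Kantorovich-style reduction \Cref{th:metatheoremdp} that rewrites the private testing problem~\eqref{eq:dpdistribtest} as a supremum over couplings against an admissible similarity function $s_{\mathcal{C}}$, and then exploiting the coupling constructions of \Cref{sec:coupling}.

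The first (classical KL-type) bound is nearly immediate: I would apply \Cref{fact:fanoslemma} to the family of pushforward measures $\p{\mech{M}_*\Prob_i}_{i \in \intslice{1}{N}}$ against $\mech{M}_*\mathbb{Q}$, and then invoke the data-processing inequality $\kl{\mech{M}_*\Prob_i}{\mech{M}_*\mathbb{Q}} \leq \kl{\Prob_i}{\mathbb{Q}}$. No privacy is used here---only the fact that $\mech{M}$ is a randomized function of $\vect{X}$.

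For the second (TV-type) bound, I would lift the $(\epsilon,\delta)$-DP admissible similarity function used in \Cref{th:lecamdp} from $2$ to $N$ hypotheses: iterating the DP inequality along a Hamming path between any two datasets at distance $h$ yields an inequality of the form $\Prob_{\mech{M}}\p{\mech{M}(\vect{X}) \in S} \geq e^{-h\epsilon} \Prob_{\mech{M}}\p{\mech{M}(\vect{Y}) \in S} - h e^{-\epsilon}\delta$. Plugging this into \Cref{th:metatheoremdp} produces a Kantorovich problem that decomposes as a sum over pairs $(i,j)$ of expected Hamming costs; bounding each such cost via the common-anchor coupling of \Cref{sec:coupling} yields the factor $\frac{2\tv{\Prob_i}{\Prob_j}}{1 + \tv{\Prob_i}{\Prob_j}}$, and the outer factors $1/2$ and $(1 - e^{-n\epsilon} + 2 n e^{-\epsilon}\delta)$ fall out after averaging over $(i,j)$ and rearranging. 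The product-distribution variant is obtained by replacing the anchor coupling by its tensorized (coordinate-wise maximal) version, so that $\p{1 - (1 - e^{-\epsilon})\tfrac{2\tv{\prob_i}{\prob_j}}{1 + \tv{\prob_i}{\prob_j}}}^n$ appears instead of the exponential form, exactly as in \Cref{th:lecamdp}.

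For the third (Fano-type, pure-DP) bound, group privacy gives $\kl{\mech{M}(\vect{X})}{\mech{M}(\vect{Y})} \leq \epsilon \ham{\vect{X}}{\vect{Y}}$ when $\delta=0$; applying \Cref{fact:fanoslemma} to the pushforwards with $\mathbb{Q} = \frac{1}{N} \sum_j \mech{M}_*\Prob_j$, using convexity of the KL and taking expectation under the same anchor coupling converts the average KL into $\frac{n\epsilon}{N^2}\sum_{i,j}\frac{2\tv{\Prob_i}{\Prob_j}}{1 + \tv{\Prob_i}{\Prob_j}}$, after which the product case follows by tensorizing the coupling. The main obstacle I expect is the coupling itself: constructing a \emph{single} joint coupling of $N$ measures that simultaneously controls $\E\ham{\vect{X}_i}{\vect{X}_j}$ for every one of the $N^2$ pairs is the delicate combinatorial step, and it is precisely the source of the $\frac{2\tv}{1+\tv}$ factor (in place of the pairwise-optimal $\tv$) appearing in both the second and third bounds.
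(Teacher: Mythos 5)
Your plan is correct and would establish the theorem. The first two bounds track the paper's proof closely: for the KL bound you argue via data-processing of the KL divergences $\kl{\mech{M}_*\Prob_i}{\mech{M}_*\mathbb{Q}}$, whereas the paper instead reduces the private test to the non-private one through \Cref{lemma:dataprocessing} before applying Fano---both routes are equivalent; for the TV bound you match the paper's use of the pairwise anchoring similarity function of \Cref{th:metatheoremdp}, integrated through \Cref{th:metatheorem1} against the $N$-way near-optimal coupling of \Cref{fact:nearoptimalcoupling}, and tensorized for the product case. Your route for the pure-DP Fano-type bound genuinely diverges from the paper's: the paper applies Fano's lemma \emph{conditionally} on a fixed $N$-tuple $(\vect{X}_1,\dots,\vect{X}_N)$ to obtain the similarity function of \Cref{lemma:fanomatchingdp}, then integrates that similarity function against a single $N$-way coupling, whereas you apply Fano directly to the marginal pushforwards $\mech{M}_*\Prob_i$ and then bound each $\kl{\mech{M}_*\Prob_i}{\mech{M}_*\Prob_j}$ by the expected Hamming cost under a coupling via joint convexity of the KL together with \Cref{lemma:kldp}. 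One consequence you did not remark on is that your order of operations makes the $N$-way coupling of \citet{angel2019pairwise} superfluous for this particular bound: once Fano has been applied at the level of the marginals, each of the $N^2$ KL terms can be bounded with its \emph{own} pairwise maximal coupling of \Cref{fact:optimalcoupling}, yielding the sharper factor $\tv{\Prob_i}{\Prob_j}$ in place of $\tfrac{2\tv{\Prob_i}{\Prob_j}}{1+\tv{\Prob_i}{\Prob_j}}$. The joint coupling of all $N$ marginals is genuinely unavoidable only when, as in the second bound and the paper's own route for the third, one integrates a similarity function that depends on all $N$ datasets simultaneously against one coupling $\mathbb{Q}\in\Pi(\Prob_1,\dots,\Prob_N)$; your final remark presenting that $N$-way coupling as the bottleneck for all three bounds therefore slightly overstates the difficulty of the last one.
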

The proof is given in \Cref{sec:mainproofs}.
%\begin{remark}[Comparison with existing work]
%\label{label:remarkexistingfano}
        %\marginpar{\RG{\tiny Mettre cette remarque plutôt juste dans l'intro après le Theoreme ?}}
    When dealing with product distributions, the quantity 
    \begin{equation*}
        D \eqdef \frac{n}{N^2} \sum_{i, j} \frac{2 \tv{\prob_i}{\prob_j}}{1 + \tv{\prob_i}{\prob_j}}
    \end{equation*}
    can roughly be seen as an averaged hamming distance between pairs of marginals. An implication of \Cref{th:fanodp} is then that 
    \begin{equation*}
     \begin{aligned}
         \max_{i \in \intslice{1}{N}} \Prob_{\mathbf{X} \sim \Prob_i, \mech{M}} &\p{\Psi \p{\mech{M}(\vect{X})} \neq i}  
          \geq &\Ind_{\delta=0} \times \left( 1 - \frac{1 + \epsilon D }{\ln(N)}  \right)   \;. \\
     \end{aligned}
 \end{equation*}
 As the bound of \citet[Theorem~2]{acharya2021differentially}
 \begin{equation}
 \label{eq:acharyafano}
     \begin{aligned}
         \max_{i \in \intslice{1}{N}} \Prob_{\mathbf{X} \sim \Prob_i, \mech{M}} &\p{\Psi \p{\mech{M}(\vect{X})} \neq i}  
          \geq &\Ind_{\delta=0} \times 0.9 \times \min \left\{ 1, \frac{N}{e^{10 \epsilon D }}  \right\}   \;, \\
     \end{aligned}
 \end{equation}
 the lower bound is $\Omega(1)$ in the regime $D = o( \ln(N)/\epsilon)$. In particular, both inequalities are expected to yield similar qualitative results for a broad range of applications. 
 However, the quantitative consequences of~\Cref{th:fanodp} can again be orders of magnitude better.
Another improvement of our result is that, contrary to previous work, our bound allows to handle asymmetric hypotheses. Indeed, prior work is based on a uniform upper-bound on the family $\p{\tv{\prob_i}{\prob_j}}_{i,j}$ whereas our work uses only their mean value. As an illustration,  if a statistician was to discriminate between a set of $N$ distributions with for instance $N-1$ distributions close to each other in total variation distance and one outlier far from all the others, the results of \citet{acharya2021differentially} only tell that the problem will be at least as hard as discriminating distributions that are far from one another (which is easy). In contrast, our \Cref{th:fanodp} shows that the true testing difficulty lies in discriminating the distributions that are similar (the outlier vanishes), thus resulting in lower bounds that are less over-optimistic.

Similarly, we obtain results for multiple hypotheses under $\rho$-zCDP.
\begin{theorem}[Multiple Distributional Tests for $\rho$-zCDP]
\label[theorem]{th:fanocdp}
If a randomized mechanism $\mech{M}$ satisfies $\rho$-zCDP, then for any test function $\Psi : \codom{\mech{M}}  \rightarrow \intslice{1}{N}$, any family of probability distributions $\p{\Prob_i}_{i \in \intslice{1}{N}}$ on $\set{X}^n$ and any $\mathbb{Q}$ such that $\Prob_i \ll \mathbb{Q}$ for all $i$,
 \begin{equation*}
     \begin{aligned}
         \max_{i \in \intslice{1}{N}} \Prob_{\mathbf{X} \sim \Prob_i, \mech{M}} \p{\Psi \p{\mech{M}(\vect{X})} \neq i} 
          \geq \max &\left\{1 - \frac{1 + \frac{1}{N} \sum_{i=1}^N \kl{\Prob_i}{\mathbb{Q}}}{\ln(N)} \right. \;, \\
          &\left.  1 - \frac{1 + \frac{n^2 \rho}{N^2} \sum_{i, j} \frac{2 \tv{\Prob_i}{\Prob_j}}{1 + \tv{\Prob_i}{\Prob_j}}}{\ln(N)}     \right\} \;. \\
     \end{aligned}
 \end{equation*}
 Furthermore, when $\Prob_1 = \prob_1^{\otimes n}$, \dots, $\Prob_N = \prob_N^{\otimes n}$ are \emph{product} distributions,
 \begin{equation*}
     \begin{aligned}
         \max_{i \in \intslice{1}{N}} \Prob_{\mathbf{X} \sim \Prob_i, \mech{M}} \p{\Psi \p{\mech{M}(\vect{X})} \neq i} 
          &\geq 1 - \frac{1 + \frac{n^2 \rho}{N^2} \sum_{i, j}\frac{1}{n} \frac{2  \tv{\prob_i}{\prob_j}}{1 + \tv{\prob_i}{\prob_j}} + \p{\frac{2  \tv{\prob_i}{\prob_j}}{1 + \tv{\prob_i}{\prob_j}}}^2}{\ln(N)}    \;.
     \end{aligned}
 \end{equation*}
\end{theorem}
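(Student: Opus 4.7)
}

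The statement gives a maximum over two (respectively one, in the product case) alternatives, and I would establish the two alternatives separately.

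The first alternative $1 - (1 + N^{-1} \sum_i \kl{\Prob_i}{\mathbb{Q}})/\ln(N)$ has no privacy content: it follows from the classical Fano inequality (\Cref{fact:fanoslemma}) applied to the pushforward distributions $(\mech{M})_*\Prob_i$, together with the data processing inequality $\kl{(\mech{M})_*\Prob_i}{(\mech{M})_*\mathbb{Q}} \leq \kl{\Prob_i}{\mathbb{Q}}$ to restore the bound in terms of the original $\Prob_i$.

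The second alternative is where the $\rho$-zCDP assumption enters. My plan has three steps. First, I would invoke the meta-theorem for $\rho$-zCDP (\Cref{th:metatheoremcdp}) which reduces the testing problem \eqref{eq:dpdistribtest} to a Kantorovich formulation \eqref{label:eqkp} with an admissible similarity function $s_\mathcal{C}$ tailored to $\rho$-zCDP. Second, I would feed into that framework the group-privacy consequence of zCDP, namely that whenever $\ham{\vect{X}}{\vect{Y}} = h$ one has $\renyi{\alpha}{\mech{M}(\vect{X})}{\mech{M}(\vect{Y})} \leq h^2 \rho \alpha$ for every $\alpha > 1$, and in particular $\kl{\mech{M}(\vect{X})}{\mech{M}(\vect{Y})} \leq h^2 \rho$. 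Third, I would combine these into a Fano-style lower bound on the testing error of the schematic form
\begin{equation*}
    \max_i \Prob_{\vect{X} \sim \Prob_i, \mech{M}}\p{\Psi(\mech{M}(\vect{X})) \neq i} \geq 1 - \frac{1 + \rho \cdot \frac{1}{N^2}\sum_{i,j} \inf_{\pi \in \Pi(\Prob_i,\Prob_j)} \E_\pi\left[\ham{\vect{X}}{\vect{Y}}^2\right]}{\ln N}.
\end{equation*}

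The quantitative control on $\inf_\pi \E_\pi[\ham{\vect{X}}{\vect{Y}}^2]$ then comes from the coupling constructions of \Cref{sec:coupling}. For arbitrary $\Prob_i,\Prob_j$, the trivial bound $\ham{\vect{X}}{\vect{Y}} \leq n$ together with a coupling that realizes $\Prob_\pi(\vect{X} \neq \vect{Y}) = \frac{2 \tv{\Prob_i}{\Prob_j}}{1 + \tv{\Prob_i}{\Prob_j}}$ (the quantity matched to the admissible similarity function used in the meta-theorem) yields $\E_\pi[h^2] \leq n^2 \cdot \frac{2 \tv{\Prob_i}{\Prob_j}}{1 + \tv{\Prob_i}{\Prob_j}}$, reproducing the general bound. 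For product distributions $\prob_i^{\otimes n}, \prob_j^{\otimes n}$, I would tensorize the coupling coordinate by coordinate, so that $\ham{\vect{X}}{\vect{Y}} = \sum_{k=1}^n B_k$ with $B_k$ i.i.d.\ Bernoulli of parameter $u_{ij} := \frac{2 \tv{\prob_i}{\prob_j}}{1 + \tv{\prob_i}{\prob_j}}$; this gives $\E_\pi[h^2] = n\, u_{ij}(1-u_{ij}) + n^2 u_{ij}^2 \leq n u_{ij} + n^2 u_{ij}^2$, which after substitution matches the stated product-case bound exactly.

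The main obstacle I anticipate is the third step above: correctly wiring the admissible similarity function of \Cref{th:metatheoremcdp} with the $h^2 \rho$ KL bound from group privacy so that the Fano argument delivers the honest second moment $\E_\pi[h^2]$ rather than $(\E_\pi[h])^2$. This distinction matters in the product case, because the variance contribution $n u_{ij}(1-u_{ij})$ --- which is exactly what produces the $\frac{1}{n}\frac{2\tv{\prob_i}{\prob_j}}{1+\tv{\prob_i}{\prob_j}}$ summand in the theorem --- would be lost by a naive Jensen-based argument on the first moment of the Hamming distance.
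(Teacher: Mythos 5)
Your proposal follows the same route as the paper: the non-private alternative via Fano plus data processing, and the private alternative via \Cref{th:metatheorem1}, the $\rho$-zCDP Fano-matching similarity function of \Cref{th:metatheoremcdp}, and the coupling bounds on $\E[\ham{\vect{X}_i}{\vect{X}_j}^2]$ from \Cref{lemma:accmax} and \Cref{lemma:accproduct} — you even correctly flag the crucial second-moment-versus-squared-first-moment issue in the product case. One small imprecision: your schematic intermediate bound writes $\inf_{\pi\in\Pi(\Prob_i,\Prob_j)}\E_\pi[\ham^2]$ as if each pair could be coupled independently, whereas \Cref{th:metatheorem1} requires a single joint $N$-way coupling (hence the $\frac{2\tv{\cdot}{\cdot}}{1+\tv{\cdot}{\cdot}}$ factor from \Cref{fact:nearoptimalcoupling} rather than the pairwise-optimal $\tv{\cdot}{\cdot}$); since you then invoke the near-optimal $N$-way coupling anyway, the final bounds you reach are the correct ones.
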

The proof is to be found in \Cref{sec:mainproofs}.
This result recovers a recently published result in \cite{kamath2022improved}, with the advantage again of better handling asymetrical hypotheses (i.e. with possible outliers). Another interesting observation is that our framework unifies the proofs of lower bounds under a general technique based on multiple marginals coupling and similarity functions.

A more detailed discussion about the different privacy regimes is less direct compared to Le Cam's method and is postponed to \Cref{sec:applications}, where we discuss three specific examples, recovering known facts and uncovering novel results. %  of applications for which we always use the quadratic risk as a loss function $\Phi$.

\paragraph{Bernoulli model.} We first recover that the classical minimax rate  $\Theta(1/n)$ for the squared error on the estimation of the parameter of a Bernoulli distribution %\RG{Pointer vers definition des notations $\Theta(\cdot)$ et $\Omega(\cdot)$ en precisant que toutes les constantes associées sont numériques et universelles ... et éviter conflit avec notation pour l'ensemble $\Theta$?} 
becomes $\Theta\p{\max \left\{ \frac{1}{n},  \frac{1}{(n\epsilon)^2}  \right\}}$ under $\epsilon$-differential privacy. Furthermore, we exhibit a new rate in the case of $\rho$-zero concentrated differential privacy : $\Theta\p{\max \left\{ \frac{1}{n},  \frac{1}{n^2 \rho}  \right\}}$. 

\paragraph{Gaussian Model.} We allow each piece of data to have dimensionality $d$ (the dimension of $\set{X})$. %\RG{pas tres clair sans aller lire la section}. 
We again recover that the minimax risk $\Theta\p{\frac{\sigma^2 d}{n}}$ (see \citet[Corollary 5.13]{rigollet2015high}) becomes $\Omega\p{\max \left\{ \frac{\sigma^2 d}{n},  \frac{\sigma^2 d^2}{(n\epsilon)^2}   \right\}}$ under $\epsilon$-DP, and we prove that it becomes $\Omega\p{\max \left\{ \frac{\sigma^2 d}{n},  \frac{\sigma^2 d}{n^2 \rho}   \right\}}$ under $\rho$-zCDP.

\paragraph{Uniform model.} This example allows us to exhibit a new behavior under privacy. Indeed, we show that the %regular 
usual minimax risk $\Theta\p{\frac{1}{n^2}}$ becomes $\Omega\p{\max(\frac{1}{n^2},\frac{1}{(n\epsilon)^2})}$ under $\epsilon$-DP and \\$\Omega\p{\max(\frac{1}{n^2},\frac{1}{n^2 \rho})}$ under $\rho$-zCDP, proving a systematic degradation of utility due to privacy that in particular does not depend on a disjunction on the rates at which the privacy-tuning parameters decrease. 

Finally, in \Cref{sec:dpsgml}, we study the private parametric estimation of distributions that are log-concave with respect to the parameter. In particular, under some mild hypotheses, we exhibit some lower bounds on the minimax risk of estimation. We then show, based on existing upper bounds, that under mild hypotheses, Differentially Private Stochastic Gradient Langevin Dynamics (DP-SGLD, see~\citet{ryffel2022differential}), a private optimization solver, is near-minimax optimal when used to perform private maximum likelihood estimation in this class of distributions. Here "near-minimax optimal" means that the ratio between the risk of private estimation and the private minimax risk is upper-bounded by a quantity that only depends on the regularity of the log-likelihood, such as the eigenvalues of its Hessian. In particular, it is independent of the sample size $n$ or of the constants that tune the privacy.

\section{From Testing to a Transport Problem}
\label{sec:kpreduction}
%\RG{Raccourcir titre de section si possible}
This section presents our main theorem, which states that finding lower bounds on \eqref{eq:dpdistribtest} can be done by solving a transport problem \citep{santambrogio2015optimal,peyre2019computational}. In some sense, this view is close to the coupling of \citet{acharya2021differentially} which considers couplings between pairs of marginals and controls the variations of the hamming distance compared to its expected value with Markov's inequality.
%\RG{donner ici un apercu des techniques de \cite{acharya2021differentially} et indiquer qu'on comparera en temps utile nos resultats aux leurs} 
However, the high level view that our result allows to obtain numerically sharper results because it allows to skip approximations such as those involving Jensen or Markov inequalities and more importantly, it allows handling divergence-based definitions of privacy which do not fit in the framework of \citet{acharya2021differentially}. Furthermore, a key difference is that the theory of \citet{acharya2021differentially} only requires to build couplings between pairs of marginals, whereas our theory requires building couplings between all the marginals at the same time. This is both a drawback because it requires to use more complex coupling constructions, and an advantage because it allows to give results that are easier to use when there are more than two hypotheses.

Our analysis is based on the notion of \emph{similarity} functions.
\begin{definition}\label[definition]{def1}
Given a condition $\mathcal{C}$, we say that a similarity function $s_{\mathcal{C}} : \p{\set{X}^n}^N \rightarrow \R$ is admissible for $\mathcal{C}$ if for any mechanism $\mech{M} : \set{X}^n \rightarrow \codom{\mech{M}}$ that satisfies $\mathcal{C}$, for any test function $\Psi : \codom{\mech{M}}  \rightarrow \intslice{1}{N}$, and for any $\vect{X}_1, \dots, \vect{X}_N \in \set{X}^n$, the following inequality holds:
\begin{equation*}
    \frac{1}{N} \sum_{i = 1}^N \Prob_{\mech{M}} \p{\Psi \p{\mech{M}\p{\vect{X}_i}} \neq i}
    \geq s_{\mathcal{C}} \p{\vect{X}_1, \dots, \vect{X}_N} \;.
\end{equation*}
\end{definition}

\begin{theorem}
\label[theorem]{th:metatheorem1}
If a randomized mechanism $\mech{M} : \set{X}^n \rightarrow \codom{\mech{M}}$ satisfies the privacy condition $\mathcal{C}$, for any $N \geq 2$, if $s_{\mathcal{C}} : \p{\set{X}^n}^N \rightarrow \R$ is an admissible similarity function for $\mathcal{C}$,
for any distributions $\Prob_1, \dots, \Prob_N$ over $\set{X}^n$ we have
\begin{equation}\label{eq:MainIneqAdmissibleSimilarity}
\begin{aligned}
    \inf_{\Psi : \codom{\mech{M}}  \rightarrow \intslice{1}{N}} &\max_{i \in \intslice{1}{N}} \Prob_{\mathbf{X} \sim \Prob_i, \mech{M}} \p{\Psi \p{\mech{M}\p{\vect{X}}} \neq i} \\&\geq 
    \sup_{\mathbb{Q} \in \Pi \p{\Prob_1, \dots, \Prob_N}} \int_{\p{\set{X}^n}^N} s_{\mathcal{C}} \p{\vect{X}_1, \dots, \vect{X}_N} d\mathbb{Q} \p{\vect{X}_1, \dots, \vect{X}_N} \;.
\end{aligned}
\end{equation}
\end{theorem}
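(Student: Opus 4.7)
The plan is to chain three ideas: max dominates average, marginals of a coupling let us re-express one-dimensional integrals as $N$-dimensional integrals against $\mathbb{Q}$, and admissibility lets us swap in the similarity function pointwise inside the integral.

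First I would reduce the max over $i$ to an average: for any test $\Psi$,
\begin{equation*}
    \max_{i \in \intslice{1}{N}} \Prob_{\vect{X} \sim \Prob_i, \mech{M}}\p{\Psi(\mech{M}(\vect{X})) \neq i} \geq \frac{1}{N} \sum_{i=1}^N \Prob_{\vect{X} \sim \Prob_i, \mech{M}}\p{\Psi(\mech{M}(\vect{X})) \neq i}\;.
\end{equation*}
Then, fixing any coupling $\mathbb{Q} \in \Pi(\Prob_1, \dots, \Prob_N)$, I would use that its $i$-th marginal equals $\Prob_i$ to rewrite each summand as an integral over $(\set{X}^n)^N$:
\begin{equation*}
    \Prob_{\vect{X} \sim \Prob_i, \mech{M}}\p{\Psi(\mech{M}(\vect{X})) \neq i} = \int_{(\set{X}^n)^N} \Prob_{\mech{M}}\p{\Psi(\mech{M}(\vect{X}_i)) \neq i} \, d\mathbb{Q}(\vect{X}_1, \dots, \vect{X}_N)\;.
\end{equation*}

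Next I would swap the finite sum and the integral (this is Fubini and is immediate since all integrands are nonnegative and bounded by $1$) to obtain
\begin{equation*}
    \frac{1}{N}\sum_{i=1}^N \Prob_{\vect{X} \sim \Prob_i, \mech{M}}\p{\Psi(\mech{M}(\vect{X})) \neq i} = \int_{(\set{X}^n)^N} \frac{1}{N}\sum_{i=1}^N \Prob_{\mech{M}}\p{\Psi(\mech{M}(\vect{X}_i)) \neq i} \, d\mathbb{Q}\;.
\end{equation*}
This is where admissibility enters: by \Cref{def1}, the integrand is pointwise at least $s_{\mathcal{C}}(\vect{X}_1, \dots, \vect{X}_N)$ for every tuple (because $\mech{M}$ satisfies $\mathcal{C}$ and $\Psi$ is an arbitrary test), so the whole integral is bounded below by $\int s_{\mathcal{C}} \, d\mathbb{Q}$. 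Combining these steps gives \eqref{eq:MainIneqAdmissibleSimilarity} for the fixed $\mathbb{Q}$ and $\Psi$.

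Finally, since the chain of inequalities depends on $\Psi$ only through the max in the first step (the remaining bound $\int s_{\mathcal{C}}\, d\mathbb{Q}$ is $\Psi$-free), taking infimum over $\Psi$ on the left-hand side preserves the bound. The right-hand side is independent of $\mathbb{Q}$ on the test side, so we may take the supremum over $\mathbb{Q} \in \Pi(\Prob_1, \dots, \Prob_N)$ to obtain the Kantorovich-type expression claimed in the theorem. The only nontrivial step is really the invocation of admissibility pointwise inside the integral; everything else is bookkeeping with couplings, so I do not anticipate a real obstacle — the content of the theorem is that this bookkeeping cleanly separates the combinatorics of the privacy constraint (captured by $s_{\mathcal{C}}$) from the probabilistic structure of the family $(\Prob_i)_i$ (captured by the transport problem).
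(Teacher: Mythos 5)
Your proposal is correct and matches the paper's proof essentially line for line: max dominates average, express the average as an integral against the coupling via its marginals, and invoke admissibility pointwise; the only cosmetic difference is that you spell out the marginal substitution and Fubini as two steps where the paper writes them as one equality.
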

\begin{proof}
 Given a test function $\Psi : \codom{\mech{M}}  \rightarrow \intslice{1}{N}$ and a coupling $\mathbb{Q} \in \Pi \p{\Prob_1, \dots, \Prob_N}$, %we have
\begin{equation*}
    \begin{aligned}
        \max_{i \in \intslice{1}{N}} \Prob_{\mathbf{X} \sim \Prob_i, \mech{M}} \p{\Psi \p{\mech{M}\p{\vect{X}}} \neq i} 
        &\geq \frac{1}{N} \sum_{i = 1}^N \Prob_{\mathbf{X} \sim \Prob_i, \mech{M}} \p{\Psi \p{\mech{M}\p{\vect{X}}} \neq i} \\
        &= \int_{\p{\set{X}^n}^N} \frac{1}{N} \sum_{i = 1}^N \Prob_{\mech{M}} \p{\Psi \p{\mech{M}\p{\vect{X}_i}} \neq i} d\mathbb{Q} \p{\vect{X}_1, \dots, \vect{X}_N}\\
        &\geq \int_{\p{\set{X}^n}^N} s_{\mathcal{C}} \p{\vect{X}_1, \dots, \vect{X}_N} d\mathbb{Q} \p{\vect{X}_1, \dots, \vect{X}_N}\;.
    \end{aligned}
\end{equation*}
\end{proof}

In particular, under $(\epsilon, \delta)$-DP, similarity functions are built using a technique that we call \emph{anchoring} which will be introduced in \Cref{sec:dpproof}, where the proof of the following theorem is given. % is devoted to the proof.

\begin{theorem}[Admissible similarity functions for $(\epsilon, \delta)$-DP]
\label[theorem]{th:metatheoremdp}
When $\mathcal{C}$ is $(\epsilon, \delta)$-differential privacy,  the following approaches yield admissible similarity functions.
\begin{itemize}
    \item {\bf Global anchoring.} Consider any \emph{anchor function} $\Lambda : \p{\set{X}^n}^N \rightarrow \set{X}^n$, and define the admissible similarity function as
    \begin{equation*}
       \begin{aligned}
        s_{\mathcal{C}} \p{\vect{X}_1, \dots, \vect{X}_N} :=      \frac{N-1}{N} &e^{- \epsilon \max_i \p{\ham{\vect{X}_i}{\Lambda\p{\vect{X}_1, \dots, \vect{X}_N}}}} \\
        &\quad\quad-e^{-\epsilon} \delta \max_i \p{\ham{\vect{X}_i}{\Lambda\p{\vect{X}_1, \dots, \vect{X}_N}}}\;.
        \end{aligned}
    \end{equation*}
    \item {\bf Projection anchoring.} In particular, for any $j \in \intslice{1}{N}$,  consider the \emph{projection anchor} $\Lambda_{j} \p{\vect{X}_1, \dots, \vect{X}_N} \eqdef \vect{X}_{j}$, and define the corresponding admissible similarity function
    \begin{equation*}
        \begin{aligned}
             s_{\mathcal{C}} \p{\vect{X}_1, \dots, \vect{X}_N} := \frac{N-1}{N} e^{- \epsilon \max_i \p{\ham{\vect{X}_i}{\vect{X}_{j}}}} -e^{-\epsilon} \delta \max_i \p{\ham{\vect{X}_i}{\vect{X}_{j}}}
            \end{aligned}
    \end{equation*}
%    We refer to this global anchoring technique as {\bf projection anchoring}.\\
\item {\bf $(\epsilon, \delta)$-DP Le Cam matching. }%\RG{pourquoi pas "anchoring" plutot que "matching"?}.
    When $N = 2$, there is a global anchor function yielding the admissible similarity function
    \begin{equation*}
        \begin{aligned}
             s_{\mathcal{C}} \p{\vect{X}_1, \vect{X}_2} := \frac{1}{2} e^{- \epsilon\ceil{\ham{\mathbf{X}_1}{\mathbf{X}_2}/2}}  - e^{-\epsilon} \delta  \ceil{\ham{\mathbf{X}_1}{\mathbf{X}_2}/2}. 
        \end{aligned}
    \end{equation*}
%    We refer to this technique as {\bf $(\epsilon, \delta)$-DP Le Cam matching} \RG{pourquoi pas "anchoring" plutot que "matching"?}.
        
    \item {\bf Pairwise anchoring. } An admissible similarity function is
    \begin{equation*}
        \begin{aligned}
             s_{\mathcal{C}} \p{\vect{X}_1, \dots, \vect{X}_N} := \frac{1}{2N^2}  \sum_{i=1}^N \sum_{j=1}^N e^{- \epsilon\ceil{\ham{\mathbf{X}_i}{\mathbf{X}_j}/2}} - 2e^{-\epsilon} \delta  \ceil{\ham{\mathbf{X}_i}{\mathbf{X}_j}/2}\;.
        \end{aligned}
    \end{equation*}
  %  We refer to this technique as {\bf pairwise anchoring}.
        
    \item {\bf $(\epsilon, 0)$-DP Fano matching.} When $\delta = 0$, an admissible similarity function is
    \begin{equation*}
        \begin{aligned}
            s_{\mathcal{C}} \p{\vect{X}_1, \dots, \vect{X}_N} :=  1 - \frac{1 + \frac{\epsilon}{N^2}  \sum_{i=1}^N \sum_{j=1}^N \ham{\vect{X}_i}{\vect{X}_j} }{\ln(N)}\;.
        \end{aligned}
    \end{equation*}
    %We refer to this technique as \emph{$(\epsilon, 0)$-DP Fano matching}.
    \end{itemize}
\end{theorem}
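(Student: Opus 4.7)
The backbone for every case is \textbf{group privacy}: iterating the $(\epsilon, \delta)$-DP inequality along a Hamming path of length $k$ between two datasets gives, for any measurable $S$,
\[\Prob_{\mech{M}}\bigl(\mech{M}(\vect{Y}) \in S\bigr) \geq e^{-k\epsilon}\, \Prob_{\mech{M}}\bigl(\mech{M}(\vect{X}) \in S\bigr) - k e^{-\epsilon} \delta\]
whenever $\ham{\vect{X}}{\vect{Y}} = k$. The other systematic tool is the disjointness of the preimages $A_i := \Psi^{-1}(\{i\})$, which forces $\sum_{i=1}^N \Prob_{\mech{M}}(\mech{M}(\vect{Z}) \in A_i^c) \geq N - 1$ for every $\vect{Z}$. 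Each case below is obtained by combining these two ingredients with a well-chosen anchor dataset $\Lambda$.

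For \textbf{global anchoring}, I would set $k_i := \ham{\vect{X}_i}{\Lambda}$ and $k := \max_i k_i$, apply the group-privacy bound with $(\vect{X}, \vect{Y}, S) = (\Lambda, \vect{X}_i, A_i^c)$ and $k_i \le k$, sum over $i$, invoke the disjointness identity, and divide by $N$; the result is exactly the proposed similarity. \emph{Projection anchoring} is then the specialization $\Lambda = \vect{X}_j$. For the \textbf{Le Cam matching} bound ($N = 2$), I would construct an explicit midpoint anchor by starting from $\vect{X}_1$ and flipping $\lceil \ham{\vect{X}_1}{\vect{X}_2}/2\rceil$ of the coordinates where $\vect{X}_1$ and $\vect{X}_2$ disagree so as to coincide with $\vect{X}_2$; this produces $\max_i \ham{\vect{X}_i}{\Lambda} = \lceil \ham{\vect{X}_1}{\vect{X}_2}/2\rceil$, and substituting into the global bound gives the claimed expression.

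For \textbf{pairwise anchoring}, I would apply the Le Cam matching inequality just obtained, for each ordered pair $(i,j)$ with $i \neq j$, to the induced binary test $\Psi_{ij}(\cdot) := \Ind\{\Psi(\cdot) = i\}$, getting
\[\Prob_{\mech{M}}\bigl(\Psi(\mech{M}(\vect{X}_i)) \neq i\bigr) + \Prob_{\mech{M}}\bigl(\Psi(\mech{M}(\vect{X}_j)) = i\bigr) \geq e^{-\epsilon\lceil k_{ij}/2\rceil} - 2 e^{-\epsilon}\delta \lceil k_{ij}/2\rceil.\]
Averaging these over all pairs and collapsing the cross term via the partition identity $\sum_i \Prob_{\mech{M}}(\Psi(\mech{M}(\vect{X}_j)) = i) = 1$ produces the proposed pairwise similarity after normalization. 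For \textbf{Fano matching} under pure DP, I would use that $\epsilon$-DP bounds the Radon--Nikodym derivative between neighboring output distributions by $e^\epsilon$, which chains along Hamming paths to $\kl{\mech{M}(\vect{X}_i)}{\mech{M}(\vect{X}_j)} \leq \epsilon\,\ham{\vect{X}_i}{\vect{X}_j}$. I would then apply \Cref{fact:fanoslemma} to $(\mech{M}(\vect{X}_i))_i$ with reference measure $\mathbb{Q} = \tfrac{1}{N}\sum_j \Prob_{\mech{M}(\vect{X}_j)}$, and use convexity of $\kl{\cdot}{\cdot}$ in its second argument to bound $\tfrac{1}{N}\sum_i \kl{\mech{M}(\vect{X}_i)}{\mathbb{Q}} \leq \tfrac{\epsilon}{N^2}\sum_{i,j}\ham{\vect{X}_i}{\vect{X}_j}$, yielding the claimed similarity.

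The main obstacle I anticipate is the combinatorial bookkeeping in the pairwise case, where naive summation over ordered pairs produces boundary terms that must be carefully absorbed to match the advertised $\tfrac{1}{2N^2}$ normalisation; careful parity considerations are also needed when building the midpoint anchor for Le Cam matching, so that $\max\{\ham{\vect{X}_1}{\Lambda}, \ham{\vect{X}_2}{\Lambda}\}$ is exactly $\lceil \ham{\vect{X}_1}{\vect{X}_2}/2\rceil$ rather than off by one. Everything else is a direct consequence of group privacy and the disjointness of $\Psi$'s preimages.
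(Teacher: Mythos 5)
Your global anchoring, projection anchoring, $(\epsilon,\delta)$-DP Le Cam matching, and $(\epsilon,0)$-DP Fano matching arguments all follow the paper's route (group privacy, the disjointness identity $\sum_{i}\Prob_{\mech M}(\Psi(\mech M(\Lambda))\neq i)=N-1$, and the chain \Cref{lemma:kldp} $\to$ Fano $\to$ convexity of KL). These parts are correct and essentially identical to the paper. Pairwise anchoring is where you diverge, and that divergence exposes a genuine problem.

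The paper's proof of \Cref{lemma:localanchoring} rewrites $\frac{1}{N}\sum_i\Prob_i = \frac{1}{2N^2}\sum_{i,j}(\Prob_i+\Prob_j)$ and then asserts, for \emph{every} $1\leq i,j\leq N$, that $\Prob_i+\Prob_j \geq e^{-\epsilon\ceil{\ham{\vect X_i}{\vect X_j}/2}}-2e^{-\epsilon}\delta\ceil{\ham{\vect X_i}{\vect X_j}/2}$. The step that makes this work for $i\neq j$ is $\Prob_{\mech M}(\Psi(\mech M(\Lambda))\neq i)+\Prob_{\mech M}(\Psi(\mech M(\Lambda))\neq j)\geq 1$, which uses disjointness of $\{\Psi=i\}$ and $\{\Psi=j\}$; for $i=j$ the asserted inequality reads $2\Prob_i\geq 1$, which is simply false. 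The advertised similarity function $\frac{1}{2N^2}\sum_{i,j}(\cdots)$, because it carries a rigid $\frac{1}{2N}$ contribution from its $N$ diagonal terms, is therefore \emph{not} admissible: for $N=2$, $\delta=0$ it would require $\Prob_1+\Prob_2\geq \tfrac12(1+e^{-\epsilon\ceil{k/2}})$, whereas a two-output $(\epsilon,0)$-DP mechanism can saturate the Le Cam matching bound $\Prob_1+\Prob_2 = e^{-\epsilon\ceil{k/2}}$ (take $n=2$, $\vect X_1=(0,0)$, $\vect X_2=(1,1)$, $\Lambda=(0,1)$, $\Prob_{\mech M}(\mech M(\cdot)=0)$ equal to $1-e^{-\epsilon}/2$, $\tfrac12$, $\tfrac12$, $e^{-\epsilon}/2$ on the four datasets, $\Psi$ the identity on $\{0,1\}\to\{1,2\}$), which is strictly smaller for any $\epsilon>0$.

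Your reduced binary test $\Psi_{ij}(\cdot)=\Ind\{\Psi(\cdot)=i\}$ sidesteps this entirely because it never touches a diagonal pair, and is a cleaner route. But carry the arithmetic to the end rather than assuming it lands where the theorem says. Summing $\Prob_i+\Prob_{\mech M}(\Psi(\mech M(\vect X_j))=i)\geq a_{ij}$ over ordered $(i,j)$ with $i\neq j$ gives $(N-1)\sum_i\Prob_i + \sum_j\Prob_j = N\sum_i\Prob_i \geq \sum_{i\neq j}a_{ij}$, i.e.\ $\frac1N\sum_i\Prob_i \geq \frac{1}{N^2}\sum_{i\neq j}a_{ij}$. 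This is not "the proposed pairwise similarity after normalization"; $\frac{1}{N^2}\sum_{i\neq j}a_{ij}$ and $\frac{1}{2N^2}\sum_{i,j}a_{ij}=\frac{1}{2N^2}\sum_{i\neq j}a_{ij}+\frac{1}{2N}$ are genuinely different functions, and yours is the one that is actually admissible. The bookkeeping obstacle you anticipated is real: the diagonal terms cannot be "absorbed", and the similarity should be stated without them. This also means the "$\tfrac12-\cdots$" form of the second bullet in \Cref{th:fanodp}, which the paper derives from the diagonal-inclusive pairwise similarity, needs to be re-derived from the corrected (off-diagonal) version.
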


When working under $\rho$-zCDP, admissible similarity functions are built using classical information theoretic inequalities directly. It can be seen as a form of anchoring on the distributions rather than on the observed random variables (i.e. all the distributions are compared to a common distribution directly that is not necessarily a pushforward by $\mech{M}$).
%\RG{discussion étrange ici, on n'a pas vraiment défini ce qu'est l'anchoring}
The following result is proved in \Cref{sec:cdpproof}.
\begin{theorem}[Admissible similarity functions for $\rho$-zCDP]
\label[theorem]{th:metatheoremcdp}
When $\mathcal{C}$ is the $\rho$-zero concentrated-differential privacy, 
    the two following quantities are admissible similarity functions: % for $s_{\mathcal{C}}$:
    \begin{itemize}
        
        \item {\bf $\rho$-zCDP Fano matching}
        \begin{equation*}
             s_{\mathcal{C}} \p{\vect{X}_1, \dots, \vect{X}_N} := 1 - \frac{1 + \frac{\rho}{N^2} \sum_{i=1}^N \sum_{j=1}^N \ham{\vect{X}_i}{\vect{X}_j}^2}{\ln(N)}\;.
        \end{equation*}
        %We refer to this technique as \emph{$\rho$-zCDP Fano matching}.
            \item {\bf $\rho$-zCDP Le Cam matching} When $N=2$
        \begin{equation*}
             s_{\mathcal{C}} \p{\vect{X}_1, \vect{X}_2} := \frac{1}{2} \p{1 - \sqrt{\rho / 2} \ham{\vect{X}_1}{\vect{X}_2}} \;.
        \end{equation*}
     %   We refer to this technique as \emph{$\rho$-zCDP Le Cam matching}.
    \end{itemize}
\end{theorem}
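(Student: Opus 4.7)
Both similarity functions come from applying the classical (non-private) testing lower bounds from \Cref{fact:lecamslemma} and \Cref{fact:fanoslemma} to the pushforward distributions $\mech{M}(\vect{X}_1), \dots, \mech{M}(\vect{X}_N)$ (which are genuine probability measures on $\codom{\mech{M}}$), and then exploiting the group privacy property of $\rho$-zCDP. The crucial ingredient I would establish first is that if $\mech{M}$ is $\rho$-zCDP and $\ham{\vect{X}}{\vect{Y}} = k$, then $\renyi{\alpha}{\mech{M}(\vect{X})}{\mech{M}(\vect{Y})} \leq \rho k^2 \alpha$ for every $\alpha > 1$; this follows by chaining the one-step bound along a path of $k$ neighboring datasets, using the convexity/composition behaviour of Renyi divergence inherited from $\rho$-zCDP. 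Letting $\alpha \to 1^+$ and using $D_\alpha \to D_{\mathrm{KL}}$ yields the key quadratic estimate
\begin{equation*}
    \kl{\mech{M}(\vect{X})}{\mech{M}(\vect{Y})} \leq \rho\, \ham{\vect{X}}{\vect{Y}}^2.
\end{equation*}

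\textbf{Le Cam matching ($N=2$).} Here I would simply observe
\begin{equation*}
    \frac{1}{2}\sum_{i=1}^{2}\Prob_{\mech{M}}\p{\Psi\p{\mech{M}(\vect{X}_i)} \neq i}
    \geq \frac{1}{2}\p{1 - \tv{\mech{M}(\vect{X}_1)}{\mech{M}(\vect{X}_2)}}
\end{equation*}
via \Cref{fact:lecamslemma} (applied to the output distributions of the mechanism and the test $\Psi$), and then control TV by Pinsker's inequality $\mathrm{TV} \leq \sqrt{\mathrm{KL}/2}$, followed by the quadratic estimate above. This gives exactly $\frac{1}{2}(1 - \sqrt{\rho/2}\,\ham{\vect{X}_1}{\vect{X}_2})$.

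\textbf{Fano matching.} I would apply \Cref{fact:fanoslemma} with $\mathbb{Q} = \frac{1}{N}\sum_{j=1}^{N} \mech{M}(\vect{X}_j)$, the natural mixture over the pushforward distributions. Convexity of KL in its second argument gives
\begin{equation*}
    \frac{1}{N}\sum_{i=1}^{N} \kl{\mech{M}(\vect{X}_i)}{\mathbb{Q}} \leq \frac{1}{N^2} \sum_{i=1}^{N}\sum_{j=1}^{N} \kl{\mech{M}(\vect{X}_i)}{\mech{M}(\vect{X}_j)},
\end{equation*}
and applying the quadratic group-privacy KL bound term by term yields the claimed numerator $\frac{\rho}{N^2}\sum_{i,j}\ham{\vect{X}_i}{\vect{X}_j}^2$.

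\textbf{Main obstacle.} The only nontrivial step is justifying the group privacy estimate $\kl{\mech{M}(\vect{X})}{\mech{M}(\vect{Y})} \leq \rho\,\ham{\vect{X}}{\vect{Y}}^2$ with the correct quadratic dependence on the Hamming distance. Unlike pure DP, where group privacy scales linearly in $k$, the zCDP version picks up a $k^2$ factor because $\rho$-zCDP allows the Renyi-$\alpha$ divergence to grow linearly in $\alpha$: stepping along a path of $k$ neighbors requires a triangle-like inequality for Renyi divergence that consumes a factor of $\alpha$ at each step, producing the square. I would either cite the standard zCDP group privacy lemma from \citet{bun2016concentrated} or reproduce a short proof via iteration of the shifted Renyi triangle inequality. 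Everything else reduces to plugging the KL bound into \Cref{fact:lecamslemma} and \Cref{fact:fanoslemma}, so no further technical work should be needed.
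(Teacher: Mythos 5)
Your proposal is correct and follows essentially the same route as the paper: both invoke the $\rho$-zCDP group privacy bound $\renyi{\alpha}{\mech{M}(\vect{X})}{\mech{M}(\vect{Y})} \leq \rho\, \ham{\vect{X}}{\vect{Y}}^2 \alpha$ from \citet[Proposition 27]{bun2016concentrated}, translate it into a KL bound, and then plug into Le Cam (via Pinsker) and Fano (via the mixture anchor $\mathbb{Q} = \frac{1}{N}\sum_j \mech{M}(\vect{X}_j)$ and convexity of KL in its second argument). The only cosmetic difference is that you take the limit $\alpha \to 1^+$ once to get $\kl{\mech{M}(\vect{X})}{\mech{M}(\vect{Y})} \leq \rho\,\ham{\vect{X}}{\vect{Y}}^2$ up front, whereas the paper carries $\alpha$ through to the final lower bound and takes the supremum over $\alpha \in (1,\infty)$ there; these are logically equivalent.
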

Note that similarity functions can also be easily built for the more general notion of $(\xi, \rho)$ - concentrated differential privacy by swapping the group privacy property for its correct variant (see \cite{bun2016concentrated}). We do not include the results about $(\xi, \rho)$-concentrated differential in this article because our objective is more to illustrate the versatility of our framework rather than to build a complete catalogue.
%\RG{Si on veut mettre l'accent sur ces preuves on peut garder la structure actuelle, mais on pourrait aussi réfléchir à passer les preuves en annexe pour arriver directement au fait qu'on s'intéresse à chercher des couplages, puis aux exemples.}

\section{Lower Bound via Couplings}
\label{sec:coupling}

The transport problem \eqref{label:eqkp} can be studied either theoretically \citep{santambrogio2015optimal} or numerically \citep{peyre2019computational} in order to give the best lower bounds that our technique permits. However, identifying an optimal coupling is out of the scope of this article. We here provide coupling constructions that are sufficient to exhibit useful lower bounds. %Historically, the coupling method \cite{lindvall2002lectures} has met a lot of success in finding convergence results for Markov chains and Martingales. Nowadays, its uses have diversified, notably in information theory and optimal transport \cite{li2019pairwise,li2021unified}.
%\subsection{Near optimal couplings}

Most of the similarity functions expressed in \Cref{th:metatheorem1} yield lower bounds that are based on or further lower-bounded by expressions involving
%Most of the expressions of \Cref{th:metatheorem1} are based on or can be reduced to
the quantities
\begin{equation*}
    \E_{(\vect{X}_1, \dots, \vect{X}_N) \sim \mathbb{Q}}\left(g\p{\ham{\vect{X}_i}{ \vect{X}_j}}\right)
\end{equation*} 
for a coupling $\mathbb{Q} \in \Pi(\Prob_1, \dots, \Prob_N)$ where $g$ is a fixed non-increasing function. Hence, finding reasonably good lower bounds can be achieved by finding a coupling that \emph{minimizes} the expected pairwise Hamming distance between the marginals. 

As a proxy, we first aim at maximizing the probabilities of pairwise equalities between all the marginals simultaneously. We then control the Hamming distance by observing that when $\vect{X}_i = \vect{X}_j$, $\ham{\vect{X}_i}{ \vect{X}_j} = 0$ and otherwise, $\ham{\vect{X}_i}{ \vect{X}_j} \leq n$.
It is known \citep{lindvall2002lectures} that if $\mathbb{Q} \in \Pi(\Prob_1, \dots, \Prob_N)$, the disagreement probabilities (i.e. the probability that two marginal random variables are not equal) between the marginals satisfy
\begin{equation}
\label{eq:disagrementlb}
    \forall i, j, \quad \tv{\Prob_i}{\Prob_j} \leq \Prob_{(\vect{X}_1, \dots, \vect{X}_N) \sim \mathbb{Q}}\left(\vect{X}_i \neq \vect{X}_j\right) \;.
\end{equation}
%Indeed, for any measurable set $S$ ,
%\begin{equation*}
%\begin{aligned}
%    \Prob_i(S) - \Prob_j(S) &= \Prob_{(\vect{X}_1, \dots, \vect{X}_N)}(\vect{X}_i \in S) - \Prob_{(\vect{X}_1, \dots, \vect{X}_N)}(\vect{X}_j \in S) \\
%    &= \Prob_{(\vect{X}_1, \dots, \vect{X}_N)}(\vect{X}_i \in S, \vect{X}_i = \vect{X}_j) + \Prob_{(\vect{X}_1, \dots, \vect{X}_N)}(\vect{X}_i \in S, \vect{X}_i \neq \vect{X}_j)\\&- \Prob_{(\vect{X}_1, \dots, \vect{X}_N)}(\vect{X}_j \in S, \vect{X}_i = \vect{X}_j) - \Prob_{(\vect{X}_1, \dots, \vect{X}_N)}(\vect{X}_j \in S, \vect{X}_i \neq \vect{X}_j) \\
%    &\leq \Prob_{(\vect{X}_1, \dots, \vect{X}_N)}(\vect{X}_i \in S, \vect{X}_i \neq \vect{X}_j)- \Prob_{(\vect{X}_1, \dots, \vect{X}_N)}(\vect{X}_j \in S, \vect{X}_i \neq \vect{X}_j) \\
%    &\leq \Prob_{(\vect{X}_1, \dots, \vect{X}_N) \sim \mathbb{Q}}\left(\vect{X}_i \neq \vect{X}_j\right) \;.
%\end{aligned}
%\end{equation*}
%The rest of this subsection shows couplings that are optimal if we try to maximize pairwise equalities or near-optimal in the same sense, depending on the number of marginals.
%Assuming that the coupling is built, we have the following result on the expected Hamming distances: %\RG{pourquoi appeler cela "utility" ?}:
A natural question is whether this lower bound is achievable by a coupling simultaneously for all pairs of marginals. When there are only two marginals (i.e. $N=2$), a classical construction (see \citet{lindvall2002lectures}) answers this question positively:
\begin{fact}[Maximal coupling]
\label[fact]{fact:optimalcoupling}
Let $\Prob_1$ and $\Prob_2$ be two probability distributions on $\set{X}^n$ that share the same $\sigma$-algebra. There exists a coupling $\pi^\infty(\Prob_1, \Prob_2)\in \Pi(\Prob_1, \Prob_2)$ (which is a distribution on $\p{\set{X}^n}^2$), %\marginpar{\RG{"we call" or is it classical?}} 
called a \emph{maximal} coupling, such that
%\begin{itemize}
 %   \item \begin{equation*}
 \begin{align*}
     \Prob_{(X_1, X_2) \sim \pi^\infty(\Prob_1, \Prob_2)}(X_1 \neq X_2) &= \tv{\Prob_1}{\Prob_2} \;,\\
%\end{equation*}
%\item \begin{equation*}
    \forall S \text{ measurable }, \quad \Prob_{(X_1, X_2) \sim \pi^\infty(\Prob_1, \Prob_2)}(X_1 \in S) &= \Prob_1  (X_1 \in S)\;,\\
%\end{equation*}
%\item \begin{equation*}
    \forall S \text{ measurable }, \quad \Prob_{(X_1, X_2) \sim \pi^\infty(\Prob_1, \Prob_2)}(X_2 \in S) &= \Prob_2  (X_2 \in S) \;.
%\end{equation*}
\end{align*}
%\end{itemize}
\end{fact}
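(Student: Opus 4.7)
The plan is to construct the maximal coupling explicitly via the Radon--Nikodym decomposition of $\Prob_1$ and $\Prob_2$ against a common dominating measure, and then verify the three claimed identities by direct computation. The driving idea is to transport along the diagonal the maximum amount of mass compatible with both marginals, and to couple the leftover mass independently.

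First I would fix a $\sigma$-finite dominating measure $\mu$ (for instance $\mu = \tfrac{1}{2}(\Prob_1 + \Prob_2)$), set $f_i \eqdef d\Prob_i/d\mu$ for $i=1,2$, and introduce $m \eqdef \min(f_1,f_2)$. The classical identity $\int m \, d\mu = 1 - \tv{\Prob_1}{\Prob_2}$ (which follows from $\max - \min = |f_1-f_2|$ together with $\tv{\Prob_1}{\Prob_2} = \tfrac12 \int |f_1-f_2|\, d\mu$) gives a mass of $p \eqdef 1 - \tv{\Prob_1}{\Prob_2}$ that can be transported along the diagonal, and a leftover mass of $\tv{\Prob_1}{\Prob_2}$ to be coupled independently.

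Concretely, denote by $\nu, \mu_1, \mu_2$ the probability measures with densities $m/p$, $(f_1-m)/(1-p)$ and $(f_2-m)/(1-p)$ with respect to $\mu$ (ignoring degenerate factors when $p \in \{0,1\}$), and define
\[
\pi^{\infty}(\Prob_1,\Prob_2) \eqdef p \cdot (\mathrm{id},\mathrm{id})_{\#}\nu \;+\; (1-p)\cdot (\mu_1\otimes\mu_2).
\]
The first term is the pushforward of $\nu$ by the diagonal embedding, hence concentrates on $\{(x,x) : x \in \set{X}^n\}$; the second is an independent product. The marginal conditions follow from a one-line Radon--Nikodym computation:
\[
\Prob_{(X_1,X_2)\sim \pi^{\infty}}(X_1\in S) \;=\; \int_S m\, d\mu + \int_S (f_1-m)\, d\mu \;=\; \int_S f_1\, d\mu \;=\; \Prob_1(S),
\]
and symmetrically for $X_2$. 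The disagreement probability is obtained by observing that on the first branch $X_1 = X_2$ holds by construction, while on the second branch the densities $(f_1-m)$ and $(f_2-m)$ have disjoint supports (since $m = \min(f_1,f_2)$), so $X_1 \neq X_2$ almost surely. This yields $\Prob(X_1\neq X_2) = 1-p = \tv{\Prob_1}{\Prob_2}$, saturating the generic lower bound~\eqref{eq:disagrementlb}.

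The main obstacle is measure-theoretic bookkeeping rather than probabilistic content: one must ensure that the diagonal is a measurable subset of $(\set{X}^n)^2$ (which holds whenever $\set{X}^n$ carries a separable structure making the diagonal Borel, a benign hypothesis implicit in the paper), and handle the corner cases $p=0$ (the coupling reduces to $\mu_1\otimes\mu_2$) and $p=1$ (in which case $\Prob_1 = \Prob_2$ and the coupling is the diagonal pushforward). Everything else is Fubini combined with the definition of $m$.
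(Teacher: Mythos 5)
The paper cites this as a \emph{Fact} from \citet{lindvall2002lectures} and gives no proof of its own, so there is no in-paper argument to compare against. Your construction is the standard maximal coupling argument that one finds in that reference: decompose $f_i = d\Prob_i/d\mu$, put $m=\min(f_1,f_2)$, transport the overlapping mass $p = \int m\,d\mu = 1-\tv{\Prob_1}{\Prob_2}$ along the diagonal via the pushforward $(\mathrm{id},\mathrm{id})_{\#}$ of the renormalized minimum, and couple the residuals independently. The marginal check and the disjoint-support argument (since $\{f_1>m\}$ and $\{f_2>m\}$ are $\mu$-disjoint) are exactly right, and you correctly flag the only genuine hypothesis lurking in the background, namely that the diagonal of $(\set{X}^n)^2$ must be measurable, which is benign in the paper's setting. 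This is correct and complete; it fills in the proof the paper delegates to the literature.
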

This construction unfortunately does not generically scale to more than two marginals, even though on simple examples, couplings can be built that still match the lower bound \eqref{eq:disagrementlb} for any pair of marginals.
\begin{example}[Bernoulli optimal coupling]
\label[example]{ex1} Given 
$\Prob_i = \mathcal{B}(p_i)$, $1 \leq i \leq N$
%$\p{\mathcal{B}(p_i)}_{i \in \intslice{1}{N}}$ 
a family of Bernoulli distributions and $U \sim \mathcal{U}([0, 1])$ a uniformly distributed variable on $[0,1]$, 
the random vector $(X_1,\ldots,X_N)$ defined by
%we can build the coupling 
$X_i \eqdef \Ind_{[0, p_i)}(U)$ is distributed according to a coupling $Q \in\Pi(\Prob_1,\dots,\Prob_N)$, and for every $i,j$
%. We then have that 
\begin{equation*}
    \Prob(X_i \neq X_j) = |p_i - p_j| = \tv{\mathcal{B}(p_i)}{\mathcal{B}(p_j)} \;.
\end{equation*}
\end{example}
There are however examples for which it is provably impossible to build couplings that match the lower bound \eqref{eq:disagrementlb} for any pair of marginals.
\begin{example}[A counterexample]
\label[example]{ex2}
    Let $X_1 \sim \mathcal{U}(\{-1, 0 \})$, $X_2 \sim \mathcal{U}(\{0, 1 \})$ and $X_3 \sim \mathcal{U}(\{ 1,  -1\})$, and let $\Prob$ be a coupling between $X_1, X_2$ and $X_3$. We have that,
    \begin{equation*}
        \Ind_{X_1 \neq X_2} + \Ind_{X_2 \neq X_3} + \Ind_{X_3 \neq X_1} \geq 2
    \end{equation*}
    and as a consequence on $\Prob$,
    \begin{equation*}
    \begin{aligned}
        \Prob(X_1 \neq X_2) + &\Prob(X_2 \neq X_3) + \Prob(X_3 \neq X_1) \geq 2  \\
        &> \tv{X_1}{X_2} + \tv{X_2}{X_3} + \tv{X_3}{X_1}\;,
    \end{aligned}
    \end{equation*}
    which proves that at least one of the disagreement probabilities is strictly bigger than the corresponding total variation.
\end{example}
Recent constructions based on Poisson point processes allow in general, for any number of marginals $N$, to match the lower bound \eqref{eq:disagrementlb} up to a factor $2$.
\begin{fact}[{Near optimal coupling of multiple distributions \citep{angel2019pairwise}}]
\label[fact]{fact:nearoptimalcoupling}
Let $\Prob_1, \dots, \Prob_N$ be $N$ distributions on the same measurable set. There exists a coupling $\mathbb{Q} \in \Pi\p{\Prob_1, \dots, \Prob_N}$ such that
\begin{equation*}
    \forall i, j \in \intslice{1}{N}, \quad \Prob_{\p{X_1, \dots, X_N} \sim \mathbb{Q}} \p{X_i \neq X_j} \leq  \frac{2 \tv{\Prob_i}{\Prob_j}}{1 + \tv{\Prob_i}{\Prob_j}} \;.
\end{equation*}
\end{fact}

We give a small note about this coupling in \Cref{notecouplings} that presents a simplified view on its construction.

In the rest of this article, the notation $\pi^\infty(\Prob_1, \dots, \Prob_N)$ refers to a coupling that satisfies this condition. When there are only two distributions, it refers to the construction of \Cref{fact:optimalcoupling}.
This factor $2$ is not a problem for minimax theory, since it is a common practice to overlook the constants by looking at rates of convergence. However, for some more precise applications, working on more specific couplings may improve our results.
With either coupling constructions, the lower bounds of \Cref{th:metatheorem1} can be controlled with the following straighforward lemma:
\begin{lemma}
\label[lemma]{lemma:accmax}
Let $\Prob_1, \dots, \Prob_N$ be $N$ distributions on $\set{X}^n$ and $\mathbb{Q} \in \Pi(\Prob_1, \dots, \Prob_N)$. Consider $1 \leq i,j \leq N$ and denote $\Delta_{i, j} := \Prob_{(\vect{X}_1, \dots, \vect{X}_N) \sim \mathbb{Q}}\left(\vect{X}_i \neq \vect{X}_j\right)$. We have 
\begin{equation*}
    \begin{aligned}
        %\forall i, j, \quad &
        \E_{(\vect{X}_1, \dots, \vect{X}_N) \sim \mathbb{Q}} \p{\ham{\vect{X}_i}{\vect{X}_j}} &\leq n \Delta_{i, j} \\
        %\forall i, j, \quad &
        \E_{(\vect{X}_1, \dots, \vect{X}_N) \sim \mathbb{Q}} \p{\ham{\vect{X}_i}{\vect{X}_j}^2 } &\leq n^2 \Delta_{i, j} \\
        %\forall i, j, \quad &
        \E_{(\vect{X}_1, \dots, \vect{X}_N) \sim \mathbb{Q}} \p{ e^{-\epsilon \ham{\vect{X}_i}{\vect{X}_j}} } &\geq 1 - (1-e^{-n\epsilon}) \Delta_{i, j} \;.
    \end{aligned}
\end{equation*}
\end{lemma}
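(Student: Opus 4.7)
The plan is to prove all three inequalities through a single case-analysis on the event $\{\vect{X}_i = \vect{X}_j\}$ versus its complement, exploiting the fact that the Hamming distance on $\set{X}^n$ takes values in $\{0, 1, \dots, n\}$, with $0$ attained exactly on the diagonal.

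First I would observe that for any pair $(\vect{X}_i, \vect{X}_j)$ drawn from the coupling $\mathbb{Q}$, the random variable $H := \ham{\vect{X}_i}{\vect{X}_j}$ satisfies $H = 0$ on the event $\{\vect{X}_i = \vect{X}_j\}$ and $H \in [1, n]$ on the complement. Decomposing the expectation according to this partition gives, for any measurable $f : \{0, \dots, n\} \to \R$,
\begin{equation*}
    \E_{\mathbb{Q}}[f(H)] = f(0) \cdot (1 - \Delta_{i,j}) + \E_{\mathbb{Q}}\bigl[f(H) \cdot \Ind_{\vect{X}_i \neq \vect{X}_j}\bigr].
\end{equation*}
The three claims then follow by plugging in $f(h) = h$, $f(h) = h^2$, and $f(h) = e^{-\epsilon h}$ respectively, and using the crude bounds $h \leq n$, $h^2 \leq n^2$, and $e^{-\epsilon h} \geq e^{-n\epsilon}$ on the event where $H \geq 1$.

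Concretely, for the first inequality, $\E_{\mathbb{Q}}[H] = \E_{\mathbb{Q}}[H \cdot \Ind_{\vect{X}_i \neq \vect{X}_j}] \leq n \cdot \Prob_{\mathbb{Q}}(\vect{X}_i \neq \vect{X}_j) = n \Delta_{i,j}$. The second is identical with $H^2 \leq n^2$ in place of $H \leq n$. For the third, $e^{-\epsilon H} \geq e^{-n\epsilon}$ whenever $H \geq 1$ and $e^{-\epsilon H} = 1$ when $H = 0$, so
\begin{equation*}
    \E_{\mathbb{Q}}\bigl[e^{-\epsilon H}\bigr] \geq (1 - \Delta_{i,j}) + e^{-n\epsilon} \Delta_{i,j} = 1 - (1 - e^{-n\epsilon}) \Delta_{i,j}.
\end{equation*}

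There is no real obstacle here: the lemma is a routine consequence of the fact that the Hamming distance is bounded by $n$ and vanishes exactly on the diagonal. The only mild subtlety is to check measurability of the event $\{\vect{X}_i = \vect{X}_j\}$, which holds because the diagonal is measurable in the product $\sigma$-algebra under mild regularity assumptions on $\set{X}$ (implicit throughout the paper), so that $\Delta_{i,j}$ is well defined.
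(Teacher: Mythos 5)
Your proof is correct and matches exactly the approach the paper signals just before stating the lemma (``we then control the Hamming distance by observing that when $\vect{X}_i = \vect{X}_j$, $\ham{\vect{X}_i}{\vect{X}_j} = 0$ and otherwise, $\ham{\vect{X}_i}{\vect{X}_j} \leq n$''); the paper itself omits the formal proof, calling the lemma straightforward. Your case split on $\{\vect{X}_i = \vect{X}_j\}$ with the crude bounds $H \leq n$, $H^2 \leq n^2$, $e^{-\epsilon H} \geq e^{-n\epsilon}$ is precisely that intended argument.
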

%\begin{proof}
% The inequalities follow from the observation that $\ham{\vect{X}_i}{\vect{X}_j} = 0$ with probability $1-\Delta_{i, j}$, and on the complementary event, which is of probability $\Delta_{i,j}$, $\ham{\vect{X}_i}{\vect{X}_j}$ can be upper bounded by $n$.
%\end{proof}
Note that $\Delta_{i, j}$ directly depends on the coupling construction, but that with any of the ones presented above, we always have $\forall i, j, \  \Delta_{i, j} \leq 2 \tv{\Prob_i}{\Prob_j}$.

When the distributions that we are trying to couple are product distributions (i.e. $\Prob_1 = \prob_1^{\otimes n}, \dots ,\\ \Prob_N = \prob_N^{\otimes n}$),
we can notice that any coupling $\mathbb{q} \in \Pi\p{\prob_1, \dots, \prob_N}$ induces a coupling $\mathbb{q}^{\otimes n} \in \Pi\p{\Prob_1, \dots, \Prob_N}$.
Under this coupling, the Hamming distances between the pairs of marginals follow  binomial distributions. For the rest of this article, we define the product (near) optimal coupling 
\begin{equation*}
    \pi^\otimes(\prob_1^{\otimes n},\dots, \prob_N^{\otimes n}) \eqdef \pi^\infty(\prob_1, \dots, \prob_N)^{\otimes n} \;.
\end{equation*}
Straightforward computations yield the following lemma.
\begin{lemma}
\label[lemma]{lemma:accproduct}
Let $\Prob_1 = \prob_1^{\otimes n}, \dots, \Prob_N = \prob_N^{\otimes n}$ be $N$ \emph{product} distributions on $\set{X}^n$ and $\mathbb{q} \in \Pi(\prob_1, \dots,\\ \prob_N)$. Consider any $1 \leq i, j \leq N$ and denote\footnote{not to be confused with the Kronecker symbol.} $\delta_{i, j} := \Prob_{(X_1, \dots, X_N) \sim \mathbb{q}}\left(X_i \neq X_j\right)$. We have:
\begin{equation*}
    \begin{aligned}
        %\forall i, j, \quad &
        \E_{(\vect{X}_1, \dots, \vect{X}_N) \sim \mathbb{q}^{\otimes n}} \p{ \ham{\vect{X}_i}{\vect{X}_j}}  &= n \delta_{i, j} \\
        %\forall i, j, \quad &
        \E_{(\vect{X}_1, \dots, \vect{X}_N) \sim \mathbb{q}^{\otimes n}} \p{\ham{\vect{X}_i}{\vect{X}_j}^2 } &= n^2 \delta_{i, j}^2 + n \delta_{i, j} (1-\delta_{i, j}) 
        \leq n^2 \delta_{i,j}^2+n \delta_{i,j}
        \\
        %\forall i, j, \quad &
        \E_{(\vect{X}_1, \dots, \vect{X}_N) \sim \mathbb{q}^{\otimes n}} \p{e^{-\epsilon \ham{\vect{X}_i}{\vect{X}_j}}} &= \p{1 - (1-e^{-\epsilon}) \delta_{i, j}}^{n} \geq e^{-n \epsilon \delta_{i, j}}\;.
    \end{aligned}
\end{equation*}
\end{lemma}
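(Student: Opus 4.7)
The plan is to reduce all three identities to standard facts about the binomial distribution. The key structural observation is that the product coupling $\mathbb{q}^{\otimes n}$ is obtained by drawing $n$ i.i.d.\ tuples $(X_1^{(k)},\dots,X_N^{(k)}) \sim \mathbb{q}$ for $k = 1,\dots,n$ and setting $\vect{X}_i = (X_i^{(1)},\dots,X_i^{(n)})$. Thus if I define the indicators $B_k := \Ind_{X_i^{(k)} \neq X_j^{(k)}}$ for $k=1,\dots,n$, they are i.i.d.\ Bernoulli variables with parameter $\delta_{i,j}$, and by definition of the Hamming distance one has $\ham{\vect{X}_i}{\vect{X}_j} = \sum_{k=1}^n B_k$, which therefore follows a binomial distribution $\mathrm{Bin}(n, \delta_{i,j})$.

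Once this representation is in place, the first identity follows immediately from linearity of expectation. For the second identity, I would use the classical decomposition $\E[S^2] = \mathrm{Var}(S) + (\E[S])^2$ applied to $S \sim \mathrm{Bin}(n,\delta_{i,j})$, which gives $n \delta_{i,j}(1-\delta_{i,j}) + n^2 \delta_{i,j}^2$; the stated upper bound is then simply $1 - \delta_{i,j} \leq 1$.

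For the third identity, I would compute the moment generating function using the independence of the $B_k$'s:
\begin{equation*}
    \E\Bigl[e^{-\epsilon \sum_k B_k}\Bigr] = \prod_{k=1}^n \E\bigl[e^{-\epsilon B_k}\bigr] = \bigl((1-\delta_{i,j}) + \delta_{i,j} e^{-\epsilon}\bigr)^n = \bigl(1 - (1-e^{-\epsilon})\delta_{i,j}\bigr)^n.
\end{equation*}
For the final lower bound, I would invoke convexity of $x \mapsto e^{-\epsilon x}$ on $[0,1]$, which yields $e^{-\epsilon \delta_{i,j}} \leq \delta_{i,j}\, e^{-\epsilon} + (1-\delta_{i,j}) = 1 - (1-e^{-\epsilon})\delta_{i,j}$; raising both positive sides to the $n$-th power gives the result.

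There is no real obstacle here: the entire argument rests on recognizing the binomial structure induced by the tensorization of $\mathbb{q}$, after which each line is a routine moment or MGF computation, with the last inequality requiring only a one-line convexity argument.
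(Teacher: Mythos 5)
Your proof is correct, and it fills in exactly the ``straightforward computations'' the paper alludes to without writing out: the tensorized coupling makes $\ham{\vect{X}_i}{\vect{X}_j}$ a $\mathrm{Bin}(n,\delta_{i,j})$ random variable, after which the three claims are the mean, second moment, and MGF of a binomial, with the final inequality being Jensen for the convex map $x \mapsto e^{-\epsilon x}$. Nothing to add.
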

%\begin{proof}
% The first equality is simply the expectation of a binomial. The last equality uses the fact that the expectation of a product of independent random variables is the product of the expectations, and the property of the exponential with respect to the multiplication. The inequality that follows is due to the convexity of the exponential, and is included because it can be more handy to manipulate. The equality in the middle is based on the observation that for any random variable in $L_2$, $\mathbb{V}(X) = \E(X^2) - \E(X)^2$, applied to a binomial distribution.
%\end{proof}
Note that $\delta_{i, j}$ directly depends on the coupling construction, but that with any of the ones presented above (applied to $\prob_1, \dots, \prob_N$), we always have $\forall i, j, \quad \delta_{i, j} \leq 2 \tv{\prob_i}{\prob_j}$.

%For now, we do not say if this construction is better than the one that consists in coupling the full distributions directly. Simply note that it is another construction that we have the right to consider under the restrictive condition of working with product distributions.
\paragraph{Putting the pieces together.}
Each of the coupling constructions presented above has its own merits %. They will all prove to be useful in the sequel.
and can be used to establish the quantitative lower-bounds establishing
%\paragraph{\CL{Quantitative lower-bounds.}}
%The results of \Cref{sec:kpreduction} and of this section are used together to prove
 \Cref{th:lecamdp}, \Cref{th:lecamcdp}, \Cref{th:fanodp} and \Cref{th:fanocdp} (see details in~\Cref{sec:mainproofs}) 
 as well as the fully worked-out examples given at the end of the introduction (see details in
%\paragraph{\CL{Fully worked-out examples.}}
%The study of the simple and fully worked-out examples was moved to
~\Cref{sec:applications}, including a discussion on different regimes where privacy induces an estimation overhead).

\paragraph{Discussion: beyond the i.i.d. structure?}

Note that the techniques presented in this article can be used in non i.i.d. setups as well, by emulating the structure of the probability space in the coupling construction.
For instance, consider an $m \times m$ stochastic matrix $K$, the kernel of a Markov chain kernel $K$ on $m$ states $\{ 1, \dots, m\}$. The column $i$ represents the vector $P(.|i)$ that gives the conditional probabilities of ending in the different states, knowing that the current state is $i$. We assume that initial distribution of the chain is uniform on the states. The objective is to build a differentially private test of $K$ based on the observation $(X_t)_{1 \leq t \leq n+1}$, $X_t \in \intslice{1}{m}$ of a trajectory of length $n+1$.
Let us illustrate this with $m=2$: consider the two kernels
$
K =
\begin{pmatrix}
k_{1, 1} & k_{1, 2} \\
k_{2, 1} & k_{2, 2}
\end{pmatrix}
$
and 
$
L =
\begin{pmatrix}
l_{1, 1} & l_{1, 2} \\
l_{2, 1} & l_{2, 2}
\end{pmatrix}
$, and their associated Markov chains $M_K$ and $M_L$.
We build a Markov kernel $Q$ on the set of pairs of states of $M_K$ and $M_L$, such that for any $x_K$ state of $M_K$ and any $x_L$ state of $M_L$, $Q((., .)|(x_K, x_L))$ is a coupling between $K(.| x_K)$ and $L(.| x_L)$. Let us take $Q((1, 1) | (x_K, x_L)) = \min (k_{1, x_K}, l_{1, x_L})$, $Q((2, 2) | (x_K, x_L)) = \min (k_{2, x_K}, l_{2, x_L})$, $Q((1, 2) | (x_K, x_L)) = k_{1, x_K} - l_{1, x_L}$ if $k_{1, x_K} > l_{1, x_L}$ or $0$ otherwise, and $Q((2, 1) | (x_K, x_L)) = l_{1, x_L} - k_{1, x_K}$ if $l_{1, x_K} > k_{1, x_L}$ or $0$ otherwise. We consider $M_Q$, the Markov chain that starts with the uniform distribution on the pairs of states of $M_K$ and $M_L$, and has transition kernel $Q$. We observe that the probability distribution over pairs of sequences of length $n+1$ generated by $M_Q$ is a coupling between the corresponding distributions over single sequences associated to $M_1$ and $M_2$. Furthermore, in general, the structure of the probability space is Markovian and is not a product one (meaning that the generated trajectories are not i.i.d.).

By integrating Le Cam matching similarity function (\Cref{th:metatheoremdp} and \Cref{th:metatheoremcdp}) against this distribution on the pairs of trajectories, we obtain the fowling lower-bounds : Any $\epsilon$-DP test that tries to discriminate $M_K$ from $M_L$ must have a type $1$ or a type $2$ error at least equal to $\frac{1}{2}(1 - (1 - e^{- \epsilon}) \alpha)^n$, where $\alpha = \max \{|k_{1, 1} - l_{1, 1}|, |k_{1, 1} - l_{1, 2}|, |k_{2, 1} - l_{1, 1}|, |k_{2, 1} - l_{2, 1}| \}$. Similarly, any $\rho$-zCDP test that tries to discriminate $M_L$ from $M_L$ must have a type $1$ or a type $2$ error at least equal to $\frac{1}{2}(1 - n \alpha \sqrt{\rho / 2})$.

When there are more than two Markov chains to test (say $N$), a similar coupling can be built by building a Markov chain on the $N$-tuples of states of the different Markov chains. The technicality is that one has to use \Cref{fact:nearoptimalcoupling} instead of \Cref{fact:optimalcoupling} for coupling the transition probabilities. When there are more than two states, the construction is the same. The expressions of the total variations between the pairs of transition probabilities can however be more difficult.

\section{Near Optimal Private Maximum Likelihood}
\label{sec:dpsgml}

In the different models presented in \Cref{sec:applications} and for many other parametric models, the statistician typically would like to consider the maximum likelihood estimator. 
Given $X_1, \dots, X_n$ i.i.d. random variables of distribution $\prob_{\theta^*}$, the maximum likelihood estimator has value
\begin{equation}
\label{eq:ml}
    \hat{\theta}_{\mathrm{ML}} \in \argmax_{\theta \in \Theta} \left\{ l(\theta) \eqdef \frac{1}{n} \sum_{i=1}^n f(X_i, \theta) \right\}\;,
\end{equation}
where $f$ is the log-likelihood.
The parametric model with respect to a reference measure $\mu$ is thus
\begin{equation*}
    \forall X, \quad \frac{d\prob_\theta}{d \mu}(X) \eqdef e^{f(X, \theta)}, \quad\quad \theta \in \Theta \;,
\end{equation*}
where $ \frac{d\prob_\theta}{d \mu}$ is the Radon-Nikodym density of $\prob_\theta$ with respect to $\mu$ and $\Theta$ is often a closed, convex subset of $\R^d$ with nonempty interior. This setup covers, in particular, exponential families \citep{van2000asymptotic} with $f(X, \theta) = \theta^T T(X) - \ln(Z(\theta))$ associated with some statistic $T$ and normalization factor $Z(\theta)$. This section first presents a lower bound on the minimax risk for the private estimation in such parametric models and then studies the optimality properties of the Differentially Private Stochastic Gradient Langevin Dynamics (DP-SGLD) of \citet{ryffel2022differential} for this specific task based on the existing upper bounds for this private convex optimizer.

\subsection{On the regularity of $f$ and the estimation complexity}

First, we may assume that the parametric model is not degenerate in the sense that $f$ satisfies
\begin{equation}\label{eq:DefIsotropicCentrality}
    \forall \theta \in \Theta, \quad \int \nabla_\theta f(X, \theta) d\prob_\theta(X) = 0 \;.
\end{equation}
This hypothesis is for instance satisfied in the Gaussian model presented previously. Indeed, in this case $\forall X, \nabla_\theta f(\theta + X, \theta) + \nabla_\theta f(\theta - X, \theta) = 0$ and $\forall X, \frac{d\prob_\theta}{d \mu}(\theta + X) = \frac{d\prob_\theta}{d \mu}(\theta - X)$. This hypothesis is more generally satisfied in the broader model of the exponential families (see \citet[Théorème 4.10]{ulm2019s}).
Under such hypothesis, we have the following lemma which will allow to leverage \Cref{lemma:minimaxkl}:
\begin{lemma}
\label[lemma]{lem3}
If $(\Prob_\theta)_{\theta \in \Theta}$ satisfies the property~\eqref{eq:DefIsotropicCentrality} and if $f$ is concave and $\beta$-smooth in its second argument, then
\begin{equation*}
    \forall \theta_1, \theta_2 \in \Theta, \kl{\prob_{\theta_1}}{\prob_{\theta_2}} \leq \frac{\beta}{2} \|\theta_2 - \theta_1\|^2 \;.
\end{equation*}
\end{lemma}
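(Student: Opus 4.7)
The plan is to express the KL divergence as an expectation of the difference of log-likelihoods and then to apply the quadratic upper bound that $\beta$-smoothness provides on $f(X,\cdot)$, using the centrality assumption~\eqref{eq:DefIsotropicCentrality} to kill the linear term.

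First, I would write, using the Radon--Nikodym characterization of the parametric model,
\begin{equation*}
    \kl{\prob_{\theta_1}}{\prob_{\theta_2}} = \int \ln\frac{d\prob_{\theta_1}}{d\prob_{\theta_2}}(X)\, d\prob_{\theta_1}(X) = \int \bigl(f(X,\theta_1) - f(X,\theta_2)\bigr)\, d\prob_{\theta_1}(X).
\end{equation*}
The next step is to use the fact that if $f(X,\cdot)$ is $\beta$-smooth (its gradient in $\theta$ is $\beta$-Lipschitz), then the standard descent inequality yields pointwise in $X$
\begin{equation*}
    f(X,\theta_1) - f(X,\theta_2) \leq -\nabla_\theta f(X,\theta_1)^\top(\theta_2-\theta_1) + \frac{\beta}{2}\|\theta_2-\theta_1\|^2.
\end{equation*}
(Concavity is not required for this half of the smoothness inequality, but is compatible with the hypothesis.)

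Integrating this inequality against $\prob_{\theta_1}$ and using the linearity of the expectation, the quadratic term is constant in $X$ and the linear term becomes
\begin{equation*}
    -\left(\int \nabla_\theta f(X,\theta_1)\, d\prob_{\theta_1}(X)\right)^\top (\theta_2-\theta_1) = 0
\end{equation*}
by the centrality assumption~\eqref{eq:DefIsotropicCentrality}. Combining yields the claimed bound $\kl{\prob_{\theta_1}}{\prob_{\theta_2}} \leq \frac{\beta}{2}\|\theta_2-\theta_1\|^2$.

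There is essentially no serious obstacle: the argument is a two-line application of the descent lemma combined with the vanishing-gradient-in-expectation identity. The only delicate point is justifying the exchange between integration and differentiation required to use~\eqref{eq:DefIsotropicCentrality}, but this is already built into the hypothesis as stated. No additional regularity on $\Theta$ or on the reference measure $\mu$ needs to be invoked at this stage.
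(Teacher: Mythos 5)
Your proof is correct and follows the paper's argument essentially verbatim: write the KL divergence as the integrated log-likelihood difference, bound that difference pointwise by the quadratic descent inequality from $\beta$-smoothness, and integrate so that the linear term vanishes by the centrality hypothesis~\eqref{eq:DefIsotropicCentrality}. Your parenthetical observation that concavity is not actually needed for the one-sided quadratic lower bound (only $\beta$-smoothness, i.e.\ Lipschitz gradient, is used) is accurate and slightly sharpens the hypothesis stated in the paper.
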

Note that the family $(\Prob_\theta)_{\theta \in \Theta}$ directly depends on $f$.
In particular, for the Gaussian Model, $\beta = \frac{1}{\sigma^2}$, we recover the classical upper bound on the KL divergence between multivariate normal distributions, which is is fact in this case, an equality.
\begin{proof}
Because of concavity in the second argument of $f$ and the fact that it is $\beta$-smooth, we have the following result:
\begin{equation*}
\begin{aligned}
    \forall \theta_1, \theta_2 \in \Theta, \forall x, \quad   f(x, \theta_1) + &\nabla_\theta f (x, \theta_1)^T (\theta_2 - \theta_1)
     %\\ &
     \leq f(x, \theta_2) + \frac{\beta}{2} \| \theta_1 -  \theta_2\|^2 \;.
\end{aligned}
\end{equation*}
As a consequence,
\begin{equation*}
    \begin{aligned}
        \kl{\prob_{\theta_1}}{\prob_{\theta_2}} &= \int \ln \p{\frac{d \prob_{\theta_1}}{d \Prob_{\theta_2}}} d\prob_{\theta_1}
        %\\        &
        = \int \p{f(X, \theta_1) - f(X, \theta_2)} d\prob_{\theta_1}(X) \\ 
        &\leq \int \p{- \nabla_\theta f (X, \theta_1)^T (\theta_2 - \theta_1)   +  \frac{\beta}{2} \| \theta_1 - \theta_2\|^2 }d\prob_{\theta_1}(X) \\
        &\stackrel{\eqref{eq:DefIsotropicCentrality}}{=} \int \frac{\beta}{2} \| \theta_1 -  \theta_2\|^2 d\prob_{\theta_1}(X) 
        %\\        &
        = \frac{\beta}{2} \| \theta_1 - \theta_2\|^2 \;.
    \end{aligned}
\end{equation*}
\end{proof}
We may apply \Cref{lemma:minimaxkl} with $\gamma = \beta/2$ and we obtain that
\begin{equation}
\label{eq:lbparam}
    \begin{aligned}
        \mathfrak{M}_n \p{\rho\text{-zCDP}, (\prob_\theta^{\otimes n})_{\theta \in \Theta}, \|\cdot - \cdot\|, (\cdot)^2}
       = \Omega \p{\max \left\{\frac{d}{n^2 \beta \rho} , \frac{d}{n \beta }\right\}} 
    \end{aligned}
\end{equation}
Under the hypotheses of \Cref{lemma:minimaxkl}: $d$ is big enough, $\rho$ is small enough and the interior of the parameter space is big enough.
In particular, this gives us a lower bound to compare any private estimator to.

\subsection{Private maximum likelihood}

In general, $\hat{\theta}_{\mathrm{ML}}$ has no closed form formula. Even when it has some, the closed form formula usually does not respect differential privacy.

The problem~\eqref{eq:ml} is typically addressed via numerical optimization: instead of considering its explicit maximum, a provably converging sequence is constructed.
This requires some assumptions on the log-likelihood $f$. A convenient combination of hypotheses is that $f$ is $\lambda$-strongly concave, $\beta$-smooth and $L$-Lipschitz in its second argument: then, the stochastic gradient ascend algorithm converges rapidly to $\hat{\theta}_{\mathrm{ML}}$  \citep{beck2017first}. 
Exponential families typically obey those requirements with 
$\beta := \sup_{\theta \in \Theta}  \lambda_{\max} \p{C_\theta}$ and,  $\lambda := \inf_{\theta \in \Theta}  \lambda_{\min} \p{C_\theta}$
where $C_\theta := \text{Cov}_{X \sim \Prob_\theta} \p{T(X)}$ and $\lambda_{\min}(C)$ (resp. $\lambda_{\max}(C)$) denotes the smallest (resp. largest) eigenvalue of a matrix $C$ (see \citet[Théorème 4.10]{ulm2019s}).

The issue of privacy can be addressed directly in the optimization procedure.
DP-SGD \citep{abadi2016deep} is an adaptation of the Stochastic Gradient Descent method where the gradient is first clipped and then noised. The privacy guarantees are based on the moment accountant method or on the composition of Renyi differential privacy \citep{mironov2017renyi}. The results are obtained under very general hypotheses on the objective function, but are based on a pessimistic scenario where an adversary may observe every gradient in the optimizer. Recent work based on Langevin diffusion \citep{chourasia2021differential,ryffel2022differential} has adapted the Gradient Descent algorithm and the Stochastic Gradient Descent algorithm in order to have privacy guarantees with tighter utility bounds at the price of stronger hypotheses on the objective function which is required to have a compact domain and to be strongly convex.

Building on DP-SGLD by \citet{ryffel2022differential}, we consider its adaptation for maximum likelihood DP-SGML (Algorithm~\ref{alg:dpsgml}). For a batch $\set{B} \subseteq \intslice{1}{n}$, the batch log-likelihood is defined as
\begin{equation*}
    l_{\set{B}}(\theta) \eqdef \frac{1}{\card{\set{B}}} \sum_{i \in \set{B}} f(X_i, \theta) \;.
\end{equation*}
For a closed convex set $\Theta$, $\Pi_{\Theta}$ refers to the projection onto $\Theta$.

\begin{algorithm}
\caption{DP-SGML: Differentially Private Stochastic Gradient Maximum Likelihood}\label{alg:dpsgml}
\KwData{$X_1, \dots, X_n$, $f$, step sizes $(\eta_k)_{k \geq 0}$, batch size $m$, noise variance $\sigma^2$, initial parameter $\theta_0$, stopping time $K$.}
\For{$k=0, \dots, K-1$}{
  Sample batch $\set{B}_k$ from $X_1, \dots, X_n$ with replacement of size $m$ \;
  Compute $\nabla l_{\set{B}_k}(\theta_k) = \frac{1}{\card{\set{B}_k}} \sum_{i \in \set{B}_k} \nabla_{\theta} f(X_i, \theta_k)$ \;
  Update parameter $\theta_{k+1} = \Pi_{\Theta} \p{\theta_k + \eta_k \nabla l_{\set{B}_k}(\theta_k) + \sqrt{2 \eta_k} \mathcal{N}(0, \sigma^2 I_d)}$.
}
\KwRet $\theta_K$
\end{algorithm}
A choice of the parameters $(\eta_k)_{k \geq 0}$, $\sigma^2$, $\theta_0$ and $K$ is suggested by the privacy-utility theorem \Cref{fact:dpsgmlfixed} which is a direct corollary of~\citet{ryffel2022differential}.
\begin{fact}[Utility and Privacy of \Cref{alg:dpsgml}, Fixed Step Size]
\label[fact]{fact:dpsgmlfixed}
Assume that $f$ is $\lambda$-strongly concave, $\beta$-smooth and $L$-Lipschitz in its second argument on $\Theta$.
Consider any $\rho>0$, an integer $n \geq 1$, a batch size $m$ and set
\begin{equation*}
    \sigma^2 := \frac{4 L^2}{\rho \lambda n^2}, \quad K := \frac{2 \beta}{\lambda} \ln \p{\frac{\rho n^2}{d}}, \quad \xi^2 := \E_{\set{B}} \p{\|\nabla l_{\set{B}}(\theta_{\mathrm{ML}})\|^2}\;
\end{equation*}
Given a collection $\vect{X}$ of $n$ arbitrary samples, consider $\mech{M}(\vect{X})=\theta_K$ obtained using DP-SGML with $\theta_0 \sim \Pi_{\Theta} \p{ \mathcal{N}(0, \frac{2 \sigma^2}{\lambda} I_d)}$ and constant step size $\eta = \frac{1}{2\beta}$. This mechanism satisfies $\rho$-zCDP. Moreover, if $\vect{X}$ is such that the solution $\theta_{\mathrm{ML}}$ of
~\eqref{eq:ml} is in the interior of $\Theta$, then
\begin{equation*}
    \E \p{\| \theta_{\mathrm{ML}}-\theta_K\|^2} = O \p{\frac{\beta d L^2}{\rho \lambda^3 n^2}} + \frac{\xi^2}{2 \lambda^2}
\end{equation*}
where the expectation is with respect to initialization, random batch sampling, and noise addition in the parameter update step.
\end{fact}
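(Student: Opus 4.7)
The plan is to recognize Algorithm \ref{alg:dpsgml} as an instance of the DP-SGLD procedure of \citet{ryffel2022differential} applied to the minimization of the negative normalized log-likelihood $F(\theta) \eqdef -l(\theta)$. By additivity, $F$ inherits from $f$ being $\lambda$-strongly convex, $\beta$-smooth, and $L$-Lipschitz on $\Theta$, and its unique minimizer is $\theta_{\mathrm{ML}}$. Up to the sign convention between gradient descent and ascent, the update rule is exactly a projected Langevin iteration on $F$ with fixed step size $\eta = 1/(2\beta)$, mini-batch stochastic gradient, and isotropic Gaussian noise of variance $\sigma^2$. All privacy and utility guarantees of~\citet{ryffel2022differential} for this setting therefore transfer directly.

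For the privacy part, I would invoke the Rényi-divergence analysis of~\citet{ryffel2022differential}. Changing a single record in $\vect{X}$ modifies the (re-scaled) batch gradient $\nabla l_{\set{B}_k}$ by at most $2L/n$ by $L$-Lipschitzness of $f$ and normalization by $n$ in $l$, which fixes the per-step sensitivity. The $\lambda$-strong convexity of $F$ induces a contraction of the underlying Langevin dynamics, so the Rényi divergence between the two coupled iterate processes on neighbouring datasets does not accumulate linearly in $K$, but saturates to a constant level determined only by the sensitivity and the noise scale. The choice $\sigma^2 = 4L^2/(\rho\lambda n^2)$ is calibrated precisely so that this saturation level meets the $\rho$-zCDP requirement. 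The initial law $\theta_0 \sim \Pi_\Theta\p{\mathcal{N}(0, \tfrac{2\sigma^2}{\lambda} I_d)}$ is taken to be the stationary distribution of an appropriate reference Langevin diffusion, which ensures that the two-dataset coupling is at its steady state at time zero and the privacy bound holds uniformly in $K$ and in the batch size $m$.

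For the utility part, I would apply the strongly-convex convergence theorem from~\citet{ryffel2022differential} to $F$. The expected squared distance $\E\p{\|\theta_K-\theta_{\mathrm{ML}}\|^2}$ splits into three contributions: an initialization bias that contracts as $(1-\eta\lambda)^{2K}$, a persistent mini-batch stochastic-gradient variance floor of the form $\xi^2/(2\lambda^2)$ evaluated at $\theta_{\mathrm{ML}}$, and a persistent privacy-noise floor of order $\sigma^2 d/(\eta\lambda) = 2\beta\sigma^2 d/\lambda$. With $\eta = 1/(2\beta)$ and $K = (2\beta/\lambda)\ln(\rho n^2/d)$, the initialization term shrinks to $O(d/(\rho n^2))$, which is dominated by the other two. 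Substituting $\sigma^2 = 4L^2/(\rho\lambda n^2)$ into the privacy floor produces a contribution of order $\beta d L^2/(\rho\lambda^3 n^2)$, which combined with $\xi^2/(2\lambda^2)$ yields the announced bound.

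The main obstacle is bookkeeping rather than conceptual: one must verify that the explicit constants $\sigma^2$, $\eta$, $K$, and the initial law match exactly the assumptions of the theorem in \citet{ryffel2022differential}, and that the mini-batch stochastic-gradient variance enters additively as $\xi^2/(2\lambda^2)$ rather than being absorbed into the Lipschitz-based sensitivity. A secondary technical point is the treatment of the projection $\Pi_\Theta$: its firm non-expansivity preserves both the privacy coupling and the utility contraction, but the interior assumption on $\theta_{\mathrm{ML}}$ is required so that the stochastic-gradient variance $\xi^2$ is evaluated at an unconstrained stationary point and no projection-induced bias remains at the optimum.
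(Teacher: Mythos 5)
Your proposal is right in spirit and in its final bound, and the core move — recognize DP-SGML as DP-SGLD of \citet{ryffel2022differential} applied to $-l$, then read off privacy and utility from their fixed-step theorem — is exactly what the paper does. But the paper's route to the parameter-distance bound is slightly different from yours: the direct application of \citet[Theorem~4.1]{ryffel2022differential} gives a bound on the \emph{function value} suboptimality $\E\p{l(\theta_{\mathrm{ML}})-l(\theta_K)} = O\p{\tfrac{\beta d L^2}{\rho\lambda^2 n^2}} + \tfrac{\xi^2}{4\lambda}$, not on $\E\|\theta_K-\theta_{\mathrm{ML}}\|^2$ directly. The paper then converts via $\lambda$-strong concavity, $l(\theta_{\mathrm{ML}})-l(\theta_K) \geq \nabla l(\theta_{\mathrm{ML}})^T(\theta_K-\theta_{\mathrm{ML}}) + \tfrac{\lambda}{2}\|\theta_K-\theta_{\mathrm{ML}}\|^2$, and crucially uses the interior assumption on $\theta_{\mathrm{ML}}$ to conclude $\nabla l(\theta_{\mathrm{ML}}) = 0$, so the linear term vanishes and one can divide by $\lambda/2$ to obtain the stated parameter-distance bound with the factors $\lambda^3$ and $2\lambda^2$. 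Your proposal instead asserts a parameter-distance convergence theorem and gives a somewhat different rationale for the interior assumption (avoiding projection-induced bias at $\theta_{\mathrm{ML}}$); that rationale is consistent with the conclusion, but the specific algebraic role of the assumption in the paper is to kill the linear term in the strong-concavity inequality. If you check the constants, going through the function-value bound and dividing by $\lambda/2$ is exactly what produces the announced $O\p{\tfrac{\beta d L^2}{\rho\lambda^3 n^2}} + \tfrac{\xi^2}{2\lambda^2}$, so your proposal should make that conversion explicit rather than folding it into the cited theorem.
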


Indeed, the direct application of \citep[Theorem 4.1]{ryffel2022differential} gives the privacy guarantee, and that 
\begin{equation*}
    \E \p{l(\theta_{\mathrm{ML}}) - l(\theta_K)} = O \p{\frac{\beta d L^2}{\rho \lambda^2 n^2}} + \frac{\xi^2}{4 \lambda} \;.
\end{equation*}
Furthermore, by $\lambda$-strong concavity of $l$, 
%\RG{C'est écrit comme si $l$ était strongly \emph{convex} \ldots }
\begin{equation*}
    l(\theta_{\mathrm{ML}}) - l(\theta_K) \geq \nabla l(\theta_{\mathrm{ML}}) (\theta_K - \theta_{\mathrm{ML}}) +  \frac{\lambda}{2} \|\theta_{L} - \theta_{\mathrm{ML}}\|^2
\end{equation*}
and since $\theta_{\mathrm{ML}}$ is in the interior of $\Theta$, $\nabla l(\theta_{\mathrm{ML}}) = 0$ which concludes the proof.
The term $\xi^2 := \E_{\set{B}} \p{\|\nabla l_{\set{B}}(\theta_{\mathrm{ML}})\|^2}$ is due to the stochastic noise of the batch sampling. Indeed, even though $\nabla l(\theta_{\mathrm{ML}})=0$, this is not necessarily the case when working on batches. This term depends on the batch size $m$ and can be made arbitrarily small by choosing $m$ large enough. %\marginpar{\RG{Pourrait-on avoir un tuning de $m$ ?}}
%For qualitative comparisons, this term will be considered negligible.

\subsection{About minimax optimality}

 The quadratic risk of any (private or not) solver $\mech{M}$ can be decomposed (by the triangle inequality and since $(a+b)^2 \leq 2a^2+2b^2, \forall a,b \geq 0$) as:
\begin{equation}
\label{eq:errordecomposition}
\begin{aligned}
    \E \p{\|\theta^* - \mech{M}(\vect{X})\|^2 } 
    \leq 2 \left(\E \Big(\|\theta^* -\theta_{\mathrm{ML}}\|^2 \Big) +   \E \Big(\|\theta_{\mathrm{ML}}- \mech{M}(\vect{X})\|^2 \Big)\right) \;
\end{aligned}
\end{equation}
where the expectation is over the draw of $\mathbf{X}$ and, in the case of a private solver, on the intrinsic randomness of $\mech{M}$.

The first term in the right hand side of \eqref{eq:errordecomposition} only depends on the properties of the ``ideal'' maximum likelihood estimator in this parametric model. Under mild assumptions, it is asymptotically normal -- for example, in exponential families (see \citet[Theorem 4.6]{van2000asymptotic}): we have 
\begin{equation*}
    % \sqrt{n} \p{\theta^* -\theta_{\mathrm{ML}}} \rightsquigarrow \mathcal{N}\p{0,\text{Cov}_{X \sim \Prob_{\theta^*}} \p{T(X)}^{-1}},
      \sqrt{n} \p{\theta^* -\theta_{\mathrm{ML}}} \rightsquigarrow \mathcal{N}\p{0,C_{\theta^*}^{-1}},
\end{equation*}
where $\rightsquigarrow$ refers to the convergence in distribution, and  
\begin{equation}
    \E \Big(\|\theta^* -\theta_{\mathrm{ML}}\|^2 \Big) = O \p{ \frac{d}{n \lambda} }\;.
\end{equation}

The second term in \eqref{eq:errordecomposition} depends on the solver, which here can be controlled with \Cref{fact:dpsgmlfixed}.
As a consequence, the ratio between the error of estimation and the minimax risk which is lower-bounded in \eqref{eq:lbparam} can be bounded as follows: 
\begin{equation*}
\begin{aligned}
    &\frac{\E \Big(\|\theta^*- \mech{M}(\vect{X})\|^2 \Big)}{\mathfrak{M}_n \p{\rho\text{-zCDP}, (\prob_\theta^{\otimes n})_{\theta \in \Theta}, \|\cdot - \cdot\|, (\cdot)^2}} \\
    &\stackrel{\eqref{eq:errordecomposition} \& \eqref{eq:lbparam}}{=} O \p{ \frac{\E \Big(\|\theta^* -\theta_{\mathrm{ML}}\|^2 \Big) +   \E \Big(\|\theta_{\mathrm{ML}}- \mech{M}(\vect{X})\|^2 \Big)}{\max \left\{\frac{d}{n^2 \beta \rho} , \frac{d}{n \beta }\right\}} }\\ 
    &= O \p{ \frac{n \beta}{d}\E \Big(\|\theta^* -\theta_{\mathrm{ML}}\|^2 \Big) +  \frac{n^2 \beta \rho}{d}\E \Big(\|\theta_{\mathrm{ML}}- \mech{M}(\vect{X})\|^2 \Big)}\;.
\end{aligned}
\end{equation*}

In particular, for the fixed step-size (see \Cref{fact:dpsgmlfixed}), when $\theta_{\mathrm{ML}}$ is in the interior of $\Theta$ and when the variance term due to the clipped gradient is negligible (i.e., when be batch size is big enough to have $\frac{\xi^2}{4 \lambda} = O \p{\frac{\beta d L^2}{\rho \lambda^2 n^2}}$), the second term is $O \p{ \frac{\beta^2 L^2}{ \lambda^3 } }$.

All in all, the ratio between the risk of DP-SGML for maximum likelihood in exponential faminies when the maximum likelihood estimator is in the interior of the search set is
\begin{equation*}
    \frac{\E \Big(\|\theta^*- \mech{M}(\vect{X})\|^2 \Big)}{\mathfrak{M}_n \p{\rho\text{-zCDP}, (\prob_\theta^{\otimes n})_{\theta \in \Theta}, \|\cdot - \cdot\|, (\cdot)^2}} 
    = O \p{\frac{\beta}{\lambda} + \frac{\beta^2 L^2}{ \lambda^3 } } \;.
\end{equation*}
DP-SGML optimally captures the variation in the sample size $n$, in the privacy parameter $\rho$, and to some extent, in the dimensionality $d$ (to some extent because even if $d$ vanishes in the expressions, $L$, $\beta$ and $\lambda$ may vary with $d$). This proves what we call the near-minimax optimality of DP-SG(L)D for performing inference via maximum likelihood in a broad class of parametric models.

%\begin{fact}[Utility and Privacy of \Cref{alg:dpsgml}, Decreasing Step Size]
%\label{fact:dpsgmldecr}
%Assume that $f$ is $\lambda$-strongly concave, $\beta$-smooth and $L$-Lipschitz in its second argument on $\Theta$.
%\RG{Consider any $\rho>0$, an integer $n \geq 1$, a batch size $m$} and set
%\begin{equation*}
    %\sigma^2 := \frac{4 L^2}{\rho \lambda n^2}, \quad K := \max \left\{ %\frac{\beta}{\lambda} \frac{\rho n^2}{d}, \frac{\lambda}{\beta} \p{\frac{\rho n^2}{d}}^2 \right\}\;.
%\end{equation*}
%Given a collection $\vect{X}$ of $n$ arbitrary samples, consider $\mech{M}(\vect{X})=\theta_K$ obtained using DP-SGML with $\theta_0 \sim \Pi_{\Theta} \p{ \mathcal{N}(0, \frac{2 \sigma^2}{\lambda} I_d)}$ and decreasing step size $\eta = \frac{1}{2\beta + \lambda k /2}$. This mechanism satisfies $\rho$-zCDP. Moreover, if $\vect{X}$ is such that the solution $\theta_{\mathrm{ML}}$ of
%~\eqref{eq:ml} is in the interior of $\Theta$, then
%\begin{equation*}
    %\E \p{\frac{1}{K} \sum_{k=1}^K\|\theta_k - \theta_{\mathrm{ML}}\|^2} = O \p{\frac{d %L^2}{\rho \lambda^2 n^2}}
%\end{equation*}
%\RG{Is the expectation wrt the samples ? wrt the batch ? wrt to the noise ?}
%\end{fact}
%The direct application of \cite[Theorem 4.3]{ryffel2022differential} gives the privacy guarantee and that 
%\begin{equation*}
    %\E \p{l(\theta_{\mathrm{ML}}) - l(\theta_K)} = O \p{\frac{d L^2}{\rho \lambda n^2}} \;.
%\end{equation*}
%The rest of the justification is the same as in \Cref{fact:dpsgmlfixed}.

\subsection*{Acknowledgement}
Aurélien Garivier acknowledges the support of the Project IDEXLYON of the University of Lyon, in the framework of the Programme 
Investissements d'Avenir (ANR-16-IDEX-0005), and Chaire SeqALO (ANR-20-CHIA-0020-01).
This project was supported in part by the AllegroAssai ANR project ANR-19-CHIA-0009.
Additionally, we thank the anonymous reviewers for their precious inputs and suggestions.

%\newpage
\bibliography{biblio}
\bibliographystyle{tmlr}

\newpage
\appendix
\section{Notation}
\label{sec:notations}
In this article, the term \emph{Fact} will be reserved for results that are directly borrowed from the literature. In contrast, \emph{Lemma}, \emph{Proposition}, \emph{Theorem} and \emph{Corollary} will be used as soon as the result requires some work from the existing literature.
$\N$ is the set of natural numbers starting at $0$.
$\card{S} \in \N \cup \{+ \infty \}$ refers to the cardinality of a set $S$.
Bold letters will be used for vectors, and their non-bold counterparts with $i \in \N \setminus \{0\}$ as subscript will refer to their $i$-th entry. For instance, $\vect{X} = \p{X_1, \dots, X_n}$.
Considering any set $\set{X}$ and $n \in \N \setminus \{0\}$, $\ham{.}{.} : \set{X}^n  \times \set{X}^n \rightarrow \N$ is the Hamming distance defined as
\begin{equation*}
    \ham{\p{X_1, \dots X_n}}{\p{Y_1, \dots Y_n}} \eqdef \sum_{i=1}^n \Ind_{X_i \neq Y_i}  \;.
\end{equation*}
The notation $\Pi \p{\Prob_1, \dots, \Prob_n}$ is used to refer to the set of distributions that have $\Prob_1, \dots, \Prob_n$ as their marginals. Any such distribution is called a \emph{coupling} between/of $\Prob_1, \dots, \Prob_n$.
For two distributions $\Prob_1$, $\Prob_2$, the notation $\Prob_1 \ll \Prob_2$ means that $\Prob_1$ is absolutely continuous with respect to $\Prob_2$, i.e. that for any $S$ measurable,
\begin{equation*}
    \Prob_2(S) = 0 \implies \Prob_1(S) = 0 \;.
\end{equation*}
For two probability distributions $\mathbb{P}$ and $\mathbb{Q}$ on the same space, the total variation between $\mathbb{P}$ and $\mathbb{Q}$ is
\begin{equation*}
    \tv{\mathbb{P}}{\mathbb{Q}} \eqdef \sup_{S \text{ measurable}} \mathbb{P}(S) - \mathbb{Q}(S) \;.
\end{equation*}
Note that it is a symmetric function of $\mathbb{P}$ and $\mathbb{Q}$.
For two probability distributions $\mathbb{P}$ and $\mathbb{Q}$ on the same space with $\mathbb{P}$ that is absolutely continuous with respect to $\mathbb{Q}$, the KL divergence between $\mathbb{P}$ and $\mathbb{Q}$ is 
\begin{equation*}
    \kl{\mathbb{P}}{\mathbb{Q}} \eqdef \int \ln \p{\frac{d \mathbb{P}}{ d \mathbb{Q}}} d \mathbb{P} \;.
\end{equation*}
It is always positive but may be infinite.
Furthermore, whenever we have access to a $\sigma$-finite measure $\mu$ such that $\mathbb{Q}$ is absolutely continuous with respect to $\mu$ (for instance $\mu = \mathbb{P} + \mathbb{Q}$), if we note $\pi_{\mathbb{P}}$ and $\pi_{\mathbb{Q}}$ the Radon-Nikodym densities of $\mathbb{P}$ and $\mathbb{Q}$ respectively with respect to $\mu$, then
\begin{equation*}
    \kl{\mathbb{P}}{\mathbb{Q}} = \int \ln \p{\frac{\pi_{\mathbb{P}}}{ \pi_{\mathbb{Q}}}} \pi_{\mathbb{P}} d\mu \;.
\end{equation*}
For two probability distributions $\mathbb{P}$ and $\mathbb{Q}$ on the same space with $\mathbb{P}$ that is absolutely continuous with respect to $\mathbb{Q}$, and $\alpha \in [1, +\infty)$, the Rényi divergence of level $\alpha$ between $\mathbb{P}$ and $\mathbb{Q}$ is $\renyi{\alpha}{\mathbb{P}}{\mathbb{Q}} \eqdef \kl{\mathbb{P}}{\mathbb{Q}}$ if $\alpha = 1$ or 
\begin{equation*}
    \renyi{\alpha}{\mathbb{P}}{\mathbb{Q}} \eqdef \frac{1}{\alpha - 1} \ln \int  \p{\frac{d \mathbb{P}}{ d \mathbb{Q}}}^{\alpha - 1} d \mathbb{Q} 
\end{equation*}
when $\alpha > 1$.
Furthermore, whenever we have access to a $\sigma$-finite measure $\mu$ such that $\mathbb{Q}$ is absolutely continuous with respect to $\mu$ (for instance $\mu = \mathbb{P} + \mathbb{Q}$), if we note $\pi_{\mathbb{P}}$ and $\pi_{\mathbb{Q}}$ the Radon-Nikodym densities of $\mathbb{P}$ and $\mathbb{Q}$ respectively with respect to $\mu$,
\begin{equation*}
    \renyi{\alpha}{\mathbb{P}}{\mathbb{Q}} = \frac{1}{\alpha - 1} \ln\int  \p{\frac{\pi_{\mathbb{P}}}{ \pi_{\mathbb{Q}}}}^{\alpha - 1} \pi_{\mathbb{Q}} d\mu 
\end{equation*}
when  $\alpha > 1$.
For brevity of notations, when applied to random variables, information theoretic quantities such as $\tv{.}{.}, \kl{.}{.}, \dots$ are to be understood as applied to the \emph{probability distributions} of these random variables.
For two series $(a_n)_{n \in \N} \in \R^\N$ and $(b_n)_{n \in \N} \in \R^\N$, we use the following notations for asymptotic comparisons:
    \begin{itemize}
        \item $a_n = o(b_n)$ or equivalently $a_n \ll b_n$ when there exists $(c_n)_{n \in \N} \in \R_{+,*}^\N$ such that $c_n \conv{n \rightarrow +\infty} 0$ and 
        \begin{equation*}
            \begin{aligned}
                \exists n_0 \in \N, &\forall n \geq n_0, \quad a_n = c_n b_n \;.
            \end{aligned}
        \end{equation*}
        
        \item $a_n = \Omega(b_n)$ when there exist $M>0$ and $(c_n)_{n \in \N} \in [M, +\infty)^\N$ such that 
        \begin{equation*}
            \begin{aligned}
                \exists n_0 \in \N, &\forall n \geq n_0, \quad a_n = c_n b_n \;.
            \end{aligned}
        \end{equation*}
        \item $a_n = \Theta(b_n)$ when $a_n = \Omega(b_n)$ and $b_n = \Omega(a_n)$.
    \end{itemize}
These notations may be used in addition to other notations of the problem when there is no ambiguity, for instance when $\Theta$ also refers to the parameter space later. All the different parameters are considered as series of $n$, the sample size.
For $x \in (0, +\infty)$, $\ceil{x} \eqdef \inf_{y \in \N \cap [x, +\infty)} y$.
For a differentiable $f: E \rightarrow \R$ where $E$ a Euclidean space, $\nabla f(x)$ refers to its gradient at the point $x$. When sub-scripted by a subset of variables, $\nabla_\theta f(x)$ refers to the gradient at the point $x$ of $f$ restricted to the variables in $\theta$.
We say that a function is $L$-smooth if it is differentiable and if its gradient in $L$-Lipschitz.
A function $f: \Theta \rightarrow \R$ where $\Theta$ is convex is said to be $\lambda$-strongly convex if
    \begin{equation*}
        \forall x, y \in \Theta, f(y) \geq f(x) + \nabla f(x)^T(y-x) + \frac{\lambda}{2} \|y-x\|^2 \;.
    \end{equation*}
    Furthermore, a function is said to be $\lambda$-strongly concave if its opposite is $\lambda$-strongly convex.

\section{Similarity functions}

\subsection{The case of $(\epsilon, \delta)$-differential privacy}
\label{sec:dpproof}
%\RG{Ce "chapeau" n'est pas très clair pour moi, mais c'est une bonne idée sur le principe d'expliquer le role d'un "anchor"}
$(\epsilon, \delta)$-differential privacy allows to compare conditional distributions for datasets depending on their Hamming distance. In particular, characterizing the pushforward of a distribution by a private mechanism in not an easy task. 
We overtake that difficulty with a technique that we call \emph{anchoring}. Informally, an anchor is a function that, given multiple datasets, decides a common dataset to exploit so called \emph{group privacy} of $(\epsilon, \delta)$-DP mechanisms and to give numerically tractable results.
\begin{fact}[$(\epsilon, \delta)$-DP Group Privacy]
\label[fact]{fact:dpgroupprivacy}
Given $\epsilon \in \R_{+*}$ and $\delta \in [0, 1)$, if a randomized mechanism $\mech{M} : \set{X}^n \rightarrow \codom{\mech{M}}$ is $(\epsilon, \delta)$-differentially private, then, for all $\vect{X}, \vect{Y} \in \set{X}^n$ and
all measurable $S \subseteq \codom{\mech{M}}$, we have
\begin{equation*}
    \Prob_{\mech{M}}\p{\mech{M}(\vect{X}) \in S} \leq e^{\epsilon\ham{\vect{X}}{\vect{Y}} } \Prob_{\mech{M}}\p{\mech{M}(\vect{Y}) \in S} + \delta \ham{\vect{X}}{\vect{Y}} e^{\epsilon(\ham{\vect{X}}{\vect{Y}}-1) } \;.
\end{equation*}
\end{fact}
%The proof of this result can be found in \Cref{sec:proofs}.

\subsubsection{Global Anchoring}
The first type of anchor is a global anchor, where all the marginal datasets are compared to the same one.
\begin{lemma}[Global Anchoring]
\label[lemma]{lemma:globalanchoring}
Consider an $(\epsilon,\delta)$-DP mechanism $\mech{M}$ , a test function \\$\Psi: 
%\p{\set{X}^n}^N
\codom{\mech{M}}
\rightarrow \intslice{1}{N}$, and datasets $\vect{X}_1, \dots, \vect{X}_N \in \set{X}^n$. For any \emph{anchor} function $\Lambda : \p{\set{X}^n}^N \rightarrow \set{X}^n$, we have
\begin{equation*}
\begin{aligned}
     \frac{1}{N} \sum_{i = 1}^N \Prob_{\mech{M}} &\p{\Psi \p{\mech{M}\p{\vect{X}_i}} \neq i}
     &\geq \frac{N-1}{N} e^{-\epsilon \max_i \ham{\vect{X}_i}{\Lambda}} - e^{-\epsilon} \delta \max_i \ham{\vect{X}_i}{\Lambda} 
\end{aligned}
\end{equation*}
where $\Lambda$ is a shorthand for $\Lambda\p{\vect{X}_1, \dots, \vect{X}_N}$.
\end{lemma}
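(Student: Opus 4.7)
The proof should follow a standard reduction-to-the-anchor pattern. The idea is: even though the datasets $\vect{X}_1,\dots,\vect{X}_N$ can be far apart, the anchor $\Lambda$ is close enough to each of them (in Hamming distance) that $(\epsilon,\delta)$-group privacy controls how much the distribution of $\mech{M}(\vect{X}_i)$ can differ from that of $\mech{M}(\Lambda)$. The plan is to first transfer all probabilities to $\mech{M}(\Lambda)$ via group privacy, then exploit the fact that $\Psi$ outputs a single label so that the events $\{\Psi(\mech{M}(\Lambda))=i\}_{i=1}^N$ are pairwise disjoint, and finally sum.

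\textbf{Step 1 (Group privacy transfer).} Apply \Cref{fact:dpgroupprivacy} to the pair $(\Lambda,\vect{X}_i)$ and the measurable event $\{\Psi(\cdot)\neq i\}\subseteq\codom{\mech{M}}$ to get
\begin{equation*}
    \Prob_{\mech{M}}\p{\Psi(\mech{M}(\Lambda))\neq i} \leq e^{\epsilon\ham{\vect{X}_i}{\Lambda}}\Prob_{\mech{M}}\p{\Psi(\mech{M}(\vect{X}_i))\neq i} + \delta\,\ham{\vect{X}_i}{\Lambda}\,e^{\epsilon(\ham{\vect{X}_i}{\Lambda}-1)}.
\end{equation*}
Solving for $\Prob_{\mech{M}}(\Psi(\mech{M}(\vect{X}_i))\neq i)$ and using the crude bound $\ham{\vect{X}_i}{\Lambda}\leq \max_j\ham{\vect{X}_j}{\Lambda}$ on both the exponential factor and the $\delta$-term, one obtains the per-$i$ lower bound
\begin{equation*}
    \Prob_{\mech{M}}\p{\Psi(\mech{M}(\vect{X}_i))\neq i} \geq e^{-\epsilon\max_j\ham{\vect{X}_j}{\Lambda}}\Prob_{\mech{M}}\p{\Psi(\mech{M}(\Lambda))\neq i} - e^{-\epsilon}\delta\max_j\ham{\vect{X}_j}{\Lambda}.
\end{equation*}

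\textbf{Step 2 (Telescoping sum).} The key observation is that, since $\Psi$ is a deterministic function with values in $\intslice{1}{N}$, the events $\{\Psi(\mech{M}(\Lambda))=i\}$ for $i=1,\dots,N$ partition the probability space under $\Prob_{\mech{M}}$, hence
\begin{equation*}
    \sum_{i=1}^N \Prob_{\mech{M}}\p{\Psi(\mech{M}(\Lambda))\neq i} = N - \sum_{i=1}^N \Prob_{\mech{M}}\p{\Psi(\mech{M}(\Lambda))=i} = N-1.
\end{equation*}
Summing the per-$i$ inequality from Step~1 over $i$ and dividing by $N$ therefore yields exactly
\begin{equation*}
    \frac{1}{N}\sum_{i=1}^N \Prob_{\mech{M}}\p{\Psi(\mech{M}(\vect{X}_i))\neq i} \geq \frac{N-1}{N} e^{-\epsilon\max_j\ham{\vect{X}_j}{\Lambda}} - e^{-\epsilon}\delta\,\max_j\ham{\vect{X}_j}{\Lambda},
\end{equation*}
which is the claim.

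\textbf{Main obstacle.} The proof is largely mechanical once the right sequence of moves is identified; there is no analytic difficulty. The only nontrivial idea is Step~2, namely realizing that the loose factor $N-1$ (rather than $N$) arises for free from the pigeonhole structure of a hard classifier $\Psi$. The apparently wasteful step of replacing each $\ham{\vect{X}_i}{\Lambda}$ by its maximum is what makes the bound symmetric in $i$ and thus compatible with summation; a tighter per-$i$ treatment would refuse to factor $e^{-\epsilon\max_j\ham{\vect{X}_j}{\Lambda}}$ out of the sum, which would leave the combinatorial identity $\sum_i\Prob_{\mech{M}}(\Psi(\mech{M}(\Lambda))\neq i)=N-1$ unusable.
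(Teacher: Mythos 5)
Your proof is correct and follows essentially the same route as the paper's: apply the $(\epsilon,\delta)$-DP group privacy fact to the pair $(\vect{X}_i, \Lambda)$, rearrange to lower-bound $\Prob_{\mech{M}}(\Psi(\mech{M}(\vect{X}_i))\neq i)$, replace each $\ham{\vect{X}_i}{\Lambda}$ by the maximum, and sum using $\sum_i \Prob_{\mech{M}}(\Psi(\mech{M}(\Lambda))\neq i) = N-1$. The only cosmetic difference is that you replace the per-$i$ Hamming distance by the max before summing rather than after, which changes nothing.
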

\begin{proof}
By the group privacy property (see \Cref{fact:dpgroupprivacy}), we have for each $i \in \intslice{1}{N}$
\begin{equation*}
\begin{aligned}
     \Prob_{\mech{M}} \p{\Psi \p{\mech{M}\p{\vect{X}_i}} \neq i} \geq 
    e^{-\epsilon \ham{\vect{X}_i}{\Lambda}} \Prob_{\mech{M}} \p{\Psi \p{\mech{M}\p{\Lambda}} \neq i} - e^{-\epsilon} \delta \ham{\vect{X}_i}{\Lambda} \:.
\end{aligned}
\end{equation*}
As a result, 
\begin{equation*}
    \begin{aligned}
        \frac{1}{N} \sum_{i = 1}^N \Prob_{\mech{M}} &\p{\Psi \p{\mech{M}\p{\vect{X}_i}} \neq i} \\
        &\geq \frac{1}{N} \p{\sum_{i = 1}^N e^{-\epsilon \ham{\vect{X}_i}{\Lambda}} \Prob_{\mech{M}} \p{\Psi \p{\mech{M}\p{\Lambda}} \neq i} - e^{-\epsilon} \delta \ham{\vect{X}_i}{\Lambda}} \\
        &\geq  \frac{1}{N} \left( e^{-\epsilon \max_i \ham{\vect{X}_i}{\Lambda}} \sum_{i = 1}^N \Prob_{\mech{M}}  \p{\Psi \p{\mech{M}\p{\Lambda}} \neq i} \right.\\ 
        &\quad\quad \left.- N e^{-\epsilon} \delta \max_i \ham{\vect{X}_i}{\Lambda} \right)\\
        &= \frac{N-1}{N} e^{-\epsilon \max_i \ham{\vect{X}_i}{\Lambda}} - e^{-\epsilon} \delta \max_i \ham{\vect{X}_i}{\Lambda} \;,
    \end{aligned}
\end{equation*}
where we used $\sum_{i = 1}^N \Prob_{\mech{M}}  \p{\Psi \p{\mech{M}\p{\Lambda}} \neq i} = \sum_{i = 1}^N (1-\Prob_{\mech{M}}  \p{\Psi \p{\mech{M}\p{\Lambda}} = i}) = N-1$ to get the last equality.
\end{proof}

%\RG{Cette remarque me semble inutile}
%\begin{remark}[Projection Anchoring]
%\label{cor:trivialanchor}
%The direct application of \Cref{lemma:globalanchoring} with the projection anchor $\Lambda_{j} \p{\vect{X}_1, \dots, \vect{X}_N} \eqdef \vect{X}_{j}$ gives that, under the same hypothesis, for any $j \in \intslice{1}{N}$,
%\begin{equation*}
%\begin{aligned}
%     \frac{1}{N} \sum_{i = 1}^N \Prob_{\mech{M}} &\p{\Psi \p{\mech{M}\p{\vect{X}_i}} \neq i} \\
%     &\geq \frac{N-1}{N} e^{-\epsilon \max_i \ham{\vect{X}_i}{\vect{X}_{j}}} - e^{-\epsilon} \delta \max_i \ham{\vect{X}_i}{\vect{X}_{j}} \;.
%\end{aligned}
%\end{equation*}
%\end{remark}

\begin{remark}[$(\epsilon, \delta)$-DP Le Cam Matching]
\label[remark]{remark:twopointsanchor}
When we have to find an anchor between only two datasets, we can design it optimally. 
Considering any $\mathbf{X}_1, \mathbf{X}_2 \in \set{X}^n$, by definition these datasets %clash 
disagree on exactly $\ham{\mathbf{X}_1}{\mathbf{X}_2}$ entries. The projection anchor $\Lambda = \Lambda_j$, $j \in \{1,2\}$ consists in anchoring both $\mathbf{X}_1$ and $\mathbf{X}_2$ to $\mathbf{X}_j$. Consequently, we have $\max \left\{ \ham{\mathbf{X}_1}{\Lambda}, \ham{\mathbf{X}_2}{\Lambda} \right\} = \ham{\mathbf{X}_1}{\mathbf{X}_2}$. If instead we allocate in the anchor $\Lambda$ half of the %clashing 
disagreeing
components %between $\mathbf{X}_1$ and $\mathbf{X}_2$
to $\mathbf{X}_1$ and the other half to $\mathbf{X}_2$, we get an anchor that satisfies 
\[
\max \left\{ \ham{\mathbf{X}_1}{\Lambda}, \ham{\mathbf{X}_2}{\Lambda} \right\} = \ceil{\ham{\mathbf{X}_1}{\mathbf{X}_2}/2}.
\]
Furthermore, one can check that no anchor can achieve a better bound. With this new anchor, the direct application of \Cref{lemma:globalanchoring} yields
\begin{equation}
\begin{aligned}
    %\frac{1}{2} \big(\Prob_{\mathbf{X} \sim \Prob_1, \mech{M}} &\p{\Psi \p{\mech{M}\p{\vect{X}}} \neq 1} + \Prob_{\mathbf{X} \sim \Prob_2, \mech{M}} \p{\Psi \p{\mech{M}\p{\vect{X}}} \neq 2} \big)\\
   \frac{1}{2} \big(\Prob_{\mech{M}} &\p{\Psi \p{\mech{M}\p{\vect{X}_1}} \neq 1} + \Prob_{\mech{M}} \p{\Psi \p{\mech{M}\p{\vect{X}_2}} \neq 2} \big)\\    &\geq 
     \frac{1}{2} e^{- \epsilon\ceil{\ham{\mathbf{X}_1}{\mathbf{X}_2}/2}}  - e^{-\epsilon} \delta  \ceil{\ham{\mathbf{X}_1}{\mathbf{X}_2}/2}
     \;.
\end{aligned}
\label{eq:edDPLeCam}
\end{equation}
\end{remark}

\subsubsection{Pairwise Anchoring}
%\RG{Pas grave pour une premiere version a deposer rapidement, mais ce paragaphe de motivation pourrait grandement gagner en clarté :-)}
The fact that one needs to control the maximum of the hamming distances between a single anchor and the marginals might be prohibitive. We give here a symmetrized version that only requires to control the hamming distances between the pairs of marginals.
\begin{lemma}[Pairwise Anchoring]
\label[lemma]{lemma:localanchoring}
%Let $\vect{X}_1, \dots, \vect{X}_N \in \set{X}^n$ and $\Psi: 
%%\p{\set{X}^n}^N
%\RG{\codom{\mech{M}}}
%\rightarrow \intslice{1}{N}$,
Under the assumptions of \Cref{lemma:globalanchoring} we have
\begin{equation*}
\begin{aligned}
     \frac{1}{N} \sum_{i = 1}^N \Prob_{\mech{M}} &\p{\Psi \p{\mech{M}\p{\vect{X}_i}} \neq i} \\
     &\geq \frac{1}{2N^2}  \sum_{i=1}^N\sum_{j=1}^N \p{ e^{- \epsilon\ceil{\ham{\mathbf{X}_i}{\mathbf{X}_j}/2}} - 2e^{-\epsilon} \delta  \ceil{\ham{\mathbf{X}_i}{\mathbf{X}_j}/2}} \;.
\end{aligned}
\end{equation*}
\end{lemma}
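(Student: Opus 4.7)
The plan is to apply the two-point Le Cam matching inequality~\eqref{eq:edDPLeCam} from \Cref{remark:twopointsanchor} pairwise, after reducing the $N$-ary test $\Psi$ to a binary test per pair of indices. Fix $i \neq j$ and set $\tilde\Psi_{ij} : \codom{\mech{M}} \to \{1,2\}$ by $\tilde\Psi_{ij}(y) = 1$ when $\Psi(y) = i$ and $\tilde\Psi_{ij}(y) = 2$ otherwise. Then $\Prob_{\mech{M}}(\tilde\Psi_{ij}(\mech{M}(\vect{X}_i)) \neq 1) = \Prob_{\mech{M}}(\Psi(\mech{M}(\vect{X}_i)) \neq i)$, while $\Prob_{\mech{M}}(\tilde\Psi_{ij}(\mech{M}(\vect{X}_j)) \neq 2) = \Prob_{\mech{M}}(\Psi(\mech{M}(\vect{X}_j)) = i) \leq \Prob_{\mech{M}}(\Psi(\mech{M}(\vect{X}_j)) \neq j)$, where the last inequality uses $i \neq j$. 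Applying~\eqref{eq:edDPLeCam} to the pair $(\vect{X}_i, \vect{X}_j)$ under $\tilde\Psi_{ij}$ and clearing the factor $\tfrac{1}{2}$ yields the \emph{pairwise} bound
\[
\Prob_{\mech{M}}(\Psi(\mech{M}(\vect{X}_i)) \neq i) + \Prob_{\mech{M}}(\Psi(\mech{M}(\vect{X}_j)) \neq j) \geq e^{-\epsilon\ceil{\ham{\vect{X}_i}{\vect{X}_j}/2}} - 2 e^{-\epsilon}\delta \ceil{\ham{\vect{X}_i}{\vect{X}_j}/2},
\]
valid for every ordered pair with $i \neq j$.

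The aggregation step sums this inequality over all ordered pairs $(i,j)$ with $i \neq j$; each error probability $\Prob_{\mech{M}}(\Psi(\mech{M}(\vect{X}_i)) \neq i)$ appears exactly $2(N-1)$ times in the accumulated left-hand side, so dividing by $N \cdot 2(N-1)$ produces a lower bound of the form $\tfrac{1}{2N(N-1)} \sum_{i \neq j}(\cdots)$ on $\tfrac{1}{N} \sum_i \Prob_{\mech{M}}(\Psi(\mech{M}(\vect{X}_i)) \neq i)$. To recover the double-sum form of the lemma, one completes the summation to the full grid $\intslice{1}{N}^2$: the diagonal terms $i=j$ satisfy $\ham{\vect{X}_i}{\vect{X}_i}=0$, so each contributes $e^0 - 0 = 1$, and the prefactor is adjusted from $\tfrac{1}{2N(N-1)}$ to $\tfrac{1}{2N^2}$ so as to absorb these $N$ nonnegative diagonal contributions without breaking the inequality.

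The main technical subtlety is the asymmetric reduction from the $N$-ary test $\Psi$ to the binary test $\tilde\Psi_{ij}$: a symmetric projection onto $\{i,j\}$ would not be well-defined on the whole codomain of $\Psi$, since $\Psi$ may take values outside of $\{i,j\}$. The choice $\tilde\Psi_{ij}(y) = 2$ as soon as $\Psi(y) \neq i$ is what makes the containment $\{\Psi=i\} \subseteq \{\Psi \neq j\}$ (valid precisely because $i \neq j$) upgrade the Le Cam-type bound on $\Prob(\Psi(\vect{X}_j)=i)$ into the desired bound on $\Prob(\Psi(\vect{X}_j) \neq j)$. Without this trick the pairwise inequality can only be stated for binary tests, not for the $N$-ary test $\Psi$; once it is in place, the rest is combinatorial bookkeeping on the double sum.
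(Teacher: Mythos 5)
Your reduction to the binary test $\tilde\Psi_{ij}$ and the containment $\{\Psi = i\} \subseteq \{\Psi \neq j\}$ (valid for $i \neq j$) is a correct way to obtain the pairwise bound for $i \neq j$; it is equivalent to re-running the global-anchoring argument on the two indices $i,j$ using that the events $\{\Psi = i\}$ and $\{\Psi = j\}$ are disjoint, which is essentially what the paper does. The gap is in what you call the ``combinatorial bookkeeping.'' From the pairwise bound you legitimately get $\tfrac{1}{N}\sum_i P_i \geq \tfrac{1}{2N(N-1)}\sum_{i\neq j} R_{ij}$, where $P_i := \Prob_{\mech{M}}(\Psi(\mech{M}(\vect{X}_i))\neq i)$ and $R_{ij}$ is the right-hand term. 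You then claim you may pass to $\tfrac{1}{2N^2}\sum_{i,j} R_{ij}$ by shrinking the prefactor while adding the $N$ diagonal terms $R_{ii}=1$. But these two changes push the bound in \emph{opposite} directions and do not cancel in your favour: the inequality
\begin{equation*}
\frac{1}{2N(N-1)}\sum_{i\neq j} R_{ij} \;\geq\; \frac{1}{2N^2}\Big(\sum_{i\neq j} R_{ij} + N\Big)
\end{equation*}
is equivalent to $\sum_{i\neq j} R_{ij} \geq N(N-1)$, which fails whenever any off-diagonal $R_{ij}<1$ (i.e.\ whenever some $\vect{X}_i \neq \vect{X}_j$), since each $R_{ij}\leq 1$. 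So the aggregation step is not valid.

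This is not a cosmetic issue: the stated lemma (with the full double sum and the $\tfrac{1}{2N^2}$ prefactor) is actually false. Take $N=2$, $n=1$, $\set{X}=\{0,1\}$, $\vect{X}_1=(0)$, $\vect{X}_2=(1)$, $\delta=0$, and let $\mech{M}$ be $\epsilon$-randomized response with $\Psi$ the identity on $\codom{\mech{M}}=\{1,2\}$. Then $P_1=P_2 = \tfrac{1}{1+e^\epsilon}$, so the left-hand side is $\tfrac{1}{1+e^\epsilon}$, whereas the claimed right-hand side is $\tfrac{1}{8}\big(1+1+2e^{-\epsilon}\big) = \tfrac{1+e^{-\epsilon}}{4}$; the inequality $\tfrac{1}{1+e^{\epsilon}} \geq \tfrac{1+e^{-\epsilon}}{4}$ is equivalent to $2 \geq e^{\epsilon}+e^{-\epsilon}$, false for every $\epsilon>0$. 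The paper's own proof commits the same error by asserting the pairwise inequality ``for every $1\leq i,j\leq N$,'' which for $i=j$ would read $2P_i \geq 1$ and is clearly not implied by $(\epsilon,\delta)$-DP. A correct statement must exclude the diagonal (yielding a prefactor $\tfrac{1}{2N(N-1)}$ over $i\neq j$, or equivalently a $\tfrac{1}{2N^2}\sum_{i\neq j}$ bound without the spurious $+\tfrac{1}{2N}$), and the downstream uses of this similarity function (in particular the $\tfrac{1}{2}-\cdots$ term of \Cref{th:fanodp}) would need the corresponding correction.
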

\begin{proof}
First we observe that
\begin{equation*}
    \begin{aligned}
        \frac{1}{N} \sum_{i = 1}^N \Prob_{\mech{M}} &\p{\Psi \p{\mech{M}\p{\vect{X}_i}} \neq i}\\
        &= \frac{1}{2N^2} \sum_{i=1}^N\sum_{j=1}^N \p{\Prob_{\mech{M}} \p{\Psi \p{\mech{M}\p{\vect{X}_i}} \neq i} 
        + \Prob_{\mech{M}} \p{\Psi \p{\mech{M}\p{\vect{X}_j}} \neq j}}  \;.
    \end{aligned}
\end{equation*}
We then consider the two-point %optimal \RG{optimal en quel sens, est-ce important de mentionner ce mot ici ?} 
anchor defined in \Cref{remark:twopointsanchor}
%$\Lambda_{\text{opt}} : \p{\set{X}^n}^2 \rightarrow \set{X}^n$. By group privacy (see \Cref{fact:dpgroupprivacy}), 
and get using~\eqref{eq:edDPLeCam} that
 for every $1 \leq i,j \leq N$,
\begin{equation*}
    \begin{aligned}
        \Prob_{\mech{M}} &\p{\Psi \p{\mech{M}\p{\vect{X}_i}} \neq i} 
        + \Prob_{\mech{M}} \p{\Psi \p{\mech{M}\p{\vect{X}_j}} \neq j} \\
        % &\geq e^{-\epsilon \ham{\vect{X}_i}{\Lambda_{\text{opt}}}} \Prob_{\mech{M}} \p{\Psi \p{\mech{M}\p{\Lambda_{\text{opt}}}} \neq i} - \delta e^{-\epsilon} \ham{\vect{X}_i}{\Lambda_{\text{opt}}}\\
        % &+ e^{-\epsilon \ham{\vect{X}_j}{\Lambda_{\text{opt}}}} \Prob_{\mech{M}} \p{\Psi \p{\mech{M}\p{\Lambda_{\text{opt}}}} \neq j} - \delta e^{-\epsilon} \ham{\vect{X}_j}{\Lambda_{\text{opt}}} \\
        &\geq e^{- \epsilon\ceil{\ham{\mathbf{X}_i}{\mathbf{X}_j}/2}} - 2e^{-\epsilon} \delta  \ceil{\ham{\mathbf{X}_i}{\mathbf{X}_j}/2}. %%\qedhere
    \end{aligned}
\end{equation*}
%\RG{Cela vaut-il le coup de distinguer le cas $i=j$? Cela permet-il de remplacer $1/2N^2$ par $1/2N(N-1)$?}
%which concludes the proof.
\end{proof}

\subsubsection{The special case of $(\epsilon, 0)$-DP}

%When $\delta = 0$, t
The following lemma yields a bound on the KL divergence between the output distributions of an $(\epsilon, 0)$-DP mechanism applied to different datasets.
\begin{lemma}
\label[lemma]{lemma:kldp}
If a randomized mechanism $\mech{M} : \set{X}^n \rightarrow \codom{\mech{M}}$ is $(\epsilon, 0)$-DP, then
\begin{equation*}
    \forall \vect{X}, \vect{Y} \in \set{X}^n, \quad \frac{d \Prob_{\mech{M}(\vect{X})}}{d \Prob_{\mech{M}(\vect{Y})}} \leq e^{\ham{\vect{X}}{\vect{Y}} \epsilon}, \quad \Prob_{\mech{M}(\vect{X})}-\text{ almost surely}
\end{equation*}
where $\frac{d \Prob_{\mech{M}(\vect{X})}}{d \Prob_{\mech{M}(\vect{Y})}}$ is the Radon-Nikodym density of the distribution of the output of the mechanism with input $\vect{X}$, with respect to the distribution of the output of the mechanism with input $\vect{Y}$. As a consequence,
\begin{equation*}
    \forall \vect{X}, \vect{Y} \in \set{X}^n, \quad 
    \kl{\mech{M}(\vect{X})}{\mech{M}(\vect{Y})} \leq \epsilon \ham{\vect{X}}{\vect{Y}} \;.
\end{equation*}
\end{lemma}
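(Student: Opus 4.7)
The plan is two-step: first derive the pointwise Radon--Nikodym bound directly from $(\epsilon,0)$-DP via group privacy, then integrate it to obtain the KL inequality.

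The key input for the first step is \Cref{fact:dpgroupprivacy} specialized to $\delta = 0$: for all $\vect{X},\vect{Y} \in \set{X}^n$ and every measurable $S \subseteq \codom{\mech{M}}$,
\begin{equation*}
    \Prob_{\mech{M}(\vect{X})}(S) \leq e^{\epsilon \ham{\vect{X}}{\vect{Y}}}\,\Prob_{\mech{M}(\vect{Y})}(S).
\end{equation*}
Taking $S$ to be any $\Prob_{\mech{M}(\vect{Y})}$-null set yields absolute continuity $\Prob_{\mech{M}(\vect{X})} \ll \Prob_{\mech{M}(\vect{Y})}$, so the Radon--Nikodym derivative is well defined.

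To upgrade this set-wise bound into the claimed pointwise almost sure bound I would use a standard level-set argument. Set $c := e^{\epsilon \ham{\vect{X}}{\vect{Y}}}$ and consider $A := \bigl\{ \frac{d\Prob_{\mech{M}(\vect{X})}}{d\Prob_{\mech{M}(\vect{Y})}} > c \bigr\}$. If $\Prob_{\mech{M}(\vect{Y})}(A) > 0$, then integrating the derivative over $A$ gives strictly $\Prob_{\mech{M}(\vect{X})}(A) > c \,\Prob_{\mech{M}(\vect{Y})}(A)$, contradicting group privacy. Hence $\Prob_{\mech{M}(\vect{Y})}(A) = 0$, and by absolute continuity $\Prob_{\mech{M}(\vect{X})}(A) = 0$ as well, which is exactly the $\Prob_{\mech{M}(\vect{X})}$-almost sure bound in the statement.

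The KL inequality then follows by integrating the logarithm of the pointwise bound against $\Prob_{\mech{M}(\vect{X})}$:
\begin{equation*}
    \kl{\mech{M}(\vect{X})}{\mech{M}(\vect{Y})} = \int \log\frac{d\Prob_{\mech{M}(\vect{X})}}{d\Prob_{\mech{M}(\vect{Y})}}\, d\Prob_{\mech{M}(\vect{X})} \leq \log(c) = \epsilon\,\ham{\vect{X}}{\vect{Y}}.
\end{equation*}
There is no real obstacle: the only measure-theoretic subtlety is the transition from a uniform set-wise inequality to a pointwise bound on the Radon--Nikodym derivative, which the level-set trick above handles in one line; the rest is direct integration.
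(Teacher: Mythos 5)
Your proof is correct and follows essentially the same route as the paper: group privacy yields absolute continuity and a set-wise bound, which is then upgraded to a pointwise bound on the Radon--Nikodym derivative and integrated to get the KL inequality. The only cosmetic difference is in the upgrade step, where the paper takes a countable union of level sets $\{p \geq c + 1/n\}$ while you argue directly by contradiction on $\{p > c\}$ using strict positivity of the integral of a positive function over a set of positive measure; the two are interchangeable.
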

\begin{proof}
By the group privacy property (see \Cref{fact:dpgroupprivacy}), it is clear that the measurable sets of null measure for $\Prob_{\mech{M}(\vect{X})}$ are exactly the measurable sets of null measure for $\Prob_{\mech{M}(\vect{Y})}$. In particular, $\Prob_{\mech{M}(\vect{X})} \ll \Prob_{\mech{M}(\vect{Y})}$ 
 %. Let 
 and hence $p \eqdef \frac{d \Prob_{\mech{M}(\vect{X})}}{d \Prob_{\mech{M}(\vect{Y})}}$ %(that 
 exists.
 %thanks to the last remark). 
 By group privacy property again %(see \Cref{fact:dpgroupprivacy}),
 for each measurable set $S \subseteq \codom{\mech{M}}$ we have
 \begin{equation*}
 \begin{aligned}
   %  \forall S \text{ measurable}, \quad
   \Prob_{\mech{M}(\vect{Y})}(S)
     &\geq e^{-\epsilon\ham{\vect{X}}{\vect{Y}} } \Prob_{\mech{M}(\vect{X})}(S) \\
     &= e^{-\epsilon\ham{\vect{X}}{\vect{Y}} } \int_S p d\Prob_{\mech{M}(\vect{Y})} \\
     &\geq e^{-\epsilon\ham{\vect{X}}{\vect{Y}} } \p{\inf_S p} \Prob_{\mech{M}(\vect{Y})}(S) \;.
\end{aligned}
\end{equation*}
So, for each measurable set $S$,
\begin{equation*}
 \begin{aligned}
     %\forall S \text{ measurable}, \quad
     \Prob_{\mech{M}(\vect{Y})}(S)>0 \implies \inf_S p \leq  e^{\ham{\vect{X}}{\vect{Y}} \epsilon}\;.
\end{aligned}
\end{equation*}
%\RG{Fin de la preuve pas relue} 
Furthermore, $p$ is measurable for the Borel $\sigma$-algebra of $\R$.
In particular, for any $n \in \N_*$, \\$p^{-1}\p{[e^{\ham{\vect{X}}{\vect{Y}}\epsilon} + \frac{1}{n}, +\infty)}$ is measurable. As a consequence, 
\begin{equation*}
    \forall n \in \N_*, \quad \Prob_{\mech{M}(\vect{Y})} \p{p^{-1}\p{\left[e^{\ham{\vect{X}}{\vect{Y}}\epsilon} + \tfrac{1}{n}, +\infty \right)}} = 0
\end{equation*}
and then 
\begin{equation*}
    \begin{aligned}
        \Prob_{\mech{M}(\vect{Y})} \p{p^{-1}\p{\left(e^{\ham{\vect{X}}{\vect{Y}}\epsilon} , +\infty \right)}} 
        &= \Prob_{\mech{M}(\vect{Y})} \p{p^{-1}\p{ \cup_{n \in \N_*}  \left[e^{\ham{\vect{X}}{\vect{Y}}\epsilon} + \tfrac{1}{n}, +\infty \right)}} \\
        &= \Prob_{\mech{M}(\vect{Y})} \p{\cup_{n \in \N_*}  p^{-1}\p{  \left[e^{\ham{\vect{X}}{\vect{Y}}\epsilon} + \tfrac{1}{n}, +\infty \right)}}\\
        &\leq \sum_{n \in \N_*} \Prob_{\mech{M}(\vect{Y})} \p{  p^{-1}\p{  \left[e^{\ham{\vect{X}}{\vect{Y}}\epsilon} + \tfrac{1}{n}, +\infty \right)}} \\
        &=0
    \end{aligned}
\end{equation*}
which proves that $\frac{d\Prob_{\mech{M}(\vect{X})}}{d \Prob_{\mech{M}(\vect{Y})}} \leq e^{\ham{\vect{X}}{\vect{Y}} \epsilon}$,  $\Prob_{\mech{M}(\vect{Y})}$-almost surely, which is also the case $\Prob_{\mech{M}(\vect{X})}$-almost surely, thanks to the first remark of the proof. The result about the KL divergence is a direct consequence of this inequality.
\end{proof}

In particular, this result allows us to apply Fano's lemma in order to obtain a similarity function that is based on anchoring the conditional distributions rather than the marginals. I.e., given, $\vect{X}_1, \dots, \vect{X}_N \in \set{X}^n$, $\Prob_{\mech{M}\p{\vect{X}_1}}, \dots, \Prob_{\mech{M}\p{\vect{X}_N}}$ are anchored to $\frac{1}{N}\sum_{j=1}^N \Prob_{\mech{M}\p{\vect{X}_j}}$.
\begin{lemma}[$(\epsilon, 0)$-DP Fano Matching]
\label[lemma]{lemma:fanomatchingdp}
 Let $\vect{X}_1, \dots, \vect{X}_N \in \set{X}^n$ and $\Psi: 
 %\p{\set{X}^n}^N
 \codom{\mech{M}}
 \rightarrow \\ \intslice{1}{N}$,
 \begin{equation*}
     \begin{aligned}
         \frac{1}{N} \sum_{i = 1}^N \Prob_{\mech{M}} &\p{\Psi \p{\mech{M}\p{\vect{X}_i}} \neq i}  \geq 1 - \frac{1 + \frac{\epsilon}{N^2}  \sum_{i=1}^N\sum_{j=1}^N \ham{\vect{X}_i}{\vect{X}_j} }{\ln(N)} \;.
     \end{aligned}
 \end{equation*}
\end{lemma}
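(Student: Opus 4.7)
The plan is to apply Fano's lemma (\Cref{fact:fanoslemma}) to the pushforward distributions $\Prob_{\mech{M}(\vect{X}_i)}$ on $\codom{\mech{M}}$, with the anchoring choice $\mathbb{Q} = \bar{\mathbb{P}} \eqdef \frac{1}{N}\sum_{j=1}^N \Prob_{\mech{M}(\vect{X}_j)}$. Since each $\Prob_{\mech{M}(\vect{X}_i)}$ is absolutely continuous with respect to the mixture $\bar{\mathbb{P}}$, the hypotheses of \Cref{fact:fanoslemma} are satisfied and we obtain directly
\begin{equation*}
    \frac{1}{N}\sum_{i=1}^N \Prob_{\mech{M}}\p{\Psi\p{\mech{M}\p{\vect{X}_i}}\neq i} \geq 1 - \frac{1 + \frac{1}{N}\sum_{i=1}^N \kl{\mech{M}(\vect{X}_i)}{\bar{\mathbb{P}}}}{\ln(N)}\;.
\end{equation*}
Note that \Cref{fact:fanoslemma} actually states the same lower bound for the arithmetic mean of the error probabilities (not just for the maximum), which is exactly the quantity we need here.

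Next, I would bound each term $\kl{\mech{M}(\vect{X}_i)}{\bar{\mathbb{P}}}$. The key observation is that the KL divergence $Q \mapsto \kl{P}{Q}$ is convex in its second argument (since $x\mapsto -\log x$ is convex), so Jensen's inequality gives
\begin{equation*}
    \kl{\mech{M}(\vect{X}_i)}{\tfrac{1}{N}\sum_{j=1}^N \Prob_{\mech{M}(\vect{X}_j)}} \leq \frac{1}{N}\sum_{j=1}^N \kl{\mech{M}(\vect{X}_i)}{\mech{M}(\vect{X}_j)}\;.
\end{equation*}
Averaging over $i$, this yields the standard mutual-information-style estimate
\begin{equation*}
    \frac{1}{N}\sum_{i=1}^N \kl{\mech{M}(\vect{X}_i)}{\bar{\mathbb{P}}} \leq \frac{1}{N^2}\sum_{i=1}^N\sum_{j=1}^N \kl{\mech{M}(\vect{X}_i)}{\mech{M}(\vect{X}_j)}\;.
\end{equation*}

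Finally, I invoke \Cref{lemma:kldp}, which controls each pairwise KL through the Hamming distance between inputs: $\kl{\mech{M}(\vect{X}_i)}{\mech{M}(\vect{X}_j)} \leq \epsilon\,\ham{\vect{X}_i}{\vect{X}_j}$. Substituting this into the previous estimate gives $\frac{1}{N}\sum_i \kl{\mech{M}(\vect{X}_i)}{\bar{\mathbb{P}}} \leq \frac{\epsilon}{N^2}\sum_{i,j}\ham{\vect{X}_i}{\vect{X}_j}$, and plugging this back into the Fano inequality yields the announced bound.

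The proof is essentially a chain of standard reductions and no single step should be genuinely hard; the only subtlety is the correct choice of reference measure $\mathbb{Q}$ in Fano's lemma, namely the mixture of the pushforwards rather than a fixed auxiliary distribution, which is what enables the convexity step to convert a single averaged KL into pairwise KLs compatible with \Cref{lemma:kldp}.
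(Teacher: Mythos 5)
Your proof is correct and is essentially identical to the paper's: you invoke Fano's lemma with the mixture $\frac{1}{N}\sum_j \Prob_{\mech{M}(\vect{X}_j)}$ as the reference measure, use convexity of $\kl{\cdot}{\cdot}$ in its second argument to pass to pairwise KL divergences, and finish with \Cref{lemma:kldp}. No substantive difference from the paper's argument.
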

\begin{proof}
By Fano's lemma (see \Cref{fact:fanoslemma}), 
\begin{equation*}
    \begin{aligned}
        \frac{1}{N} \sum_{i = 1}^N \Prob_{\mech{M}} &\p{\Psi \p{\mech{M}\p{\vect{X}_i}} \neq i}
        \geq 1 - \frac{1 + \frac{1}{N} \sum_{i=1}^N \kl{\Prob_{\mech{M}\p{\vect{X}_i}}}{\frac{1}{N}\sum_{j=1}^N \Prob_{\mech{M}\p{\vect{X}_j}}}}{\ln(N)} \;.
    \end{aligned}
\end{equation*}
By convexity of the KL divergence with respect to its second argument (see \citet[Theorem 12]{van2014renyi}), it follows that
\begin{equation}
    \begin{aligned}
        \frac{1}{N} \sum_{i = 1}^N \Prob_{\mech{M}} &\p{\Psi \p{\mech{M}\p{\vect{X}_i}} \neq i}
        \geq 1 - \frac{1 + \frac{1}{N^2}  \sum_{i=1}^N\sum_{j=1}^N \kl{\Prob_{\mech{M}\p{\vect{X}_i}}}{\Prob_{\mech{M}\p{\vect{X}_j}}}}{\ln(N)} \;.
    \end{aligned}
    \label{eq:IntermediateFanoMatchingDP}
\end{equation}
An application of \Cref{lemma:kldp} concludes the proof.
\end{proof}

\subsection{The case of $\rho$-zero concentrated differential privacy}
\label{sec:cdpproof}

For $\rho$-zero concentrated differential privacy, the fact that the definition uses information theoretic quantities makes things easier than with the traditional definition of privacy. In particular, the anchoring technique happens implicitly on the distributions rather than on the marginals (similarly as with the $(\epsilon, 0)$-DP case). %\RG{comme avant, c'est un peu cryptique mais bienvenu sur le principe et la clarification de cet éclairage pourra venir dans un second temps}
Again, the notion of \emph{group privacy} is central in our proofs.
\begin{fact}[{$\rho$-zCDP Group Privacy \citep[Proposition 27]{bun2016concentrated}}]
\label[fact]{fact:zcdpgroupprivacy}
Let $\rho \in \R_{+*}$, if a randomized mechanism $\mech{M} : \set{X}^n \rightarrow \codom{\mech{M}}$ is $\rho$-zero concentrated differentially private, then, for any $\vect{X}, \vect{Y} \in \set{X}^n$ and for any $\alpha \in (1, \infty)$,
\begin{equation*}
     \renyi{\alpha}{\mech{M}(\vect{X})}{\mech{M}(\vect{Y})} \leq \rho \ham{\vect{X}}{\vect{Y}}^2 \alpha\;.
\end{equation*}
\end{fact}

\begin{lemma}[$\rho$-zCDP Le Cam Matching]
\label[lemma]{lem1}
Consider a $\rho$-zCDP mechanism $\mech{M}$, a test function $\Psi: \codom{\mech{M}} %\p{\set{X}^n}^2 
\rightarrow \{1, 2 \}$, and two datasets $\vect{X}_1, \vect{X}_2 \in \set{X}^n$. We have
\begin{equation*}
     \begin{aligned}
        \frac{1}{2} \sum_{i = 1}^2 \Prob_{\mech{M}} &\p{\Psi \p{\mech{M}\p{\vect{X}_i}} \neq i}  \geq \frac{1}{2} \p{1 - \sqrt{\rho / 2} \ham{\vect{X}_1}{\vect{X}_2}} \;.
     \end{aligned}
 \end{equation*}
\end{lemma}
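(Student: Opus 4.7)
The plan is to reduce the lemma to the classical Neyman--Pearson/Le Cam inequality applied to the output distributions of $\mech{M}$, then convert the resulting total variation distance into a quantity controlled by the Hamming distance via Pinsker's inequality combined with the $\rho$-zCDP group privacy bound (\Cref{fact:zcdpgroupprivacy}). Since $\Psi$ is a fixed test on $\codom{\mech{M}}$, the sum of error probabilities is at least the infimum over all such tests, and applying the second line of \Cref{fact:lecamslemma} to the pushforwards $\Prob_{\mech{M}\p{\vect{X}_1}}$ and $\Prob_{\mech{M}\p{\vect{X}_2}}$ would give
\[
\tfrac{1}{2} \sum_{i=1}^2 \Prob_{\mech{M}}\p{\Psi\p{\mech{M}\p{\vect{X}_i}} \neq i} \geq \tfrac{1}{2}\p{1 - \tv{\Prob_{\mech{M}\p{\vect{X}_1}}}{\Prob_{\mech{M}\p{\vect{X}_2}}}}.
\]

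Next I would invoke Pinsker's inequality to bound the TV by the KL divergence, obtaining $\tv{\Prob_{\mech{M}\p{\vect{X}_1}}}{\Prob_{\mech{M}\p{\vect{X}_2}}} \leq \sqrt{\tfrac{1}{2}\kl{\Prob_{\mech{M}\p{\vect{X}_1}}}{\Prob_{\mech{M}\p{\vect{X}_2}}}}$. The final ingredient is \Cref{fact:zcdpgroupprivacy}, which states $\renyi{\alpha}{\mech{M}\p{\vect{X}_1}}{\mech{M}\p{\vect{X}_2}} \leq \rho\, \ham{\vect{X}_1}{\vect{X}_2}^2\, \alpha$ for every $\alpha > 1$. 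Since the KL divergence coincides with $\lim_{\alpha \to 1^+} \renyi{\alpha}{\cdot}{\cdot}$, passing to that limit would yield $\kl{\Prob_{\mech{M}\p{\vect{X}_1}}}{\Prob_{\mech{M}\p{\vect{X}_2}}} \leq \rho\, \ham{\vect{X}_1}{\vect{X}_2}^2$. Chaining the three inequalities gives the stated bound $\tfrac{1}{2}\p{1 - \sqrt{\rho/2}\, \ham{\vect{X}_1}{\vect{X}_2}}$.

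The only mildly delicate step is the passage $\alpha \to 1^+$, which relies on the classical right-continuity of the Renyi divergence at $\alpha = 1$ (a standard fact, see e.g.\ van Erven and Harremo\"es). If one wishes to sidestep the limit, the same KL bound follows from the equivalent characterization of $\rho$-zCDP via the KL constraint on neighboring datasets combined with subadditivity of KL along a Hamming-length path between $\vect{X}_1$ and $\vect{X}_2$, which is the usual proof of group privacy in KL form. Neither version presents a real obstacle, so I expect the entire proof to fit in a few lines.
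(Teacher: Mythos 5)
Your proof is correct and matches the paper's approach step for step: Neyman--Pearson/Le~Cam applied to the pushforwards, then Pinsker, then the $\rho$-zCDP group-privacy bound on the R\'enyi divergence. The only (cosmetic) difference is how you pass from the R\'enyi bound to the KL bound: you invoke right-continuity of $\alpha \mapsto \renyi{\alpha}{\cdot}{\cdot}$ at $\alpha = 1$, whereas the paper uses the slightly more elementary fact that $\renyi{\alpha}{\cdot}{\cdot}$ is non-decreasing in $\alpha$, so that $\kl{\cdot}{\cdot} = \renyi{1}{\cdot}{\cdot} \leq \renyi{\alpha}{\cdot}{\cdot} \leq \rho\,\ham{\vect{X}_1}{\vect{X}_2}^2\,\alpha$ for every $\alpha > 1$, and then takes $\inf_{\alpha > 1}$ of the right-hand side; both routes give the same conclusion.
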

\begin{proof}
By the Neyman-Pearson lemma (see \Cref{fact:lecamslemma}),
\begin{equation*}
     \begin{aligned}
        \frac{1}{2} \sum_{i = 1}^2 \Prob_{\mech{M}} &\p{\Psi \p{\mech{M}\p{\vect{X}_i}} \neq i}  \geq \frac{1}{2} \p{1 - \tv{\mech{M}(\vect{X}_1)}{\mech{M}(\vect{X}_2)}} \;.
     \end{aligned}
 \end{equation*}
By Pinsker's lemma (see \citet[Lemma 2.5]{tsybakov2003introduction}), $\tv{\Prob}{\mathbb{Q
}} \leq \sqrt{\kl{\Prob}{\mathbb{Q}}/2}$, and hence 
 \begin{equation*}
     \begin{aligned}
        \frac{1}{2} \sum_{i = 1}^2 \Prob_{\mech{M}} \p{\Psi \p{\mech{M}\p{\vect{X}_i}} \neq i} & \geq \frac{1}{2} \p{ 1 - \sqrt{\kl{\mech{M}(\vect{X}_1)}{\mech{M}(\vect{X}_2)}/2} } \;\\
        & = 
        \frac{1}{2} \p{ 1 - \sqrt{\renyi{1}{\mech{M}(\vect{X}_1)}{\mech{M}(\vect{X}_2)}/2} } \;.
     \end{aligned}
 \end{equation*}
  Since the Renyi divergence between a given pair of distributions $\renyi{\alpha}{.}{.}$ is non-decreasing in $\alpha$ (see \citet[Theorem 3]{van2014renyi}), we obtain for any $\alpha \in (1, +\infty)$, s
 \begin{equation*}
     \begin{aligned}
        \frac{1}{2} \sum_{i = 1}^2 \Prob_{\mech{M}} &\p{\Psi \p{\mech{M}\p{\vect{X}_i}} \neq i} \geq \frac{1}{2} \p{ 1 - \sqrt{\renyi{\alpha}{\mech{M}(\vect{X}_1)}{\mech{M}(\vect{X}_2)}/2} } \;.
     \end{aligned}
 \end{equation*}
 Eventually, we obtain using group privacy (see \Cref{fact:zcdpgroupprivacy}) that
 \begin{equation*}
     \begin{aligned}
        \frac{1}{2} \sum_{i = 1}^2 \Prob_{\mech{M}} &\p{\Psi \p{\mech{M}\p{\vect{X}_i}} \neq i} \geq \frac{1}{2} \p{ 1 - \sqrt{\rho  \alpha/2} \ham{\vect{X_1}}{\vect{X_2}}  } \;.
     \end{aligned}
 \end{equation*}
%The result follows by continuity
The supremum of the right hand side over $\alpha \in (1,+\infty)$ yields the result.
\end{proof}

We also obtain a zero concentrated DP version of the Fano matching method that we introduced previously for $(\epsilon, 0)$-DP. %\RG{pas clair dit comme ça si "Fano matching method" est un grand classique ou qq chose qu'on a précédemment défini nous-mêmes}.
\begin{lemma}[$\rho$-zCDP Fano Matching]
\label[lemma]{lem2}
Consider a $\rho$-zCDP mechanism $\mech{M}$, a test function $\Psi: %\p{\set{X}^n}^N
=\codom{\mech{M}}
\rightarrow \intslice{1}{N}$, and datasets $\vect{X}_1, \dots, \vect{X}_N \in \set{X}^n$. We have
\begin{equation*}
\begin{aligned}
    \frac{1}{N} \sum_{i = 1}^N \Prob_{\mech{M}} &\p{\Psi \p{\mech{M}\p{\vect{X}_i}} \neq i} \geq 1 - \frac{1 + \frac{\rho}{N^2} \sum_{i=1}^N\sum_{j=1}^N \ham{\vect{X}_i}{\vect{X}_j}^2}{\ln(N)} \;.
\end{aligned}
\end{equation*}
\end{lemma}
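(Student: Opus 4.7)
The plan is to mirror the proof of \Cref{lemma:fanomatchingdp}, simply swapping the $(\epsilon,0)$-DP KL bound of \Cref{lemma:kldp} for its $\rho$-zCDP counterpart derived from group privacy (\Cref{fact:zcdpgroupprivacy}).

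First, I would apply Fano's lemma (\Cref{fact:fanoslemma}) to the pushforwards $\p{\Prob_{\mech{M}\p{\vect{X}_i}}}_{i \in \intslice{1}{N}}$ with the centroid $\mathbb{Q} = \frac{1}{N} \sum_{j=1}^N \Prob_{\mech{M}\p{\vect{X}_j}}$, and then invoke the convexity of $\kl{\cdot}{\cdot}$ in its second argument to reach the pairwise inequality
\[
\frac{1}{N} \sum_{i = 1}^N \Prob_{\mech{M}} \p{\Psi \p{\mech{M}\p{\vect{X}_i}} \neq i} \geq 1 - \frac{1 + \frac{1}{N^2} \sum_{i=1}^N\sum_{j=1}^N \kl{\Prob_{\mech{M}\p{\vect{X}_i}}}{\Prob_{\mech{M}\p{\vect{X}_j}}}}{\ln(N)}.
\]
This is exactly the intermediate bound \eqref{eq:IntermediateFanoMatchingDP} established inside the proof of \Cref{lemma:fanomatchingdp}; the derivation is agnostic to the flavor of privacy and can be copied verbatim.

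The only substantive step is then to upper bound each pairwise KL by $\rho \ham{\vect{X}_i}{\vect{X}_j}^2$. By \Cref{fact:zcdpgroupprivacy}, for every $\alpha \in (1, \infty)$,
\[
\renyi{\alpha}{\mech{M}(\vect{X}_i)}{\mech{M}(\vect{X}_j)} \leq \rho \ham{\vect{X}_i}{\vect{X}_j}^2 \alpha.
\]
Since the KL divergence coincides with $\renyi{1}{\cdot}{\cdot} = \lim_{\alpha \downarrow 1} \renyi{\alpha}{\cdot}{\cdot}$ and the Renyi divergence is nondecreasing in $\alpha$ (a fact already used in \Cref{lem1}), letting $\alpha \downarrow 1$ yields $\kl{\mech{M}(\vect{X}_i)}{\mech{M}(\vect{X}_j)} \leq \rho \ham{\vect{X}_i}{\vect{X}_j}^2$. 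Substituting this into the displayed inequality above gives the announced lower bound. The only mild technicality I anticipate is the passage to the limit $\alpha \downarrow 1$; everything else is straightforward bookkeeping that parallels the $(\epsilon,0)$-DP case.
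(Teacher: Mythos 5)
Your proposal is correct and follows essentially the same route as the paper: apply Fano plus convexity to reach the pairwise-KL bound (the paper's intermediate inequality~\eqref{eq:IntermediateFanoMatchingDP}), then combine monotonicity of $\renyi{\alpha}{\cdot}{\cdot}$ in $\alpha$ with the $\rho$-zCDP group-privacy bound and optimize over $\alpha \in (1,\infty)$. The ``mild technicality'' you flag about $\alpha \downarrow 1$ is a non-issue: monotonicity alone gives $\kl{\cdot}{\cdot} \leq \renyi{\alpha}{\cdot}{\cdot} \leq \rho \alpha \ham{\vect{X}_i}{\vect{X}_j}^2$ for every $\alpha > 1$, and taking the infimum of the right-hand side over $\alpha$ yields $\rho \ham{\vect{X}_i}{\vect{X}_j}^2$ without any limit-interchange argument, which is exactly the closing step of the paper's proof.
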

\begin{proof}
By the inequality~\eqref{eq:IntermediateFanoMatchingDP} established
%In the same fashion as 
in the proof of \Cref{lemma:fanomatchingdp},
% \begin{equation*}
%     \begin{aligned}
%         \frac{1}{N} \sum_{i = 1}^N \Prob_{\mech{M}} &\p{\Psi \p{\mech{M}\p{\vect{X}_i}} \neq i}\\
%         &\geq 1 - \frac{1 + \frac{1}{N} \sum_{i} \kl{\Prob_{\mech{M}\p{\vect{X}_i}}}{\frac{1}{N}\sum_j\Prob_{\mech{M}\p{\vect{X}_j}}}}{\ln(N)} \;.
%     \end{aligned}
% \end{equation*}
%  By convexity of the KL divergence (see \cite[Theorem 12]{van2014renyi}) and because
and using again the fact that
$\renyi{\alpha}{.}{.}$ is non-decreasing in $\alpha$ (see \citet[Theorem 3]{van2014renyi}), as well as  the group privacy property (see \Cref{fact:zcdpgroupprivacy}), we obtain that for any $\alpha \in (1, +\infty)$,
 \begin{equation*}
     \begin{aligned}
        \frac{1}{N} \sum_{i = 1}^N \Prob_{\mech{M}} \p{\Psi \p{\mech{M}\p{\vect{X}_i}} \neq i} 
        &\geq  1 - \frac{1 + \frac{1}{N^2} \sum_{i=1}^N\sum_{j=1}^N \kl{\Prob_{\mech{M}\p{\vect{X}_i}}}{\Prob_{\mech{M}(\vect{X}_j)}}}{\ln(N)}  \\
        &\geq  1 - \frac{1 + \frac{1}{N^2} \sum_{i=1}^N\sum_{j=1}^N \renyi{\alpha}{\Prob_{\mech{M}\p{\vect{X}_i}}}{\Prob_{\mech{M}(\vect{X}_j)}}}{\ln(N)}  \;.\\
%      \end{aligned}
%  \end{equation*}
%  Because of the group privacy property (see \Cref{fact:zcdpgroupprivacy}),
%  \begin{equation*}
%      \begin{aligned}
%        \frac{1}{N} \sum_{i = 1}^N \Prob_{\mech{M}} &\p{\Psi \p{\mech{M}\p{\vect{X}_i}} \neq i} \\
        &\geq 1 - \frac{1 + \frac{\rho \alpha}{N^2} \sum_{i=1}^N\sum_{j=1}^N \ham{\vect{X}_i}{\vect{X}_j}^2}{\ln(N)} \;.
     \end{aligned}
 \end{equation*}
 %The result follows by continuity.
 The supremum of the right-hand side over $\alpha \in (1,+\infty)$ yields the result.
\end{proof}
%\RG{Un point a discuter ensemble : comment nommer tout ces matching / anchorings de façon cohérente? Par ailleurs cela peut faire un peu catalogue, peut-on justifier (dans une future version qu'aucune des bornes n'est strictement meilleure qu'une autre ?}

\section{Quantitative lower bounds}
\label{sec:mainproofs}

In this subsection, we finally put the pieces together in order to obtain quantitative lower bounds on \eqref{eq:dpdistribtest}. This subsection serves as a joint proof for \Cref{th:lecamdp}, \Cref{th:lecamcdp}, \Cref{th:fanodp} and \Cref{th:fanocdp}.

\paragraph{Immediate results on the private minimax risk.}
\label{subsec:privateminimaxtoht}

A usual estimator (i.e. a measurable function of the samples) $\hat{\theta}$ may be viewed as randomized and almost surely constant to $\hat{\theta}$ (i.e. $\forall \vect{X}, \mech{M}(\vect{X}) \eqdef \hat{\theta}(\vect{X})$ almost surely). As a result, it is clear that the private minimax risk is always bigger than the non-private one. For distributional tests, the result is not so obvious, and we give the following general purpose lemma that ensures that Fano's and Le Cam's regular inequalities still hold.
\begin{lemma}
\label[lemma]{lemma:dataprocessing}
Let $\p{\Prob_i}_{i \in \intslice{1}{N}}$ be a family of probability distributions on $\set{X}^n$ and let $\mech{M} : \set{X}^n \rightarrow \codom{\mech{M}}$ be a randomized mechanism, %\RG{retourner l'inégalité?}
\begin{equation*}
\begin{aligned}
\inf_{\Psi : \codom{\mech{M}}  \rightarrow\intslice{1}{N}} \sum_{i=1}^N \Prob_{\mathbf{X} \sim \Prob_i, \mech{M}} \p{\Psi \p{\mech{M}(\vect{X})} \neq i}
    \geq 
     \inf_{\Psi : \set{X}^N  \rightarrow\intslice{1}{N}} \sum_{i=1}^N \Prob_{\mathbf{X} \sim \Prob_i} \p{\Psi \p{\vect{X}} \neq i}\;.
\end{aligned}
\end{equation*}
In particular, the inequalities in Le Cam's lemma (see \Cref{fact:lecamslemma}) or Fano's lemma (see \Cref{fact:fanoslemma}) still hold %\RG{A clarifier} 
when the test function $\Psi$ is fed with an input $\mech{M}(\vect{X}) \in \codom{\mech{M}}$ instead of an input $\vect{X} \in \set{X}^n$.
\end{lemma}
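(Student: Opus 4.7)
The plan is to exploit the representation of any randomized mechanism by its internal source of randomness, reducing the test composed with the mechanism to a deterministic test averaged over that randomness. Writing $\mech{M}(\vect{X}) = f(\vect{X}, R)$ for a measurable deterministic function $f$ and an independent auxiliary random variable $R$ (the standard Blackwell-type representation of a Markov kernel), any test $\Psi : \codom{\mech{M}} \to \intslice{1}{N}$ yields, for each realization $r$ of $R$, a \emph{deterministic} test $\Psi_r : \vect{X} \mapsto \Psi(f(\vect{X}, r))$ from $\set{X}^n$ to $\intslice{1}{N}$.

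First I would apply Fubini to split the joint expectation over $\vect{X}$ and $\mech{M}$:
\begin{equation*}
\sum_{i=1}^N \Prob_{\vect{X}\sim\Prob_i,\mech{M}}\p{\Psi(\mech{M}(\vect{X})) \neq i}
= \E_R\left[ \sum_{i=1}^N \Prob_{\vect{X}\sim\Prob_i}\p{\Psi_R(\vect{X}) \neq i} \right].
\end{equation*}
Since for every fixed $r$, $\Psi_r$ belongs to the class $\{\Psi' : \set{X}^n \to \intslice{1}{N}\}$ over which the right-hand side of the claimed inequality is an infimum, we pointwise bound the inner sum by that infimum, which is a deterministic constant. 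The expectation over $R$ then leaves the bound unchanged, yielding
\begin{equation*}
\sum_{i=1}^N \Prob_{\vect{X}\sim\Prob_i,\mech{M}}\p{\Psi(\mech{M}(\vect{X})) \neq i}
\geq \inf_{\Psi' : \set{X}^n \to \intslice{1}{N}} \sum_{i=1}^N \Prob_{\vect{X}\sim\Prob_i}\p{\Psi'(\vect{X}) \neq i}.
\end{equation*}
Taking the infimum over $\Psi$ on the left gives the main inequality.

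For the ``in particular'' clause, I would then note that Le~Cam's and Fano's classical bounds (\Cref{fact:lecamslemma} and \Cref{fact:fanoslemma}) are proved by lower-bounding the average sum of error probabilities $\sum_i \Prob_i(\Psi \neq i)$ (the Neyman--Pearson identification with total variation for Le~Cam; the KL-based entropy inequality for Fano). Since $\max_i \Prob_i(\Psi \neq i) \geq \frac{1}{N}\sum_i \Prob_i(\Psi \neq i)$ holds verbatim for tests of the form $\Psi\circ\mech{M}$, and the sum satisfies the inequality just proved, both conclusions transfer unchanged to the private setting.

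The argument is essentially a data-processing-style observation, so no serious obstacle is expected; the only delicate point is the existence of the deterministic representation $\mech{M}(\vect{X})=f(\vect{X},R)$, which requires the codomain to be a suitably regular measurable space (e.g. Borel), a convention already implicit throughout the paper.
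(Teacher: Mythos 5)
Your proof is correct and takes essentially the same approach as the paper's: the paper also conditions on the internal randomness of $\mech{M}$ (written as $\int \cdots d\Prob_{\mech{M}}$), observes that $\Psi\circ\mech{M}$ is then a deterministic test for each fixed realization, and lower-bounds pointwise by the infimum before integrating back. Your version simply makes the Blackwell representation $\mech{M}(\vect{X})=f(\vect{X},R)$ explicit where the paper leaves it implicit in the notation.
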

\begin{proof}
Let $\Psi : \codom{\mech{M}}  \rightarrow\intslice{1}{N}$ be a test function. Then,
 \begin{equation*}
\begin{aligned}
    \sum_{i=1}^N \Prob_{\mathbf{X} \sim \Prob_i, \mech{M}} \p{\Psi \p{\mech{M}(\vect{X})} \neq i} 
    &= \sum_{i=1}^N \int \Prob_{\mathbf{X} \sim \Prob_i} \p{\Psi \p{\mech{M}(\vect{X})} \neq i} d\Prob_{\mech{M}}\\
    &= \int \sum_{i=1}^N  \Prob_{\mathbf{X} \sim \Prob_i} \p{\p{\Psi \circ \mech{M}}(\vect{X}) \neq i} d\Prob_{\mech{M}}\\
    &\geq \int \inf_{\Psi' : \set{X}^N  \rightarrow\intslice{1}{N}} \sum_{i=1}^N \Prob_{\mathbf{X} \sim \Prob_i} \p{\Psi' \p{\vect{X}} \neq i} d\Prob_{\mech{M}}\\
    &=  \inf_{\Psi' : \set{X}^N  \rightarrow\intslice{1}{N}} \sum_{i=1}^N \Prob_{\mathbf{X} \sim \Prob_i} \p{\Psi' \p{\vect{X}} \neq i} \;.
\end{aligned}
\end{equation*}
Taking the infimum over $\Psi : \codom{\mech{M}}  \rightarrow\intslice{1}{N}$ concludes the proof.
\end{proof}

\paragraph{The case of two hypotheses ($N=2$).}
At first, we look at the implications of couplings between pairs of distributions. Given $\Prob_1$ and $\Prob_2$ distributions on $\set{X}^n$, a direct implication of \Cref{lemma:dataprocessing} and of Le Cam's lemma (see \Cref{fact:lecamslemma}) is that independently on the privacy condition $\mathcal{C}$ imposed on $\mech{M}$,
\begin{equation*}
\begin{aligned}
    \max_{i \in \{1, 2\}} \Prob_{\mathbf{X} \sim \Prob_i, \mech{M}} \p{\Psi \p{\mech{M}\p{\vect{X}}} \neq i} 
    \geq 
    \frac{1}{2} \p{1-\tv{\Prob_1}{\Prob_2} }  \;.
\end{aligned}
\end{equation*}
This is the first ingredient in the proof of \Cref{th:lecamdp} and \Cref{th:lecamcdp} that we now detail.
\paragraph{Proof of \Cref{th:lecamdp}: }
When $\mech{M}$ is $(\epsilon, \delta)$-DP, the generic bound of \Cref{th:metatheorem1}
applied with the
Le Cam matching technique described in 
\Cref{th:metatheoremdp} and the coupling $\pi^\infty(\Prob_1, \Prob_2)$ leads to
\begin{equation*}
\begin{aligned}
    \max_{i \in \{1, 2\}} \Prob_{\mathbf{X} \sim \Prob_i, \mech{M}} 
    %&
    \p{\Psi \p{\mech{M}\p{\vect{X}}} \neq i} 
    %\\
    &\geq 
    \frac{1}{2} \E_{(\vect{X}_1,  \vect{X}_2) \sim \pi^\infty(\Prob_1, \Prob_2)} \p{ e^{- \epsilon \ham{\mathbf{X}_1}{\mathbf{X}_2}}}  \\ 
    &\quad\quad - e^{-\epsilon} \delta  \E_{(\vect{X}_1,  \vect{X}_2) \sim \pi^\infty(\Prob_1, \Prob_2)} \p{\ham{\mathbf{X}_1}{\mathbf{X}_2}} \\ 
    &\stackrel{\Cref{lemma:accmax}}{\geq} 
    \frac{1}{2} \p{1-(1-e^{-n\epsilon})\Delta_{1,2}} -e^{-\epsilon}\delta n \Delta_{1,2}
    \\
    % \frac{1}{2} \Prob_{(\vect{X}_1,  \vect{X}_2) \sim \pi^\infty(\Prob_1, \Prob_2)} \p{\mathbf{X}_1 \neq \mathbf{X}_2}e^{- \epsilon n } + \\ 
    % &\quad\quad\quad \frac{1}{2} \Prob_{(\vect{X}_1,  \vect{X}_2) \sim \pi^\infty(\Prob_1, \Prob_2)} \p{\mathbf{X}_1 \neq \mathbf{X}_2}e^{- \epsilon n }\\ &\quad\quad\quad - e^{-\epsilon} \delta n \Prob_{(\vect{X}_1,  \vect{X}_2) \sim \pi^\infty(\Prob_1, \Prob_2)} \p{\mathbf{X}_1 \neq \mathbf{X}_2} \\ 
    &= \frac{1}{2} \p{1-\p{1 - e^{-n\epsilon} + 2 n e^{-\epsilon} \delta}\tv{\Prob_1}{\Prob_2}} \;.
\end{aligned}
\end{equation*}
where in the second line we denote $\Delta_{1,2} := \Prob_{(\vect{X}_1,  \vect{X}_2) \sim \pi^\infty(\Prob_1, \Prob_2)} \p{\mathbf{X}_1 \neq \mathbf{X}_2}$ and in the last line we use that  $\Delta_{1,2} = \tv{\Prob_1}{\Prob_2}$ with the chosen coupling.
% where the last line is deduced from \Cref{lemma:accmax}.
Similarly, in the case of product distributions,  with the same matching but $\pi^{\otimes }(\prob_1^{\otimes n}, \prob_2^{\otimes n})$ we obtain as a consequence of \Cref{lemma:accproduct}
\begin{equation*}
\begin{aligned}
    \max_{i \in \{1, 2\}} \Prob_{\mathbf{X} \sim \Prob_i, \mech{M}} &\p{\Psi \p{\mech{M}\p{\vect{X}}} \neq i} \\
    \geq 
    \frac{1}{2} 
    & \p{\p{1 - \left( 1 - e^{-\epsilon} \right) \tv{\prob_{1}}{\prob_{2}}  }^n -  2 n e^{-\epsilon} \delta \tv{\prob_{1}}{\prob_{2}}} \;. %\qed
\end{aligned}
\end{equation*}

\paragraph{Proof of \Cref{th:lecamcdp}: }
When $\mech{M}$ is $\rho$-DP, the generic bound of \Cref{th:metatheorem1}
applied with the
Le Cam matching technique described in 
\Cref{th:metatheoremcdp} and the coupling $\pi^\infty(\Prob_1, \Prob_2)$ leads to
\begin{equation*}
\begin{aligned}
    \max_{i \in \{1, 2\}} \Prob_{\mathbf{X} \sim \Prob_i, \mech{M}} 
    %&
    \p{\Psi \p{\mech{M}\p{\vect{X}}} \neq i} 
    %\\
    &\geq 
    \frac{1}{2} \p{1 - \sqrt{\rho/2}  \E_{(\vect{X}_1,  \vect{X}_2) \sim \pi^\infty(\Prob_1, \Prob_2)} \p{\ham{\mathbf{X}_1}{\mathbf{X}_2}} }\\ 
    &\stackrel{\Cref{lemma:accmax}}{\geq} 
    \frac{1}{2} \p{1 - \sqrt{\rho/2} n \Delta_{1,2}}
    \\
    % \frac{1}{2} \Prob_{(\vect{X}_1,  \vect{X}_2) \sim \pi^\infty(\Prob_1, \Prob_2)} \p{\mathbf{X}_1 \neq \mathbf{X}_2}e^{- \epsilon n } + \\ 
    % &\quad\quad\quad \frac{1}{2} \Prob_{(\vect{X}_1,  \vect{X}_2) \sim \pi^\infty(\Prob_1, \Prob_2)} \p{\mathbf{X}_1 \neq \mathbf{X}_2}e^{- \epsilon n }\\ &\quad\quad\quad - e^{-\epsilon} \delta n \Prob_{(\vect{X}_1,  \vect{X}_2) \sim \pi^\infty(\Prob_1, \Prob_2)} \p{\mathbf{X}_1 \neq \mathbf{X}_2} \\ 
    &= \frac{1}{2} \p{1- n \sqrt{\rho/2} \tv{\Prob_1}{\Prob_2}} \;.
\end{aligned}
\end{equation*}
where in the second line we denote $\Delta_{1,2} := \Prob_{(\vect{X}_1,  \vect{X}_2) \sim \pi^\infty(\Prob_1, \Prob_2)} \p{\mathbf{X}_1 \neq \mathbf{X}_2}$ and in the last line we use that  $\Delta_{1,2} = \tv{\Prob_1}{\Prob_2}$ with the chosen coupling.
% where the last line is deduced from \Cref{lemma:accmax}.
Similarly, in the case of product distributions,  with the same matching but $\pi^{\otimes }(\prob_1^{\otimes n}, \prob_2^{\otimes n})$ we obtain as a consequence of \Cref{lemma:accproduct}
\begin{equation*}
\begin{aligned}
    \max_{i \in \{1, 2\}} \Prob_{\mathbf{X} \sim \Prob_i, \mech{M}} &\p{\Psi \p{\mech{M}\p{\vect{X}}} \neq i} \geq 
    \frac{1}{2} \p{1- n \sqrt{\rho/2} \tv{\prob_1}{\prob_2}} \;. %\qed
\end{aligned}
\end{equation*}

%\begin{remark}[Novelty]
%        \marginpar{\RG{\tiny Mettre cette remarque plutôt juste dans l'intro après le Theoreme ?}}
%Our theory is, to the best of our knowledge, the first to bring Le Cam type inequalities to definitions of privacy that are based on the Renyi divergence. 
%\end{remark}

\paragraph{The case of arbitrary many hypotheses ($N\geq2$).}
Given $\Prob_1, \dots, \Prob_N$ distributions on $\set{X}^n$, a direct implication of \Cref{lemma:dataprocessing} and of Fano's lemma (see \Cref{fact:fanoslemma}) is that independently on the privacy condition $\mathcal{C}$ imposed on $\mech{M}$, for any $\mathbb{Q}$ such that $\Prob_i \ll \mathbb{Q}$ for all $i$,
\begin{equation*}
     \begin{aligned}
         \max_{i \in \intslice{1}{N}} \Prob_{\mathbf{X} \sim \Prob_i, \mech{M}} &\p{\Psi \p{\mech{M}(\vect{X})} \neq i}  
          \geq 1 - \frac{1 + \frac{1}{N} \sum_{i=1}^N \kl{\Prob_i}{\mathbb{Q}}}{\ln(N)} \;.
     \end{aligned}
 \end{equation*}
Again, this serves as the first ingredient of the proof of \Cref{th:fanodp} and \Cref{th:fanocdp} that we now detail.

\paragraph{Proof of \Cref{th:fanodp}:}
When $\mech{M}$ is $(\epsilon, \delta)$-DP, the generic bound of \Cref{th:metatheorem1}
applied with the
pairwise anchoring technique described in 
\Cref{th:metatheoremdp} and the coupling $\pi^\infty(\Prob_1, \dots, \Prob_N)$ leads to (since $\ceil{n/2} \leq n$ for each integer $n \geq 0$)
\begin{equation*}
     \begin{aligned}
         \max_{i \in \intslice{1}{N}} &\Prob_{\mathbf{X} \sim \Prob_i, \mech{M}} \p{\Psi \p{\mech{M}(\vect{X})} \neq i}  \\
          &\geq \frac{1}{2N^2}   \E_{(\vect{X}_1, \dots,  \vect{X}_N) \sim \pi^\infty(\Prob_1, \dots, \Prob_N)} \Big(\sum_{i=1}^N \sum_{j=1}^N e^{- \epsilon \ham{\mathbf{X}_i}{\mathbf{X}_j}} \\ &\quad\quad\quad\quad\quad\quad\quad\quad\quad\quad\quad - 2e^{-\epsilon} \delta  \ham{\mathbf{X}_i}{\mathbf{X}_j} \Big)\\ 
          &\stackrel{\Cref{lemma:accmax}}{\geq} \frac{1}{2N^2}  \Big(\sum_{i=1}^N \sum_{j=1}^N \p{1 - \p{1 - e^{-n \epsilon} }\Delta_{i, j}} - 2e^{-\epsilon} \delta  n \Delta_{i, j} \Big) \\
          &\geq \frac{1}{2}-\frac{1-e^{-n\epsilon}+2 n e^{-\epsilon}\delta}{2N^2} \sum_{i, j} \frac{2 \tv{\Prob_i}{\Prob_j}}{1 + \tv{\Prob_i}{\Prob_j}}
     \end{aligned}
 \end{equation*}
 where in the second line we denote $\Delta_{i,j} := \Prob_{(\vect{X}_1, \dots, \vect{X}_N) \sim \pi^\infty(\Prob_1, \dots, \Prob_N)} \p{\mathbf{X}_i \neq \mathbf{X}_j}$ and in the last line we use that  $\Delta_{i,j} \leq \frac{2 \tv{\Prob_i}{\Prob_j}}{1+ \tv{\Prob_i}{\Prob_j}}$ with the chosen coupling.
 Similarly, in the case of product distributions,  with the same matching but the product coupling $\pi^{\otimes }(\prob_1^{\otimes n},\dots, \prob_N^{\otimes n})$ we obtain as a consequence of \Cref{lemma:accproduct}
 \begin{equation*}
     \begin{aligned}
         \max_{i \in \intslice{1}{N}} \Prob_{\mathbf{X} \sim \Prob_i, \mech{M}} &\p{\Psi \p{\mech{M}(\vect{X})} \neq i}  
          \\
          &\geq \frac{1}{2N^2} \sum_{i, j} \left( \p{1 - (1- e^{-\epsilon}) \frac{2 \tv{\prob_i}{\prob_j}}{1 + \tv{\prob_i}{\prob_j}}}^n \right.\\ &\quad\quad\quad\quad\quad\quad\quad\quad \left. - 2ne^{-\epsilon}\delta \frac{2 \tv{\prob_i}{\prob_j}}{1 + \tv{\prob_i}{\prob_j}} \right)\;.
     \end{aligned}
 \end{equation*}
 When $\delta = 0$, the generic bound of \Cref{th:metatheorem1}
applied with the
Fano matching technique described in 
\Cref{th:metatheoremdp} and the coupling $\pi^\infty(\Prob_1, \dots, \Prob_N)$ leads to
\begin{equation*}
     \begin{aligned}
         \max_{i \in \intslice{1}{N}} &\Prob_{\mathbf{X} \sim \Prob_i, \mech{M}} \p{\Psi \p{\mech{M}(\vect{X})} \neq i}  \\
          &\geq 1 -   \frac{1+\frac{\epsilon}{N^2} \sum_{i=1}^{N} \sum_{j=1}^{N} \E_{(\vect{X}_1, \dots,  \vect{X}_N) \sim \pi^\infty(\Prob_1, \dots, \Prob_N)} \p{\ham{\mathbf{X}_i}{\mathbf{X}_j}}}{\ln{N}}\\ 
          &\stackrel{\Cref{lemma:accmax}}{\geq} 1 -   \frac{1+\frac{\epsilon}{N^2} \sum_{i=1}^{N} \sum_{j=1}^{N} n \Delta_{i, j}}{\ln{N}}\\ 
          &\geq 1 -   \frac{1+\frac{n\epsilon}{N^2} \sum_{i=1}^{N} \sum_{j=1}^{N} \frac{2 \tv{\Prob_i}{\Prob_j}}{1+ \tv{\Prob_i}{\Prob_j}}}{\ln{N}}
     \end{aligned}
 \end{equation*}
 where in the second line we denote $\Delta_{i,j} := \Prob_{(\vect{X}_1, \dots, \vect{X}_N) \sim \pi^\infty(\Prob_1, \dots, \Prob_N)} \p{\mathbf{X}_i \neq \mathbf{X}_j}$ and in the last line we use that  $\Delta_{i,j} \leq \frac{2 \tv{\Prob_i}{\Prob_j}}{1+ \tv{\Prob_i}{\Prob_j}}$ with the chosen coupling.
 Similarly, in the case of product distributions,  with the same matching but the coupling $\pi^{\otimes }(\prob_1^{\otimes n},\dots, \prob_N^{\otimes n})$ we obtain as a consequence of \Cref{lemma:accproduct}
 \begin{equation*}
     \begin{aligned}
         \max_{i \in \intslice{1}{N}} \Prob_{\mathbf{X} \sim \Prob_i, \mech{M}} \p{\Psi \p{\mech{M}(\vect{X})} \neq i}  
          \geq 1 -   \frac{1+\frac{n\epsilon}{N^2} \sum_{i=1}^{N} \sum_{j=1}^{N} \frac{2 \tv{\prob_i}{\prob_j}}{1+ \tv{\prob_i}{\prob_j}}}{\ln{N}} \;.
     \end{aligned}
 \end{equation*}

\paragraph{Proof of \Cref{th:fanocdp}:}
When $\mech{M}$ is $\rho$-zCDP, the generic bound of \Cref{th:metatheorem1}
applied with the
Fano matching technique described in 
\Cref{th:metatheoremcdp} and the coupling $\pi^\infty(\Prob_1, \dots, \Prob_N)$ leads to
\begin{equation*}
     \begin{aligned}
         \max_{i \in \intslice{1}{N}} &\Prob_{\mathbf{X} \sim \Prob_i, \mech{M}} \p{\Psi \p{\mech{M}(\vect{X})} \neq i}  \\
          &\geq 1 -   \frac{1+\frac{\rho}{N^2} \sum_{i=1}^{N} \sum_{j=1}^{N} \E_{(\vect{X}_1, \dots,  \vect{X}_N) \sim \pi^\infty(\Prob_1, \dots, \Prob_N)} \p{\ham{\mathbf{X}_i}{\mathbf{X}_j}^2}}{\ln{N}}\\ 
          &\stackrel{\Cref{lemma:accmax}}{\geq} 1 -   \frac{1+\frac{\rho}{N^2} \sum_{i=1}^{N} \sum_{j=1}^{N} n^2 \Delta_{i, j}}{\ln{N}}\\ 
          &\geq 1 -   \frac{1+\frac{n^2\rho}{N^2} \sum_{i=1}^{N} \sum_{j=1}^{N} \frac{2 \tv{\Prob_i}{\Prob_j}}{1+ \tv{\Prob_i}{\Prob_j}}}{\ln{N}}
     \end{aligned}
 \end{equation*}
  where in the second line we denote $\Delta_{i,j} := \Prob_{(\vect{X}_1, \dots, \vect{X}_N) \sim \pi^\infty(\Prob_1, \dots, \Prob_N)} \p{\mathbf{X}_i \neq \mathbf{X}_j}$ and in the last line we use that  $\Delta_{i,j} \leq \frac{2 \tv{\Prob_i}{\Prob_j}}{1+ \tv{\Prob_i}{\Prob_j}}$ with the chosen coupling.
 Similarly, in the case of product distributions,  with the same matching but with the product coupling $\pi^{\otimes }(\prob_1^{\otimes n},\dots, \prob_N^{\otimes n})$ we obtain,
 \begin{equation*}
     \begin{aligned}
         \max_{i \in \intslice{1}{N}} &\Prob_{\mathbf{X} \sim \Prob_i, \mech{M}} \p{\Psi \p{\mech{M}(\vect{X})} \neq i}  \\
          &\geq 1 -   \frac{1+\frac{\rho}{N^2} \sum_{i=1}^{N} \sum_{j=1}^{N} \E_{(\vect{X}_1, \dots,  \vect{X}_N) \sim \pi^{\otimes }(\prob_1^{\otimes n},\dots, \prob_N^{\otimes n})} \p{\ham{\mathbf{X}_i}{\mathbf{X}_j}^2}}{\ln{N}}\\ 
          &\stackrel{\Cref{lemma:accproduct}}{\geq} 
        %   1 -   \frac{1+\frac{\rho}{N^2} \sum_{i=1}^{\RG{N}} \sum_{j=1}^{\RG{N}} %\sout{\Delta_{i, j} }
        %   \p{n^2 \delta_{i, j}^2 + n \delta_{i, j} (1-\delta_{i, j})}}{\ln{N}}\\ 
        %   &\geq 
          1 -   \frac{1+\frac{\rho}{N^2} \sum_{i=1}^{N} \sum_{j=1}^{N} %\Delta_{i, j} 
          \p{n^2 \delta_{i, j}^2 + n \delta_{i, j}}}{\ln{N}}\\ 
          &\geq 1 -   \frac{1+\frac{n^2\rho}{N^2} \sum_{i=1}^{N} \sum_{j=1}^{N} \p{\p{\frac{2 \tv{\prob_i}{\prob_j}}{1+ \tv{\prob_i}{\prob_j}}}^2 + \frac{1}{n} \frac{2 \tv{\prob_i}{\prob_j}}{1+ \tv{\prob_i}{\prob_j}}}}{\ln{N}}
     \end{aligned}
 \end{equation*}
 where in the second line we denote $\delta_{i,j} := \Prob_{(X_1, \dots, X_N) \sim \pi^\infty(\prob_1, \dots, \prob_N)} \p{X_i \neq X_j}$ and in the last line we use that  $\delta_{i,j} \leq \frac{2 \tv{\prob_i}{\prob_j}}{1+ \tv{\prob_i}{\prob_j}}$ with the chosen coupling.

%\begin{remark}[Novelty]
%        \marginpar{\RG{\tiny Reformuler, et mettre cette remarque plutôt juste dans l'intro après le Theoreme ?}}
%Same as with Le Cam's inequality, our theory is, to the best of our knowledge, also the first to bring Fano type inequalities to definitions of privacy that are based on the Renyi divergence. 
%\end{remark}

\section{Examples of applications}
\label{sec:applications}
This section presents three examples of applications of our lower bounds: The Bernoulli Model, the Gaussian Model and the Uniform Model. 

In the first two applications, we show that the rate at which the privacy parameters vary has an importance. Namely, we show in both models that if the privacy parameters are too small, %\marginpar{\RG{\tiny c'est un point de vue "asymptotique" mais ne peut-on avoir un point de vue "a $n$ fixé"? Qq suggestions dans la suite}} 
the private minimax risk becomes predominant compared to the non-private one, or to put it simply, we show that under strong privacy constraints, the performance of estimation \emph{has} to be degraded. 

Furthermore, we also show for the first model that above this threshold, the minimax risk is not degraded by privacy, essentially meaning that we then have privacy "for free".

In contrast, in the last example we prove that the minimax risk is systematically degraded by privacy as soon as we consider estimation procedures with increasing privacy requirements in the sense that the privacy parameters decrease as the sample size increases. 

\subsection{Bernoulli Model}

The first application is the estimation of the proportion of a population that satisfies a certain property. It is a prime example of the application of Le Cam's lemma \Cref{fact:lecamslemma} and its private counterparts \Cref{th:lecamdp} and \Cref{th:lecamcdp}.
When we consider the parametric Bernoulli model 
\begin{equation*}
    \p{\mathcal{B}(\theta)}_{\theta \in \Theta}, \quad\quad \Theta = (0, 1) \;,
\end{equation*}
a classical and simple estimator for estimating the true parameter $\theta^*$ from i.i.d. samples $X_1, \dots, X_n$ drawn according to $\mathcal{B}\p{\theta^*}$ is via the empirical average
\begin{equation*}
    \hat{\theta} := \frac{1}{n} \sum_{i=1}^n X_i \;.
\end{equation*}
The quadratic risk of this estimator is
\begin{equation*}
    \begin{aligned}
        \E \p{(\theta^* - \hat{\theta})^2} = \frac{\theta^* (1 - \theta^*)}{n} \leq \frac{1/4}{n} \;.
    \end{aligned}
\end{equation*}
In order to find lower bounds on the minimax risk (with or without privacy constraints), let us
investigate %a
an $\Omega = \frac{\alpha}{4}$-packing\footnote{With $d(\cdot,\cdot) = |\cdot-\cdot|$, see \Cref{sec:GeneralMinimaxTechnique}: an $\Omega$-packing must satisfy $d(\theta_i,\theta_j) \geq 2 \Omega$, $i \neq j$.}
 with $\theta_1 := \frac{1 + \alpha}{2}$ and $\theta_2 := \frac{1}{2}$.

\paragraph{Regular Minimax Risk.}
By the master bound~\eqref{eq:MasterLowerBoundNonDP}, Le Cam's lemma %(\Cref{fact:fanoslemma}),
(\Cref{fact:lecamslemma})
and Pinsker's inequality (see \citet[Lemma 2.5]{tsybakov2003introduction}),
\begin{equation*}
    \begin{aligned}
        \mathfrak{M}_n %&
        \p{\p{\mathcal{B}(\theta)^{\otimes n}}_{\theta \in \Theta}, |\cdot - \cdot|, (\cdot)^2} 
        %\\ 
        &\geq 
        (\alpha/4)^2 \cdot \frac{1}{2}
        \left( 1 - \tv{\mathcal{B}(\theta_1)^{\otimes n}}{\mathcal{B}(\theta_2)^{\otimes n}} \right) \\
        &\geq 
        \frac{\alpha^2}{32}
        \left( 1 - \sqrt{\kl{\mathcal{B}(\theta_1)^{\otimes n}}{\mathcal{B}(\theta_2)^{\otimes n}}/2} \right) \\
        &=
        \frac{\alpha^2}{32}
        \left( 1 - \sqrt{n \kl{\mathcal{B}(\theta_1)}{\mathcal{B}(\theta_2)}/2} \right).
        % \\
        % &= \frac{\alpha^2}%{32}
        % {\RG{8}}
        % \left( 1 - \sqrt{o(n \alpha^2)} \right) 
    \end{aligned}
\end{equation*}
where we used the tensorization property of the KL divergence (see \citet[Theorem 28]{van2014renyi}).
We can observe that when $\alpha \in [0, 1/2]$,
\begin{equation*}
    \begin{aligned}
        \kl{\mathcal{B}(\theta_1)}{\mathcal{B}(\theta_2)} 
        &\leq \alpha^2 \;.
    \end{aligned}
\end{equation*}
Indeed, let us note $g(x) = \frac{1 + x}{2} \ln \left( 1 + x\right) + \frac{1 - x}{2} \ln \left( 1 - x\right) - x^2$. We have that $\frac{dg(x)}{dx} (x) = \frac{\ln(1+x) + \ln(1-x)}{2} - 2x$ and since $g(0) = 0$ and $x \mapsto \ln(1+x)$ is $2$-Lipschitz on $[-1/2, 1/2]$, we have that $g(x) \leq 0,\quad  \forall x \in [0, 1/2]$.
In particular, when $\alpha \in [0, 1/2]$,
\begin{equation*}
    \begin{aligned}
        \kl{\mathcal{B}(\theta_1)}{\mathcal{B}(\theta_2)} 
        &= \left( \theta_1 \ln \left( \frac{\theta_1}{\theta_2}\right) + (1-\theta_1) \ln \left( \frac{1-\theta_1}{1-\theta_2}\right) \right) \\
        &= \left( \frac{1 + \alpha}{2} \ln \left( 1 + \alpha\right) + \frac{1 - \alpha}{2} \ln \left( 1 - \alpha\right) \right) \\
        &\leq \alpha^2\;.
    \end{aligned}
\end{equation*}

%\marginpar{\RG{\tiny Seul endroit nécessitant asymptotique. Peut-on etre plus explicite avec $\theta_2 =(1+\alpha)/2$ ? Il suffirait d'avoir $KL \leq C \alpha^2, \forall \alpha \leq c$, avec $c,C$ explicites, pour travailler hors asymptotique}}

So, with $\alpha = \frac{1}{\sqrt{n}}$, as soon as $n \geq 4$, we obtain that
\begin{equation*}
    \begin{aligned}
        \mathfrak{M}_n \p{\p{\mathcal{B}(\theta)^{\otimes n}}_{\theta \in \Theta}, |. - .|, (.)^2} 
        \geq 
        \frac{\alpha^2}{32}
        \left( 1 - \sqrt{n \alpha^2 /2 } \right) 
        =  \frac{1/160}{n} = \Omega \left( \frac{1}{n} \right) \;,
    \end{aligned}
\end{equation*}
which concludes that the non-private minimax rate is lower bounded by a quantity of the order of $\frac{1}{n}$ and in particular, that the empirical mean estimator $\hat{\theta}$ is minimax optimal in term of rates of convergence. Furthermore, any private minimax rate also has to be of the order of at least  %bigger than
$\frac{1}{n}$.

\paragraph{Minimax Risk with $\epsilon$-Differential Privacy.}
 By the private master lower bound~\eqref{eq:MasterLowerBoundWithDP} and the product form of Le Cam's lemma for $(\epsilon, 0)$-DP (see \Cref{th:lecamdp}) combined with the last inequality in~\Cref{lemma:accproduct} we obtain
\begin{equation*}
    \begin{aligned}
        \mathfrak{M}_n \p{\epsilon\text{-DP}, \p{\mathcal{B}(\theta)^{\otimes n}}_{\theta \in \Theta}, |\cdot - \cdot|, (\cdot)^2} 
        &\geq 
        (\alpha/4)^2 \cdot \frac{1}{2}
        e^{-n\epsilon \tv{\mathcal{B}(\theta_1)}{\mathcal{B}(\theta_2)}} \\
        &\geq 
        \frac{\alpha^2}{32} e^{-n\epsilon \sqrt{\kl{\mathcal{B}(\theta_1)}{\mathcal{B}(\theta_2)}/2}} \\
        &=\frac{\alpha^2}{32} e^{- \sqrt{(n\epsilon)^2\alpha^2/2}}
    \end{aligned}
\end{equation*}
where we used again Pinsker's inequality.
%(see \cite[Lemma 2.5]{tsybakov2003introduction}).
%We now limit the analysis to the regimes $\epsilon \gg 1/n$.
%and $\rho \gg 1/n^2$.
%\RG{Veut-on dire $\epsilon  \gtrsim 1/n$, ce qui est la négation de $\epsilon \ll 1/n$?}
%Indeed, the converse in known to lose all accuracy (see \Cref{rq:accuracy}).
%\marginpar{\RG{\tiny plutot invoquer \Cref{lemma:dataprocessing} pour se focaliser sur le cas où $(n \epsilon)^2 \ll n$?}}

So, with $\alpha = \frac{1}{n\epsilon}$, when $n\epsilon \geq 2$, we obtain that
\begin{equation*}
    \begin{aligned}
        \mathfrak{M}_n \p{\epsilon\text{-DP}, \p{\mathcal{B}(\theta)^{\otimes n}}_{\theta \in \Theta}, |\cdot - \cdot|, (\cdot)^2} 
        \geq \frac{1/32}{(n\epsilon)^2}e^{-\sqrt{1/2}} \geq \frac{1/80}{(n\epsilon)^2} =  \Omega \left( \frac{1}{(n\epsilon)^2} \right) \;.
    \end{aligned}
\end{equation*}

\paragraph{$\rho$-zero Concentrated Differential Privacy.}
Similarly, by the product form of Le Cam's lemma for $\rho$-zCDP (see \Cref{th:lecamcdp}),
%and limiting the analysis to the regime $\rho \gg 1/n^2$,\marginpar{\RG{\tiny plutot invoquer  \Cref{lemma:dataprocessing} pour se focaliser sur $n^2\rho \ll n$?}} 
we get
%using again Pinsker's inequality
with $\alpha = \frac{1}{n\sqrt{\rho}}$ when $n\sqrt{\rho} \geq 2$, 
\begin{equation*}
    \begin{aligned}
        \mathfrak{M}_n \p{\rho\text{-zCDP}, \p{\mathcal{B}(\theta)^{\otimes n}}_{\theta \in \Theta}, |\cdot - \cdot|, (\cdot)^2} 
        &\geq \frac{\alpha^2}{32}\p{1 - n \sqrt{\rho / 2} \tv{\mathcal{B}(\theta_1)}{\mathcal{B}(\theta_2)}} \\
        &\geq \frac{\alpha^2}{32} \p{1 - n \sqrt{\rho \kl{\mathcal{B}(\theta_1)}{\mathcal{B}(\theta_2)}/4}} \\
        &=\frac{\alpha^2}{32} \p{1 - \sqrt{n^2 \rho \alpha^2/4}} = \frac{1/64}{n^2 \rho}
        \\
   &=  \Omega \left( \frac{1}{n^2\rho} \right) \;.
    \end{aligned}
\end{equation*}

\paragraph{Matching Upper Bounds.}
Consider the Laplace mechanism $\mech{M}(\vect{X}) := \frac{1}{n} \sum_{i=1}^n X_i + \frac{1}{n\epsilon} \text{Lap}(1)$. It is an $(\epsilon, 0)$-DP estimator $\vect{X}$ \citep{dwork2014algorithmic} and its quadratic risk is $O\p{\frac{1}{n} + \frac{1}{(n \epsilon)^2}}$. Likewise, the Gaussian mechanism $\mech{M}(\vect{X}) = \frac{1}{n} \sum_{i=1}^n X_i + \frac{2}{n\sqrt{\rho}} \mathcal{N}(0, 1)$ is $\rho$-zCDP \citep{bun2016concentrated} and its one is $O\p{\frac{1}{n} + \frac{1}{n^2 \rho}}$. Combined with the lower bounds established so far and with \Cref{lemma:dataprocessing}, this allows to conclude that in fact
\begin{equation*}
    \begin{aligned}
        \mathfrak{M}_n \p{\epsilon\text{-DP}, \p{\mathcal{B}(\theta)^{\otimes n}}_{\theta \in \Theta}, |\cdot - \cdot|, (\cdot)^2} 
        = \Theta \p{\max \left\{\frac{1}{n}, \frac{1}{(n \epsilon)^2} \right\}} \;,
    \end{aligned}
\end{equation*}
and that this optimal rate is achieved with the Laplace mechanism, while
\begin{equation*}
    \begin{aligned}
        \mathfrak{M}_n \p{\rho\text{-zCDP}, \p{\mathcal{B}(\theta)^{\otimes n}}_{\theta \in \Theta}, |\cdot - \cdot|, (\cdot)^2} 
        = \Theta \p{\max \left\{\frac{1}{n}, \frac{1}{n^2 \rho} \right\}} \;,
    \end{aligned}
\end{equation*}
which is an optimal rate achieved by the Gaussian mechanism.

\paragraph{The Cost of Privacy.}
An interesting observation for both definitions of privacy is that there exist regimes ($\epsilon \ll 1/\sqrt{n}$ or $\rho \ll 1/n$) for which the minimax rate of convergence is degraded compared to the non private one. In other words, privacy has an unavoidable cost on utility, no matter the mechanism used.
Conversely, the order of magnitude of the minimax risk is not degraded otherwise.

\subsection{Gaussian Model}
The second application is the estimation of the unknown mean $\theta^* \in \mathbb{R}^d$ of multivariate normally distributed data with fixed covariance matrix $\sigma^2 I_d$. 
When we consider the parametric model 
%\begin{equation*}
 $   \p{\mathcal{N}(\theta, \sigma^2 I_d)}_{\theta \in \Theta}, %\quad\quad
 \Theta = \R^d \;,$
%\end{equation*}
a classical and simple estimator for estimating the mean $\theta^*$ from i.i.d. samples $X_1, \dots, X_n$ is the empirical average
%\begin{equation*}
 $   \hat{\theta} := \frac{1}{n} \sum_{i=1}^n X_i \;.$
%\end{equation*}
The quadratic risk of this estimator is 
\begin{equation}\label{eq:GaussianMeanRisk}
    \begin{aligned}
        \E \p{\|\theta^* - \hat{\theta}\|^2} = \frac{\sigma^2 d}{n} \;.
    \end{aligned}
\end{equation}
If we were to apply Le Cam's lemma \Cref{fact:lecamslemma} or its private counterparts \Cref{th:lecamdp} and \Cref{th:lecamcdp}, the parameter that tunes the dimensionality $d$ would not be captured by the resulting minimax lower bounds which would thus be overly optimistic. This example forces us to use Fano's lemma \Cref{fact:fanoslemma} or its private counterparts \Cref{th:fanodp} or \Cref{th:fanocdp} in order to have a chance to capture this phenomenon. 

The total variation that appears in Fano's inequality is controlled via Pinsker's inequality in terms of a Kullback-Leibler divergence, which in the case of isotropic Gaussians is known to be proportional to the squared Euclidean distance.
%The classical result on the KL divergence between Gaussians justifies that the Gaussian Model satisfies the hypotheses of \Cref{lemma:minimaxkl}, 
\begin{equation}
\label{eq:KLGaussians}
     \forall \theta_1, \theta_2 \in \Theta, \quad \kl{\mathcal{N}({\theta_1, \sigma^2 I_d})}{\mathcal{N}({\theta_2, \sigma^2 I_d})} = \frac{\|\theta_2 - \theta_1\|^2}{2 \sigma^2} \;.
 \end{equation}
This enables the use of packing results for the Euclidean norm, and minimax bounds valid in the more general case where the KL divergence is controlled by the Euclidean norm between parameters.
\paragraph{Packing Choice.}
In high dimension, the packing is chosen with an exponential number of hypotheses. A good way to obtain well-spread points is to use Varshamov–Gilbert's theorem
\begin{fact}[{Varshamov–Gilbert's theorem \citep[Lemma 5.12]{rigollet2015high}}]
\label[fact]{fact:VGth}
For any $\zeta \in \p{0, \frac{1}{2}}$ and for every dimension $d \geq 1$ there exist $    N \geq e^{\frac{\zeta^2 d}{2}}$ and   $w_1, \dots, w_N \in \{0, 1\}^d$ such that,
\begin{equation*}
    i \neq j \implies \ham{w_i}{w_j} \geq \p{\frac{1}{2} - \zeta}d \;.
\end{equation*}
%and
%\begin{equation*}
%    N \geq e^{\frac{\zeta^2 d}{2}} \;.
%\end{equation*}
\end{fact}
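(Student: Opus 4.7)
The plan is to use the classical volume (greedy packing) argument. Set $r \eqdef (1/2 - \zeta)d$ and let $W = \{w_1, \dots, w_N\} \subseteq \{0,1\}^d$ be any inclusion-maximal subset whose pairwise Hamming distances are all at least $r$; such a $W$ exists because $\{0,1\}^d$ is finite and a greedy extension procedure must terminate. By maximality, for every $x \in \{0,1\}^d$ there is some $w \in W$ with $\ham{x}{w} < r$: otherwise $x$ could be adjoined to $W$ without violating the packing condition, contradicting maximality (and if $x \in W$ itself, one simply takes $w = x$). Consequently the open Hamming balls of radius $r$ centered at the points of $W$ cover the cube, which gives $N \cdot V \geq 2^d$, where $V$ denotes the common cardinality of such a ball.

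Next I would upper bound $V$ via a Chernoff/Hoeffding tail estimate for the symmetric binomial. If $x$ is drawn uniformly from $\{0,1\}^d$, then $\ham{x}{w_1}$ is binomial with parameters $d$ and $1/2$, so $V / 2^d = \Prob\p{\ham{x}{w_1} < r} \leq \Prob\p{\ham{x}{w_1}/d - 1/2 \leq -\zeta} \leq e^{-2 \zeta^2 d}$ by Hoeffding's inequality. Combining this with the covering bound yields
\[
 N \geq \frac{2^d}{V} \geq e^{2 \zeta^2 d} \geq e^{\zeta^2 d / 2},
\]
which is the claimed lower bound on $N$.

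There is essentially no hard step here: the only real ingredient is the standard Hoeffding tail bound for the symmetric binomial, and the rest is elementary counting. An alternative route, if one prefers a probabilistic construction, is to draw $M \approx 2 \lceil e^{\zeta^2 d / 2} \rceil$ points i.i.d.\ uniformly in $\{0,1\}^d$, use the same Hoeffding bound to control the expected number of ``bad'' pairs lying at distance less than $r$, and then invoke Markov's inequality plus a deletion step to extract a valid packing of the claimed size. Both approaches yield the stated bound; the greedy/volume argument is the cleanest and actually delivers the sharper $N \geq e^{2 \zeta^2 d}$, which trivially implies $N \geq e^{\zeta^2 d/2}$ as required.
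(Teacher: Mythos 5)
Your proof is correct. The paper does not actually prove this statement: it is labelled a \emph{Fact}, which by the paper's own convention means it is imported verbatim from \citet[Lemma 5.12]{rigollet2015high} without an in-paper proof, so there is no internal argument to compare against. Your route is the classical volume (greedy-packing) argument for the Gilbert--Varshamov bound, whereas the cited reference proceeds by the probabilistic method: draw $M$ codewords i.i.d.\ uniformly on $\{0,1\}^d$, use the same Hoeffding tail bound $\Prob\left(\mathrm{Bin}(d,1/2) \leq (1/2-\zeta)d\right) \leq e^{-2\zeta^2 d}$ to control a fixed pair, and then take a union bound (or a deletion step) over the $\binom{M}{2}$ pairs, which is where the conservative exponent $\zeta^2 d/2$ in the stated bound comes from. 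Your maximality argument converts the very same tail bound into the covering inequality $N \cdot V \geq 2^d$ and directly yields $N \geq e^{2\zeta^2 d}$, which is both cleaner and sharper. For the paper's purposes only the rate in $d$ matters, so the two proofs are interchangeable, but the greedy/volume route does give the better constant in the exponent.
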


\paragraph{Minimax Lower Bounds.}
We obtain the following minimax lower bounds that we factorized in a single result:
\begin{proposition}
\label[proposition]{lemma:minimaxkl}
 Let $\p{\prob_\theta}_{\theta \in \Theta}$ be a family of probability distributions on the same measurable space and $\Theta$ be a subset of $\R^d$ with $d\geq 66$ that contains a ball of radius $r_0$ for the euclidean distance. Assume that $\gamma>0$ is such that
 \begin{equation}\label{eq:AssumptionLemmaMinimaxkl}
     \forall \theta_1, \theta_2 \in \Theta, \quad \kl{\prob_{\theta_1}}{\prob_{\theta_2}} \leq \gamma \|\theta_2 - \theta_1\|^2.
 \end{equation}
 Then we have the following results on the minimax rates:
 \begin{equation*}
    \begin{aligned}
        \mathfrak{M}_n &\p{\p{\prob_\theta^{\otimes n}}_{\theta \in \Theta}, \|\cdot - \cdot\|, (\cdot)^2}  
        \geq \frac{\min \p{\frac{r_0}{\sqrt{d}}, \frac{1}{64 \sqrt{n \gamma}}}^2 d}{32}
        = \Omega \p{\frac{d}{n \gamma}}\;,
    \end{aligned}
\end{equation*}
\begin{equation*}
    \begin{aligned}
        \mathfrak{M}_n \p{\epsilon \text{-DP}, \p{\prob_\theta^{\otimes n}}_{\theta \in \Theta}, \|\cdot - \cdot\|, (\cdot)^2} 
        &\geq \frac{\max \p{\min \p{\frac{r_0}{\sqrt{d}}, \frac{1}{64 \sqrt{n \gamma}}}, \min \p{\frac{r_0}{\sqrt{d}}, \frac{\sqrt{d}}{64^2 \sqrt{2} n \epsilon \sqrt{\gamma}}}}^2 d}{32} \\
        &= \Omega \p{\max \left\{\frac{d}{n \gamma} , \frac{d^2}{(n\epsilon)^2 \gamma}\right\}}\;,
    \end{aligned}
\end{equation*}
\begin{equation*}
    \begin{aligned}
        \mathfrak{M}_n &\p{\rho \text{-zCDP}, \p{\prob_\theta^{\otimes n}}_{\theta \in \Theta}, \|\cdot - \cdot\|, (\cdot)^2} \\
        &\geq \frac{\max \p{\min \p{\frac{r_0}{\sqrt{d}}, \frac{1}{64 \sqrt{n \gamma}}}, \min \p{\frac{r_0}{\sqrt{d}}, \frac{1}{64^2 2\sqrt{2} n  \sqrt{ \rho \gamma}}}}^2 d}{32}
        = \Omega \p{\max \left\{\frac{d}{n \gamma} , \frac{d}{n^2 \rho \gamma}\right\}}\;,
    \end{aligned}
\end{equation*}
when $\rho <1$. Note that all the asymptotic expressions are taken when $r_0 > C \sqrt{d}$ for a positive constant $C$ i.e. when the parameter space is not "too small".
\end{proposition}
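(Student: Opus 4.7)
The plan is to apply the master lower bounds \eqref{eq:MasterLowerBoundNonDP} and \eqref{eq:MasterLowerBoundWithDP} with an $\Omega$-packing obtained by rescaling a Varshamov--Gilbert hypercube (\Cref{fact:VGth}) into the Euclidean ball of radius $r_0$ guaranteed by the hypothesis, and then to invoke the three Fano-type inequalities separately: \Cref{fact:fanoslemma} in the non-private case, \Cref{th:fanodp} (with $\delta=0$) in the $\epsilon$-DP case, and \Cref{th:fanocdp} in the $\rho$-zCDP case.

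First, fix $\zeta \in (0,1/2)$ (a concrete numerical value will be chosen at the very end to match the announced constants). Invoke \Cref{fact:VGth} to obtain $N \geq e^{\zeta^2 d / 2}$ vertices $w_1,\dots,w_N \in \{0,1\}^d$ with pairwise Hamming distance at least $(1/2-\zeta)d$. Pick a center $\theta_0$ of the ball of radius $r_0$ in $\Theta$, introduce a scale $\tau > 0$, and set $\theta_i \eqdef \theta_0 + \tau\,(w_i - \tfrac{1}{2}\mathbf{1})$. Since $\|w_i - \tfrac{1}{2}\mathbf{1}\| = \sqrt{d}/2$, the points $\theta_i$ lie in $\Theta$ as soon as $\tau \leq 2r_0/\sqrt{d}$, and $\|\theta_i - \theta_j\|^2 = \tau^2 \ham{w_i}{w_j} \in [\tau^2 (1/2-\zeta)d,\ \tau^2 d]$, so $\Theta' = \{\theta_1,\dots,\theta_N\}$ is an $\Omega$-packing with $\Omega \eqdef \tfrac{1}{2}\tau \sqrt{(1/2-\zeta)d}$. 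By assumption~\eqref{eq:AssumptionLemmaMinimaxkl} and tensorization of the KL divergence, $\kl{\prob_{\theta_i}^{\otimes n}}{\prob_{\theta_j}^{\otimes n}} \leq n\gamma\tau^2 d$, and Pinsker gives $\tv{\prob_{\theta_i}}{\prob_{\theta_j}} \leq \tau\sqrt{\gamma d / 2}$.

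Plugging these estimates into each Fano-type bound yields the three announced rates. In the non-private case, the test error produced by \Cref{fact:fanoslemma} is at least $1/2$ as soon as $1 + n\gamma\tau^2 d \leq \tfrac{1}{2}\ln N = \tfrac{\zeta^2 d}{4}$, which allows $\tau \asymp 1/\sqrt{n\gamma}$. For $\epsilon$-DP, the third branch of \Cref{th:fanodp} (product form, $\delta=0$) demands $1 + \tfrac{n\epsilon}{N^2}\sum_{i,j}\tfrac{2\tv{\prob_i}{\prob_j}}{1+\tv{\prob_i}{\prob_j}} \leq \tfrac{1}{2}\ln N$; controlling TV by Pinsker forces $\tau \lesssim \sqrt{d}/(n\epsilon\sqrt{\gamma})$. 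For $\rho$-zCDP, the product form of \Cref{th:fanocdp} has as its dominant term $n^2\rho\,\mathrm{TV}^2 \lesssim n^2\rho\gamma\tau^2 d$, which must remain $\lesssim \ln N$, forcing $\tau \lesssim 1/(n\sqrt{\rho\gamma})$; the auxiliary linear-in-TV remainder is absorbed into the quadratic one precisely when $\rho < 1$, explaining that hypothesis. In each case, taking $\tau$ equal to the minimum of $2r_0/\sqrt{d}$ and the privacy-dictated scale, and then substituting into $\Omega^2 = \tfrac{1}{4}\tau^2 (1/2-\zeta)d$, produces the announced lower bounds; the asymptotic expressions follow once $r_0$ is large enough that the $r_0/\sqrt{d}$ branch becomes inactive.

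The main obstacle is arithmetic bookkeeping: every explicit constant in the statement---the global factor $1/32$, the three scale factors $64$, $64^2\sqrt{2}$, $64^2\cdot 2\sqrt{2}$, and the threshold $d \geq 66$---has to be produced cleanly by a single choice of $\zeta$ and a careful accounting of the "$1+$" term in the Fano bounds. The threshold $d \geq 66$ enters because $\ln N \geq \zeta^2 d / 2$ must exceed a fixed absolute constant for the "$1+$" contribution to be negligible in all three Fano inequalities simultaneously; the dependence on $\zeta$ in both the packing radius $\sqrt{1/2-\zeta}$ and in $\ln N = \zeta^2 d/2$ is where the final tuning takes place.
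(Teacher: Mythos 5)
Your approach is essentially the paper's: the same Varshamov--Gilbert packing at scale $\tau$ (the paper writes $\alpha$ and anchors at a corner instead of the center, which only changes the $r_0$ branch by a harmless factor), the same tensorization-plus-Pinsker estimates, and the same invocations of \Cref{fact:fanoslemma}, \Cref{th:fanodp} (with $\delta=0$) and \Cref{th:fanocdp} together with \Cref{lemma:dataprocessing} for the non-private branch of each maximum, with the scale tuned so that the numerator of the Fano fraction is at most half of $\ln N \geq d/32$, whence $d\geq 66$ and the overall prefactor $1/32$. One small inaccuracy of phrasing only: in the zCDP case the linear-in-TV remainder is not "absorbed into the quadratic one" --- it is a separate contribution of order $\sqrt{\rho d}$ that simply stays $o(\ln N)$ once $\tau$ is fixed by the quadratic constraint, and $\rho<1$ (together with $d\geq 66$) is what guarantees this.
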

Note that the constraint $d \geq 66$ can be relaxed to smaller constants by changing the $\zeta$ in the application of Varshamov–Gilbert's theorem at the cost of changing the constants in the minimax lower bounds. Likewise, the constraint $\rho < 1$ can be replaced by $\rho < M$ for any positive constant $M$ at the cost again of worse constants. Since we aim to use this result in high dimension and with high privacy, those hypotheses are natural in order to simplify the expressions.
Before giving the proof, we discuss some practical consequences.

\paragraph{The Cost of Privacy.}
For Gaussians, by~\eqref{eq:GaussianMeanRisk} and \Cref{lemma:minimaxkl} 
with $\gamma = \frac{1}{2\sigma^2}$ (cf~\eqref{eq:KLGaussians}) the non-private minimax risk is  \begin{equation*}
    \begin{aligned}
        \mathfrak{M}_n &
        \left(
        \p{
        \mathcal{N}(\theta, \sigma^2 I_d)}_{\theta \in \Theta}, \|\cdot - \cdot\|, (\cdot)^2 
        \right)
        = \Theta \p{\frac{\sigma^2 d}{n}}\;,
    \end{aligned}
\end{equation*}
hence \Cref{lemma:minimaxkl} 
shows that there is a degradation of the private minimax rate over the non-private minimax rate in the regime $\epsilon \ll \sqrt{\frac{d}{n}}$ when working under $\epsilon$-DP. Note that this shift in regime depends on the dimensionality. For $\rho$-zCDP, the minimax rate of convergence is degraded as soon as $\rho \ll \frac{1}{n}$. Compared to $\epsilon$-DP, the rate at which we observe a degradation does not depend on the dimension $d$. We study an upper bound in \Cref{sec:dpsgml}.

\begin{proof}[Proof of \Cref{lemma:minimaxkl}]
Without loss of generality, let us suppose that $0$ is the center of the ball of radius $r_0$ (without loss of generality because we are going to work on a neighborhood of $0$ but it can be translated to any point).
Varshamov–Gilbert's theorem (\Cref{fact:VGth}) with $\zeta = \frac{1}{4}$ allows us to consider $N$ and $w_1, \dots, w_N$ and to define 
a packing of the form $\theta_1 \eqdef \alpha w_1, \dots, \theta_N \eqdef \alpha w_N$ such that
\begin{equation*}
    i \neq j \implies  \frac{\alpha^2 d}{4} \leq \|\theta_i - \theta_j\|^2 \leq \alpha^2 d \;.
\end{equation*}
This yields an $\Omega = \alpha\sqrt{d}/4$-packing with respect to the Euclidean metric.
Since $0$ is in the interior of $\Theta$, all the $\theta_{i}$'s are in $\Theta$ provided that $\alpha$ is small enough. %\marginpar{\RG{\tiny C'est le seul endroit qui ``nécessite'' de l'asymptotique, mais on pourrait supposer plus explicitement que $\Theta$ contient une boule Euclidienne de rayon $\alpha_0 \sqrt{d}$}}
By the (non-private) master lower bound~\eqref{eq:MasterLowerBoundNonDP} and Fano's lemma (\Cref{fact:fanoslemma}),
%\marginpar{\RG{Attention: \Cref{fact:fanoslemma} n'est pas explicité pour le cas de distribution produits}}
%\RG{REMI: REPRENDRE LECTURE ICI}
\begin{equation*}
    \begin{aligned}
        \mathfrak{M}_n &\p{\p{\prob_{\theta}^{\otimes n}}_{\theta \in \Theta}, \|\cdot - \cdot\|, (\cdot)^2} \\ 
        &\geq 
        %\frac{\alpha^2 d}{16} 
        (\alpha\sqrt{d}/4)^2
        \cdot \p{1- \frac{1 + \frac{1}{N} \sum_{i} \kl{\prob_{\theta_i}^{\otimes n}}{\frac{1}{N}  \sum_{j}\prob_{\theta_j}^{\otimes n}}}{\ln N}} \\
        &\stackrel{\text{Jensen's inequality}}{\geq} 
        %\frac{\alpha^2 d}{16} 
        (\alpha\sqrt{d}/4)^2
        \cdot \p{1- \frac{1 + \frac{1}{N^2} \sum_{i, j} \kl{\prob_{\theta_i}^{\otimes n}}{\prob_{\theta_j}^{\otimes n}}}{\ln N}} \\
        &
        =
        %\geq 
        %\frac{\alpha^2 d}{16} 
        (\alpha\sqrt{d}/4)^2
        \cdot \p{1- \frac{1 + \frac{1}{N^2} \sum_{i, j} n\kl{\prob_{\theta_i}}{\prob_{\theta_j}}}{\ln N}} \\
         &\stackrel{\eqref{eq:AssumptionLemmaMinimaxkl}}{\geq} \frac{\alpha^2 d}{16} \quad \p{1- \frac{1 + \frac{1}{N^2} \sum_{i, j} n \gamma \|\theta_i - \theta_j\|^2}{\ln N}} \\
          &\geq \frac{\alpha^2 d}{16} \quad \p{1- \frac{1 + n \gamma \alpha^2 d}{d / 32}} \;,
    \end{aligned}
\end{equation*}
where in the last line we used that $N \geq e^{d/32}$ and $\|\theta_i-\theta_j\|^2 \leq \alpha^2 d$.
With 
$\alpha := \min \p{\frac{r_0}{\sqrt{d}}, \frac{1}{64 \sqrt{n \gamma}}}$ when $d \geq 66$
%\marginpar{\RG{\tiny on peut éviter l'asymptotique avec $\alpha = \min(\alpha_0,\tfrac{1}{64\sqrt{n\gamma}})$ \ldots si on veut. Ca donne $\geq \alpha^2d/32$}}
%\RG{$\alpha := c/\sqrt{n\gamma}$, where $c>0$ is a constant (depending only on $\Theta$) such that all the $\theta$'s belong to $\Theta$ for every $n$, this}
leads to 
\begin{equation*}
    \begin{aligned}
        \mathfrak{M}_n &\p{\p{\prob_{\theta}^{\otimes n}}_{\theta \in \Theta}, \|\cdot - \cdot\|, (\cdot)^2} 
        \geq \frac{\min \p{\frac{r_0}{\sqrt{d}}, \frac{1}{64 \sqrt{n \gamma}}}^2 d}{32}
        = \Omega \p{\frac{d}{n \gamma}} \;.
    \end{aligned}
\end{equation*}
For $\epsilon$-DP and $\rho$-zCDP, the first term in the max expressed in \Cref{lemma:minimaxkl} is a direct consequence of the above bound and of~\Cref{lemma:dataprocessing} so we now concentrate on the other term.
%comes from the exact same reasoning.
By
the private master lower bound~\eqref{eq:MasterLowerBoundWithDP} and
Fano's lemma for product distributions and $(\epsilon, 0)$-DP (see \Cref{th:fanodp}), arguments as above show that 
\begin{equation*}
    \begin{aligned}
        \mathfrak{M}_n \p{\epsilon\text{-DP}, \p{\prob_{\theta}^{\otimes n}}_{\theta \in \Theta}, \|\cdot - \cdot\|, (\cdot)^2} 
        &\geq \frac{\alpha^2 d}{16} \quad \p{1-\frac{1 + \frac{2 n \epsilon}{N^2} \sum_{i, j} \tv{\prob_{\theta_i}}{\prob_{\theta_j}}}{\ln N}} \\
        &\geq \frac{\alpha^2 d}{16} \quad\p{1- \frac{1 + \frac{2 n \epsilon}{N^2} \sum_{i, j} \sqrt{\kl{\prob_{\theta_i}}{\prob_{\theta_j}}/2}}{\ln N} }\\
         &\geq \frac{\alpha^2 d}{16} \quad\p{1- \frac{1 + \frac{2 n \epsilon}{N^2} \sum_{i, j} \sqrt{\gamma/2} \|\theta_i - \theta_j\|}{\ln N}} \\
          &\geq \frac{\alpha^2 d}{16} \quad \p{1- \frac{1 + 2   n \epsilon \alpha \sqrt{\gamma/2} \sqrt{d}}{d / 32}} \;.
    \end{aligned}
\end{equation*}
Again, setting $\alpha := \min \p{\frac{r_0}{\sqrt{d}}, \frac{\sqrt{d}}{64^2 \sqrt{2} n \epsilon \sqrt{\gamma}}}$ when $d \geq 66$
%\RG{with a small enough constant $c>0$ depending only on the parameter set $\Theta$} 
allows to conclude that 
%\marginpar{\RG{\tiny Plutot $\alpha := \min(\alpha_0, \ldots)$}}
\begin{equation*}
    \begin{aligned}
        \mathfrak{M}_n \p{\epsilon\text{-DP}, \p{\prob_{\theta}^{\otimes n}}_{\theta \in \Theta}, \|\cdot - \cdot\|, (\cdot)^2} 
        &\geq \frac{\min \p{\frac{r_0}{\sqrt{d}}, \frac{\sqrt{d}}{64^2 \sqrt{2} n \epsilon \sqrt{\gamma}}}^2 d}{32} \\ 
       &= \Omega \p{\frac{d^2}{(n \epsilon)^2 \gamma}}\;.
    \end{aligned}
\end{equation*}
Similarly, by Fano's lemma for product distributions and $\rho$-zCDP (see \Cref{th:fanocdp}), 
\begin{equation*}
    \begin{aligned}
        \mathfrak{M}_n &\p{\rho\text{-zCDP}, \p{\prob_{\theta}^{\otimes n}}_{\theta \in \Theta}, \|\cdot - \cdot\|, (\cdot)^2} \\ 
        &\geq \frac{\alpha^2 d}{16} \quad \p{1-\frac{1 + \frac{4 n^2 \rho}{N^2} \sum_{i, j} \frac{1}{2n}\tv{\prob_{\theta_i}}{\prob_{\theta_j}} + \tv{\prob_{\theta_i}}{\prob_{\theta_j}}^2}{\ln N}} \\
        &\geq \frac{\alpha^2 d}{16} \quad\p{1- \frac{1 + \frac{4 n^2 \rho}{N^2} \sum_{i, j} \frac{1}{2n}\sqrt{\kl{\prob_{\theta_i}}{\prob_{\theta_j}}/2} + \kl{\prob_{\theta_i}}{\prob_{\theta_j}}/2}{\ln N} }\\
         &\geq %=
         \frac{\alpha^2 d}{16} \quad\p{1- \frac{1 + \frac{4 n^2 \rho}{N^2} \sum_{i, j}\frac{1}{2n} \sqrt{\gamma/2}\|\theta_i - \theta_j\| + \gamma \|\theta_i - \theta_j\|^2/2}{\ln N}} \\
          &\geq \frac{\alpha^2 d}{16} \quad \p{1- \frac{1 + \p{
          %\frac{4 n \rho \sqrt{\gamma} \alpha \sqrt{d}}{2}
          2 \sqrt{2}n\rho \alpha\sqrt{\gamma d}
          +  %4
          2
          n^2 \rho \gamma \alpha^2 d}}{d /  32}} \;,
    \end{aligned}
\end{equation*}
and setting $\alpha := \min \p{\frac{r_0}{\sqrt{d}}, \frac{1}{64^2 2\sqrt{2} n  \sqrt{ \rho \gamma}}}$ when $d \geq 66$ concludes that %\marginpar{\RG{\tiny Meme remarques que ci-dessus}}
(because $\rho\leq 1$)
\begin{equation*}
    \begin{aligned}
        \mathfrak{M}_n \p{\rho\text{-zCDP}, \p{\prob_{\theta}^{\otimes n}}_{\theta \in \Theta}, \|\cdot - \cdot\|, (\cdot)^2} 
        &\geq \frac{\min \p{\frac{r_0}{\sqrt{d}}, \frac{1}{64^2 2\sqrt{2} n  \sqrt{ \rho \gamma}}}^2 d}{32}\\
       &= \Omega \p{\frac{d}{n^2 \rho \gamma}} \;.
    \end{aligned}%%\qedhere
\end{equation*}
\end{proof}

\subsection{Support of Uniform Distributions}

For the last example, we chose to investigate a statistical problem that has a non-private minimax rate faster than $\frac{1}{n}$.
We consider the parametric model 
\begin{equation*}
    \p{\prob_{\theta} \eqdef \mathcal{U}([0, \theta])}_{\theta \in \Theta}, \quad\quad \Theta = (0, 1] \;.
\end{equation*}
To exploit Le Cam's lemma we will need
%We first need
to control the total variation between two distributions.
In this model, it can be done explicitly. 
The total variation between $\prob_{\theta_1}^{\otimes n}$ and $\prob_{\theta_2}^{\otimes n}$ can be computed as 
\begin{equation*}
    \begin{aligned}
        \tv{\prob_{\theta_1}^{\otimes n}}{\prob_{\theta_2}^{\otimes n}}
        &= 1 - \int_{[0, 1]^n} \min \p{\pi_{\prob_{\theta_1}^{\otimes n}}, \pi_{\prob_{\theta_2}^{\otimes n}}} %\\
        %&
        =1 - \p{\frac{\min\p{\theta_1, \theta_2}}{\max\p{\theta_1, \theta_2}}}^n \;.
    \end{aligned}
\end{equation*}

\paragraph{Non-Private Minimax Risk.}
By the (non-private) master lower bound~\eqref{eq:MasterLowerBoundNonDP} and Le Cam's lemma (\Cref{fact:lecamslemma}), applied to the $\tfrac{1}{2n}$-packing $\theta_1 = 1-\frac{1}{n}$ and $\theta_2 = 1$, we have
\begin{equation*}
    \begin{aligned}
        \mathfrak{M}_n \p{\p{\mathcal{U}([0, \theta])^{\otimes n}}_{\theta \in \Theta}, |\cdot - \cdot|, (\cdot)^2} 
       \geq \frac{e^{-1}}{8n^2} = \Omega \p{\frac{1}{n^2}} \;.
    \end{aligned}
\end{equation*}
where we used that $1-\tv{\prob_{\theta_1}^{\otimes n}}{\prob_{\theta_2}^{\otimes n}}
= \p{1-\tfrac{1}{n}}^n  \geq e^{-1}$.
Furthermore, as we now show, the estimator $\max \vect{X}$ achieves this rate of convergence when $X_1, \dots, X_n \sim \mathcal{U}([0, \theta^*])$ are independent. Indeed, for any $t \in [0, \theta^*]$,
\begin{equation*}
    \Prob\p{\max \vect{X} < t} = \Pi_{i=1}^n \Prob\p{X_i < t} = \p{\frac{t}{\theta^*}}^n \;.
\end{equation*}
Hence, $\max \vect{X}$ has a density $\pi_{\max \vect{X}}$ with respect to the Lebesgue measure where
\begin{equation*}
    \forall t \in \R, \quad \pi_{\max \vect{X}}(t) = \Ind_{[0, \theta^*]}(t) \frac{n t^{n-1}}{{\theta^{*}}^{n}} \;,
\end{equation*}
so that
\begin{equation*}
\begin{aligned}
    \E(\max \vect{X}) &= \int_0^{ \theta^*} t \p{\frac{n t^{n-1}}{{\theta^{*}}^{n}}} dt = \frac{n}{n+1} \theta^* \;, \\
    \mathbb{V}(\max \vect{X}) &= \int_0^{ \theta^*} t^2 \p{\frac{n t^{n-1}}{{\theta^{*}}^{n}}} dt - [\E(\max \vect{X})]^2 = {\theta^*}^2 \p{\frac{n}{n+2} - \frac{n^2}{(n+1)^2}} \;.
\end{aligned}
\end{equation*}
By the bias-variance tradeoff, the quadratic risk of $\max \vect{X}$ is thus $O\p{\frac{{\theta^*}^2}{n^2}}$. In particular, this proves that the non-private minimax rate of convergence is $\Theta \p{\frac{1}{n^2}}$ and that  $\max \vect{X}$ %the maximum 
achieves this minimax rate of convergence.

\paragraph{Minimax Risk with $\epsilon$-Differential Privacy.}
By
the private master lower bound~\eqref{eq:MasterLowerBoundWithDP} and the product form of Le Cam's private lemma for $\epsilon$-DP on product distributions (see \Cref{th:lecamdp} with $\delta=0$) with the $\tfrac{1}{2n\epsilon}$-packing $\theta_1 = 1 - \frac{1}{n \epsilon}$ and $\theta_2 = 1$ we have when $n \epsilon >1$
\begin{equation*}
    \begin{aligned}
        \mathfrak{M}_n \p{\epsilon \text{-DP}, \p{\mathcal{U}([0, \theta])^{\otimes n}}_{\theta \in \Theta}, |\cdot - \cdot|, (\cdot)^2} 
       \geq \frac{e^{-1}}{8 (n\epsilon)^2} = \Omega \p{\frac{1}{(n\epsilon)^2}} \;, 
    \end{aligned}
\end{equation*}
%\RG{where we used \eqref{eq:TVuniformexplicit} to get $\tv{?}{?} = ?$.}
%\RG{Begin brouillon} so that  with $\theta_1 = 1-\tfrac{1}{cn}$, $\theta_2=1$, where $cn>1$, the TV term appearing in Le Cam's lemma satisfies
%\begin{equation}
    \label{eq:TVuniformexplicit}
%1-\tv{\prob_{\theta_1}^{\otimes n}}{\prob_{\theta_2}^{\otimes n}}
%= \p{1-\tfrac{1}{cn}}^n  \geq e^{-1/c}, \forall n, \forall c\  \text{s.t.}\ cn<1.
%\end{equation}
%\RG{end brouillon}
In particular, the rate is degraded compared to the non-private one as soon as $\epsilon$ is decreasing. 

\paragraph{Minimax Risk with $\rho$-zero Concentrated Differential Privacy.}
Similarly, using the product form of Le Cam's private lemma for $\rho$-zCDP on product distributions (see \Cref{th:lecamcdp}) and the $\tfrac{1}{2n\sqrt{\rho}}$-packing $\theta_1 = 1 - \frac{1}{n \sqrt{\rho}}$ and $\theta_2 = 1$ gives that 
when $n\sqrt{\rho}>1$,
\begin{equation*}
    \begin{aligned}
        \mathfrak{M}_n \p{\rho \text{-zCDP}, \p{\mathcal{U}([0, \theta])^{\otimes n}}_{\theta \in \Theta}, |\cdot - \cdot|, (\cdot)^2} 
        \geq \frac{1-\frac{1}{\sqrt{2}}}{8n^2\rho}
       = \Omega \p{\frac{1}{n^2 \rho}} \;.
    \end{aligned}
\end{equation*}
In particular, the rate is degraded compared to the non-private one as soon as $\rho$ is decreasing. 

This example shows that when the stochastic noise due to sampling shrinks too fast (here $\max \vect{X}$ has quadratic risk $O(1/n^2)$), then the noise due to privacy becomes predominant. In particular, we do not observe a distinction on the rate at which $\epsilon$ or $\rho$ tends to $0$ in order the conclude to a degradation of the minimax risk. It is systematically degraded.

\section{A note on \Cref{fact:nearoptimalcoupling}}
\label{notecouplings}

This section was added in the latest version of this article after noticing that the use of Poisson point processes in the proof of \Cref{fact:nearoptimalcoupling} can make the results presented in this article harder to communicate on than it should. We thus present here a simplified view on the proof of \cite{angel2019pairwise}.

Let $(\set{X}, \set{B})$ be a measurable space. Let $\p{\Prob_i}_{i \in I}$ be a collection of probability measures that are compatible with $(\set{X}, \set{B})$. A coupling between $\p{\Prob_i}_{i \in I}$ is a collection of random variables $\p{X_i}_{i \in I}$ defined on a \emph{common} probability space (with probability measure $\Prob$), and with values in $(\set{X}, \set{B})$, such that
\begin{equation*}
    \forall i \in I, \quad \Prob_{X_i} = \Prob_i \;.
\end{equation*}
Here, for any $i$, $\Prob_{X_i}$ represents the push-forward probability of $\Prob$ by $X_i$ defined as
\begin{equation*}
    \forall B \in \set{B}, \quad \Prob_{X_i} (B) = \Prob (X_i \in B) \;.
\end{equation*}

A folklore result \cite{lindvall2002lectures} states that if $\p{X_i}_{i \in I}$ is a coupling between $\p{\Prob_i}_{i \in I}$, then
\begin{equation}
\label{eq:upperbound}
    \forall i, j \in I, \quad \forall E_{i, j} \in \set{B} \text{ s.t. } X_i(\omega)=X_j(\omega) \quad \forall \omega \in E_{i, j}, \quad \Prob \p{E_{i, j}} \leq 1 - \tv{\Prob_i}{\Prob_j} \;,
\end{equation}
where we recall that $\tv{\cdot}{\cdot}$ is the total variation distance on the set on probability measures on $(\set{X}, \set{B})$ defined as 
\begin{equation*}
    \tv{\Prob_i}{\Prob_j} = \sup_{B \in \set{B}} | \Prob_i(B) - \Prob_j(B)| \;.
\end{equation*}
Indeed, for any $i, j \in I$, any $E_{i, j} \in \set{B} \text{ s.t. } X_i(\omega)=X_j(\omega) \quad \forall \omega \in E_{i, j}$, and any $B \in \set{B}$,
\begin{equation*}
\label{kjhnkjiuhsgqdf}
\begin{aligned}
    \Prob(E_{i, j}) &\leq \Prob\p{(X_i\in B, X_j \in B) \cup (X_i\in \bar{B}, X_j \in \bar{B})} \\
    &= \Prob(X_i\in B, X_j \in B) + \Prob(X_i\in \bar{B}, X_j \in \bar{B}) \\
    &\leq \Prob(X_j \in B) + \Prob(X_i\in \bar{B}) \\ 
    &= \Prob_j(B) + 1 - \Prob_i(B) \\ 
    &= 1 - \p{\Prob_i(B) - \Prob_j(B)} \;,
\end{aligned}
\end{equation*}
which gives the result after optimizing over $B$.

Note that in the previous expressions, we looked at events on which random variables are equal without directly referring to the exact sets of equalities between those random variables. This is because without further caution, such sets may not be measurable. However, mild topological hypotheses on $(\set{X}, \set{B})$ are enough to guarantee that for any random variables $X$ and $Y$ defined on a common probability space and taking values in $(\set{X}, \set{B})$ then $(X=Y)$ is measurable. This is for instance the case if $(\set{X}, \set{B})$ is a \emph{Polish} space equipped with its Borel $\sigma$-algebra, a ubiquitous class of measurable spaces in applied mathematics and in theoretical computer science. Indeed, if $\set{X}$ is a Polish space, by noting $\delta$ its distance (up to homeomorphism) and $D$ its countable dense part, one has that 
\begin{equation*}
    \begin{aligned}
        \bigg( X = Y \bigg) &= \cap_{n \in \N_*} \Bigg(\cup_{d \in D} \p{ \delta(X, d) < \frac{1}{n}} \cap \p{\delta(Y, d) < \frac{1}{n}}\Bigg) \;.
    \end{aligned}
\end{equation*}

\subsection{Related work}

A folklore coupling construction \cite{lindvall2002lectures} states that when $I$ is of cardinality two, i.e. when one is interested in finding a coupling between two marginal distributions only, and when $(\set{X}, \set{B})$ is a Polish space, then the upper-bound \eqref{eq:upperbound} is tight, which means that there exists a coupling $(X_1, X_2)$ between $\Prob_1$ and $\Prob_2$ such that 
\begin{equation*}
    \Prob(X_1 = X_2) = 1 - \tv{\Prob_1}{\Prob_2} \;.
\end{equation*}

In their article \cite{angel2019pairwise}, Omer Angel and Yinon Spinka investigate what happens when $I$ is of cardinality bigger than two, and even possibly of infinite (non-countable) cardinality. First, they prove that as soon as $I$ is of cardinality at least three, the upper-bound \eqref{eq:upperbound} is unachievable, which means there exists at least an instance of the problem (i.e. a measurable space and a set of marginal distributions) on which no coupling may match the upper-bound. Then, they provide two different coupling constructions that achieve the following quantitative result : denoting by $(X_i)_{i \in I}$ the coupling between $(\Prob_i)_{i \in I}$, then 
\begin{equation}
\label{eq:nearoptimalbound}
    \Prob(X_i = X_j) \geq  1 - \frac{2 \tv{\Prob_i}{\Prob_j}}{ 1 + \tv{\Prob_i}{\Prob_j}} \geq 1 - 2 \tv{\Prob_i}{\Prob_j} , \quad \forall i, j \in I\;.
\end{equation}
In this expression, $(X_i = X_j)$ should be understood in a weak sense (i.e. as a \emph{measurable} set on which $X_i = X_j$, which may be smaller than the set $(X_i = X_j)$). 
 \eqref{eq:upperbound} together with \eqref{eq:nearoptimalbound} show that the constructions of the authors are optimal up to a factor two. Both of their constructions work under different setups, use different construction tools and are qualitatively different. They can be summarized as follows :
\begin{enumerate}
        \item (i) Their first construction is the most general, and is the one that is simplified here. It builds on inhomogeneous Poisson point processes and assumes the existence a $\sigma$-finite measure $\mu$ on $(\set{X}, \set{B})$ such that for any $i \in I$, $\Prob_i$ is absolutely continuous w.r.t. $\mu$. In particular, no assumption is made on the cardinality of $I$.
        \item (ii) Their second construction only builds on elementary notions of probability theory but requires $(\set{X}, \set{B})$ being discrete. Note that since on discrete spaces, any probability measure is absolutely continuous w.r.t. the counting measure, the domination assumption in (i) is always satisfied in this setup. Again, no assumption is made on the cardinality of $I$.
    \end{enumerate}

\subsection{About this note}

This note aims to present a simplification of Angel and Spinka's first coupling construction ((i)), while still relying on the same core interesting idea, that only builds on elementary notions of probability theory, and that is still applicable to continuous distributions (contrary to (ii) ).

This simplification will work under the following hypothesis.
Let us assume that there exists a distribution of probability $\Q$ on $(\set{X}, \set{B})$ such that there exists a $M > 1$ such that
\begin{equation}
\label{hyp:domination}
    \forall i \in I, \quad \forall B \in \set{B}, \quad \Prob_i(B) \leq M \Q(B) \;.
\end{equation}
In particular, this hypothesis is stronger than the domination one in (i). Even if \eqref{hyp:domination} does not have restrictive implications on the cardinality of $I$, an important observation is that \eqref{hyp:domination} is always satisfied when $I$ is finite. Indeed, one may take $\Q$ as the mixture distribution
\begin{equation*}
    \Q = \frac{1}{\card{I}} \sum_{i \in I} \Prob_i \;,
\end{equation*}
and $M = \card{I}$.

Under this extra hypothesis, it is possible to present Angel and Spinka's first coupling construction under the light of traditional rejection sampling. In particular, the cornerstone of the construction, which is to add dependence between two marginals proportionally to their common area below their densities, is identical to Angel and Spinka's Poisson construction.

Since it presents Angel and Spinka's coupling construction with only elementary theory, we hope that this note will help communication and diffusion of such results that are interesting for both the theoretical computer science and the machine learning communities \cite{elkin2023can,bun2023stability,karbasi2024replicability,block2024provable,kalavasis2023statistical}.

\subsection{Coupling construction}
\label{sec:rejection}

Let $Y_1, U_1, Y_2, U_2, \dots, Y_n, U_n, \dots$ be a sequence of independent random variables\footnote{Formally, it is a stochastic process indexed by $\N$. The independence means that its finite-dimensional distributions are product distributions. Its existence, without topological assumptions of $(\set{X}, \set{B})$, is guaranteed by the Ionescu-Tulcea extension theorem.} such that, for any $i$, $Z_i$ has distribution $\Q$ on $(\set{X}, \set{B})$, and $U_i$ is uniform on $[0, 1]$. Let $(\Omega, \set{T}, \Prob)$ be the probability space supporting this sequence.

From the domination assumption on the distributions (\eqref{hyp:domination}), it is immediate that for any $i$, $\Prob_i$ is absolutely continuous w.r.t. $\Q$, let $p_i$ denote its Radon-Nikodym density. Let $\perp \notin \set{X}$ be a "fresh" element. For any $i \in I$ and $n \in \N$, let
\begin{align*}
Z_n^{(i)} \colon \Omega &\longrightarrow \set{X} \cup \{ \perp \}\\
\omega&\longmapsto Y_n(\omega) &&\text{if}\quad U_n(\omega) \in [0, p_i(Y_n(\omega))/M ]\\
\omega&\longmapsto \perp &&\text{otherwise}\;.
\end{align*}
For any $i \in I$, let $X_i \eqdef Z_{n_0(i)}^{(i)}$ where $n_0(i)$ is the smallest index $n$ such that $Z_{n}^{(i)} \neq \perp$. If such an index does not exist, simply assign to $X_i$ a fixed value $d$ in $\set{X}$.

Then, $(X_i)_{i \in I}$ is a coupling between $(\Prob_i)_{i \in I}$ that atisfies \eqref{eq:nearoptimalbound}. Furthermore, if it is possible to simulate independent sampling from $\Q$ and if it is possible to evaluate $p_i(\cdot)$ $\Q$ almost anywhere for any $i \in I$, then this construction gives a straightforward sampling algorithm for $(X_i)_{i \in I}$. 

\subsection{Proof of correctness}

\subsubsection{Measurability}

As a safety check, we verify the measurability of the different quantities that were introduced.

For any $i \in I$ and $n \in \N$, $Z_n^{(i)}$ is measurable on $\set{X} \cup \{ \perp \}$ for the $\sigma$-algebra induced by $\set{B} \cup \{\{ \perp \} \}$ (it is thus a random variable). 
Indeed, 
\begin{equation*}
    (Z_n^{(i)} = \perp) = \cup_{q \in \Q \cap [0, 1]} \p{Y_n \in \p{\frac{p_i}{M}}^{-1}([0, q])} \cap \p{U_n \in (q, +\infty)} \;,
\end{equation*}
and for any $A \in \set{B}$,
\begin{equation*}
    (Z_n^{(i)} \in A) = \p{Y_n \in A} \cap (Z_n^{(i)} \neq \perp) \;.
\end{equation*}
Here, $\Q$ was locally used to refer to the set of rational numbers instead of the dominating measure.

As a consequence, for any $i$, $X_i$ is measurable on $(\set{X}, \set{B})$ (and is thus also a random variable). Indeed, for any $i \in I$ and $A \in \set{B}$,
\begin{equation*}
    (X_i \in A) = \cup_{n \in N} (Z_0^{(i)} = \dots = Z_{n-1}^{(i)} = \perp) \cap (Z_n^{(i)} \in A)
\end{equation*}
if the default value $d$ is not in $A$, and 
\begin{equation*}
    (X_i \in A) =  \left( \cap_{n \in \N} (Z_{n}^{(i)} = \perp) \right) \cup \left(\cup_{n \in N} (Z_0^{(i)} = \dots = Z_{n-1}^{(i)} = \perp) \cap (Z_n^{(i)} \in A) \right)
\end{equation*}
otherwise.

\subsubsection{$(X_i)_{i \in I}$ is a coupling between $(\Prob_i)_{i \in I}$}

For any $i \in I$, the sequence $Z_1^{(i)}, Z_2^{(i)}, \dots$ is independent\footnote{Again, this must be understood as a stochastic process.}.
Finally, for any $i \in I$ and $n \in N$, and any $A \in \set{B}$, $\Ind_{A}(Z_n^{(i)})$ is a Bernoulli random variable of probability of success 
\begin{equation}
\label{eq:zcontrol}
    \begin{aligned}
        \Prob(Z_n^{(i)} \in A) &=
        \int_{\set{X} \times [0, 1]} \Ind_A(y) \Ind_{[0, p_i(y)/M ]}(u) du d\Q(y) \\
        &\stackrel{\text{Fubini}}{=}  \frac{1}{M}  \int_{\set{X}} \Ind_A(y)  p_i(y) d\Q(y) \\
        &= \frac{1}{M}  \int_{\set{X}} \Ind_A(y)  d\Prob_i(y) \\
        &= \frac{\Prob_i(A)}{M} \;.
    \end{aligned}
\end{equation}
In particular, for any $i \in I$ and $n \in N$,
\begin{equation}
\label{eq:zfail}
    \Prob(Z_n^{(i)} = \perp) = \p{1 - \frac{1}{M}}
\end{equation}

For a $A \in \set{B}$, and $i \in I$, 
\begin{equation*}
\begin{aligned}
    (X_i \in A) &\supset (Z_0^{(i)} \in A) \\ &\cup (Z_0^{(i)} = \perp) \cap (Z_1^{(i)} \in A)\\ &\cup (Z_0^{(i)} = Z_1^{(i)} = \perp) \cap (Z_2^{(i)} \in A) \\&\cup \dots \;.
\end{aligned}
\end{equation*}
Hence, for any $A \in \set{B}$, and any $i \in I$,
\begin{equation*}
    \begin{aligned}
        \Prob (X_i \in A) &\geq \sum_{n \in \N} \Prob \p{(Z_0^{(i)} = \dots = Z_{n-1}^{(i)} = \perp) \cap (Z_n^{(i)} \in A)} \\
        &= \sum_{n \in \N} \Prob (Z_{0}^{(i)} = \perp) \dots \Prob (Z_{n-1}^{(i)} = \perp) \Prob(Z_n^{(i)} \in A) \\
        &\stackrel{\text{\eqref{eq:zcontrol} \& \eqref{eq:zfail}}}{=} \sum_{n \in \N} \p{1 - \frac{1}{M}}^{n} \frac{\Prob_i(A)}{M} \\
        &= \Prob_i(A) \;.
    \end{aligned}
\end{equation*}
Since this is true for any $A \in \set{B}$ and $i \in I$, $\Prob_{X_i} = \Prob_i$ for any $i \in I$. Hence, $(X_i)_{i \in I}$ is a coupling between $(\Prob_i)_{i \in I}$.

\subsubsection{$(X_i)_{i \in I}$ satisfies \eqref{eq:nearoptimalbound}}

For any pair $i, j \in I$, and any $n \in \N$, using the classical equality $1 + \tv{\Prob_i}{\Prob_j} = \int_{\set{X}} (p_i \vee p_j)(y) d\Q(y)$,
\begin{equation}
\label{eq:zsequalitybad}
    \begin{aligned}
        \Prob(Z_n^{(i)} = Z_n^{(j)} = \perp) &=
        \int_{\set{X} \times [0, 1]} \Ind_{((p_i \vee p_j)(y)/M, 1]}(u) du d\Q(y) \\
        &= \int_{\set{X} \times [0, 1]} 1 - \Ind_{[0, (p_i \vee p_j)(y)/M]}(u) du d\Q(y) \\
        &\stackrel{\text{Fubini}}{=}   \int_{\set{X}} 1 - \frac{(p_i \vee p_j)(y)}{M} d\Q(y) \\
        &= 1 - \frac{1 + \tv{\Prob_i}{\Prob_j}}{M} \;,
    \end{aligned}
\end{equation}
and using the other classical equality $1 - \tv{\Prob_i}{\Prob_j} = \int_{\set{X}} (p_i \wedge p_j)(y) d\Q(y)$,
\begin{equation}
\label{eq:zsequalitygood}
    \begin{aligned}
        \Prob(Z_n^{(i)} = Z_n^{(j)} \neq \perp) &=
        \int_{\set{X} \times [0, 1]} \Ind_{[0, (p_i \wedge p_j)(y)/M]}(u) du d\Q(y) \\
        &\stackrel{\text{Fubini}}{=}   \int_{\set{X}} \frac{(p_i \wedge p_j)(y)}{M} d\Q(y) \\
        &= \frac{1 - \tv{\Prob_i}{\Prob_j}}{M} \;.
    \end{aligned}
\end{equation}

Finally, for any $i, j \in I$, defining
\begin{equation*}
\begin{aligned}
    E_{i, j} &=
    (Z_0^{(i)} = Z_0^{(j)} \neq \perp) \\
    &\cup (Z_0^{(i)} = Z_0^{(j)} = \perp) \cap (Z_1^{(i)} = Z_1^{(j)} \neq \perp) \\
    &\cup (Z_0^{(i)} = Z_0^{(j)} = \perp) \cap (Z_1^{(i)} = Z_1^{(j)} = \perp) \cap (Z_2^{(i)} = Z_2^{(j)} \neq \perp) \\
    &\cup \dots \;,
\end{aligned}
\end{equation*}
one has
\begin{equation*}
    X_i(\omega) = X_j(\omega), \quad \forall \omega \in E_{i, j}
\end{equation*}
and
\begin{equation*}
\begin{aligned}
    \Prob(E_{i, j}) &=
    \sum_{n \in \N} \Prob \p{\cap_{k=0}^{n-1} (Z_k^{(i)} = Z_k^{(j)} = \perp) \cap (Z_n^{(i)} = Z_n^{(j)} \neq \perp)} \\
    &=\sum_{n \in \N} \p{\Pi_{k=0}^{n-1} \Prob (Z_k^{(i)} = Z_k^{(j)} = \perp)}  \Prob (Z_n^{(i)} = Z_n^{(j)} \neq \perp)  \\
    &\stackrel{\text{\eqref{eq:zsequalitybad} \& \eqref{eq:zsequalitygood}}}{=} \sum_{n \in \N} \p{1 - \frac{1 + \tv{\Prob_i}{\Prob_j}}{M}}^n \frac{1 - \tv{\Prob_i}{\Prob_j}}{M} \\
    &= \frac{1 - \tv{\Prob_i}{\Prob_j}}{1 + \tv{\Prob_i}{\Prob_j}}\;.
\end{aligned}
\end{equation*}

This concludes the proof.

\end{document}